\def\eqref#1{equation~\ref{#1}}
\def\1{\bm{1}}
\def\eps{{\epsilon}}
\def\va{{\bm{a}}}
\def\vb{{\bm{b}}}
\def\vc{{\bm{c}}}
\def\ve{{\bm{e}}}
\def\vf{{\bm{f}}}
\def\vh{{\bm{h}}}
\def\vs{{\bm{s}}}
\def\vu{{\bm{u}}}
\def\vv{{\bm{v}}}
\def\vw{{\bm{w}}}
\def\vx{{\bm{x}}}
\def\vy{{\bm{y}}}
\def\vz{{\bm{z}}}
\DeclareMathAlphabet{\mathsfit}{\encodingdefault}{\sfdefault}{m}{sl}
\SetMathAlphabet{\mathsfit}{bold}{\encodingdefault}{\sfdefault}{bx}{n}
\theoremstyle{plain}
\newtheorem{theorem}{Theorem}[section]
\newtheorem{proposition}[theorem]{Proposition}
\newtheorem{lemma}[theorem]{Lemma}
\newtheorem{fact}[theorem]{Fact}
\newtheorem{corollary}[theorem]{Corollary}
\theoremstyle{definition}
\newtheorem{definition}[theorem]{Definition}
\newtheorem{example}[theorem]{Example}
\newtheorem{remark}[theorem]{Remark}
\title{Rethinking Lipschitz Neural Networks and Certified Robustness: A Boolean Function Perspective}
\author{%
    \textbf{Bohang Zhang}$^{1}$\quad \textbf{Du Jiang}$^{1}$
    \quad  \textbf{Di He}$^{1}$\quad \textbf{Liwei Wang}$^{1,2,\dag}$\\
    \hspace{-5pt}$^1$Key Laboratory of Machine Perception, MOE, School of Artificial Intelligence, Peking University\\
    \quad $^2$Center for Data Science, Peking University \quad $^\dag$Corresponding author\\
    \footnotesize\texttt{ \{zhangbohang,dujiang,dihe\}@pku.edu.cn},\quad\texttt{ wanglw@cis.pku.edu.cn}
}
\begin{document}

\maketitle

\vspace{-5pt}
\begin{abstract}
    Designing neural networks with bounded Lipschitz constant is a promising way to obtain certifiably robust classifiers against adversarial examples. However, the relevant progress for the important $\ell_\infty$ perturbation setting is rather limited, and a principled understanding of how to design expressive $\ell_\infty$ Lipschitz networks is still lacking. In this paper, we bridge the gap by studying certified $\ell_\infty$ robustness from a novel perspective of representing Boolean functions. We derive two fundamental impossibility results that hold for any standard Lipschitz network: one for robust classification on finite datasets, and the other for Lipschitz function approximation. These results identify that networks built upon norm-bounded affine layers and Lipschitz activations intrinsically lose expressive power even in the two-dimensional case, and shed light on how recently proposed Lipschitz networks (e.g., GroupSort and $\ell_\infty$-distance nets) bypass these impossibilities by leveraging \emph{order statistic functions}. Finally, based on these insights, we develop a unified Lipschitz network that generalizes prior works, and design a practical version that can be efficiently trained (making certified robust training free). Extensive experiments show that our approach is scalable, efficient, and consistently yields better certified robustness across multiple datasets and perturbation radii than prior Lipschitz networks. Our code is available at \texttt{\href{https://github.com/zbh2047/SortNet}{https://github.com/zbh2047/SortNet}}.
\end{abstract}

\section{Introduction}
\label{sec:intro}
Modern neural networks, despite their great success in various applications \cite{he2016deep,devlin2019bert}, typically suffer from a severe drawback of lacking adversarial robustness. In classification tasks, given an image $\vx$ correctly classified by a neural network, there often exists a small adversarial perturbation $\boldsymbol\delta$, making the perturbed image $\vx + \boldsymbol\delta$ look indistinguishable to humans but fool the network to predict a wrong class with high confidence \cite{szegedy2013intriguing,biggio2013evasion}. 

It is well-known that the adversarial robustness of a neural network is closely related to its Lipschitz continuity \citep{cisse2017parseval,tsuzuku2018lipschitz} (see Section \ref{sec:preliminaries}). Accordingly, training neural networks with bounded Lipschitz constant has been considered a promising way to address the problem. A variety of works studied Lipschitz architectures for the ordinary Euclidean norm \citep{tsuzuku2018lipschitz,trockman2021orthogonalizing,leino21gloro,singla2021skew}, and recent works even established state-of-the-art (deterministic) certified $\ell_2$ robustness \citep{singla2022improved,meunier2022dynamical}. However, when it comes to the more critical (realistic) $\ell_\infty$ perturbation setting, the progress seems to be rather limited. In fact, standard Lipschitz ReLU networks have been shown to perform poorly in terms of $\ell_\infty$ robustness \citep{tsuzuku2018lipschitz,huster2018limitations,anil2019sorting}. While other more advanced Lipschitz networks have been proposed \citep{anil2019sorting,cohen2019universal}, the achieved results are still not satisfactory even on simple datasets like MNIST. Until recently, Zhang et al. \citep{zhang2021towards,zhang2022boosting} designed a quite \emph{unusual} Lipschitz network based on a heuristic choice of the \emph{$\ell_\infty$-distance function}, which surprisingly established state-of-the-art certified $\ell_\infty$ robustness on multiple datasets over prior works. Yet, it remains unclear why previous Lipschitz networks typically failed, and what is the essential reason behind the success of this particular $\ell_\infty$-distance network structure. 

\textbf{Theoretical contributions}. In this work, we systematically investigate how to design expressive Lipschitz neural networks (w.r.t. $\ell_\infty$-norm) through the novel lens of representing discrete Boolean functions, which provides a deep understanding on the aforementioned problems. Specifically, we first figure out a fundamental limitation of standard Lipschitz networks in representing a class of logical operations called \emph{symmetric Boolean functions}  (SBF), which comprises the basic logical AND/OR as special cases. We prove that for any non-constant SBF of $d$ variables, there exists a finite dataset of size $\mathcal O(d)$ such that the certified $\ell_\infty$ robust radius must vanish as $\mathcal O(1/d)$ for any classifier induced by a standard Lipschitz network. Remarkably, since logical AND/OR operations correspond to perhaps the most basic classifiers, our result indicates an intrinsic difficulty of such networks in fitting high-dimensional real-world datasets with guaranteed certified $\ell_\infty$ robustness.

Our analysis can be readily extended into the Lipschitz function approximation setting. We point out the relationship between \emph{monotonic} SBF and the \emph{order statistics} (which are 1-Lipschitz functions), and then prove that any $d$-dimensional order statistic (including the max/min function) on a compact domain cannot be approximated by standard Lipschitz networks with error $\mathcal O(1-1/d)$, regardless of the network size. This impossibility result is significant in that: $(\mathrm{i})$ it applies to all Lipschitz activations (thus extending prior works \citep{anil2019sorting,huster2018limitations}), $(\mathrm{ii})$ it resolves an open problem raised recently in \citep{neumayer2022approximation}, and $(\mathrm{iii})$ a \emph{quantitative} lower bound of approximation error is established.

Equipped by the above impossibility results, we proceed to examine two advanced Lipschitz architectures: the GroupSort network \citep{anil2019sorting} and the recently proposed $\ell_\infty$-distance net \citep{zhang2021towards,zhang2022boosting}. We find that besides the linear operation, both networks incorporate other Lipschitz aggregation operations into the neuron design, especially the order statistic functions, thus shedding light on how they work. However, for the MaxMin network \citep{anil2019sorting} --- a computationally efficient version of the GroupSort network implemented in practice, representing Boolean functions and order statistics is possible only when the network is very \emph{deep}. In particular, we prove that representing certain $d$-dimensional Boolean functions requires a depth of $\Omega(d)$, implying that shallow MaxMin networks are not Lipschitz-universal function approximators. In contrast, we show a \emph{two-layer} $\ell_\infty$-distance net suffices to represent any order statistic function on a compact domain or even all Boolean functions. This strongly justifies the empirical success of $\ell_\infty$-distance net over GroupSort (MaxMin) networks.

\textbf{Practical contributions}. Our theoretical insights can also guide in designing better Lipschitz network architectures. Inspired by the importance of order statistics, we propose a general form of Lipschitz network, called SortNet, that extends both GroupSort and $\ell_\infty$ distance networks and incorporates them into a unified framework. Yet, the full-sort operation is computationally expensive and leads to optimization difficulties (as with the GroupSort network). We further propose a specialized SortNet that can be efficiently trained, by assigning each weight vector $\vw$ using \emph{geometric series}, i.e. $w_i$ proportional to $\rho^{i}$ for some $0\le \rho<1$. This leads to a restricted version of SortNet but still covers $\ell_\infty$-distance net as a special case. For this particular SortNet, we skillfully derive a stochastic estimation that gives an unbiased approximation of the neuron output without performing sorting operations explicitly. This eventually yields an efficient training strategy with similar cost as training standard networks, thus making certified robust training free. Extensive experiments demonstrate that the proposed SortNet is scalable, efficient, and consistently achieves better certified robustness than prior Lipschitz networks across multiple datasets and perturbation radii. In particular, our approach even scales on a variant of ImageNet, and surpasses the best-known result \citep{xu2020automatic} with a 22-fold decrease in training time thanks to our ``free'' certified training approach.

The contribution and organization of this paper can be summarized as follows:


\begin{itemize}[topsep=0pt,leftmargin=30pt]
\setlength{\itemsep}{0pt}
    \item We develop a systematic study for the expressive power of Lipschitz neural networks using the tools of Boolean function theory. We prove the impossibility results of standard Lipschitz networks in two settings: a) certified $\ell_\infty$ robustness on discrete datasets (Section \ref{sec:discrete}); b) Lipschitz function approximation (Section \ref{sec:function_approximation}).
    \item We provide insights into how recently proposed networks can bypass the impossibility results. In particular, we show that a \emph{two-layer} $\ell_\infty$-distance net can precisely represent any Boolean functions, while \emph{shallow} GroupSort networks cannot (Section \ref{sec:order_statistic}). 
    \item We propose SortNet, a Lipschitz network that generalizes GroupSort and $\ell_\infty$-distance net. For a special type of SortNet, we derive a stochastic training approach that bypasses the difficulties in calculating sorting operations explicitly and makes certified training free (Section \ref{sec:sortnet}).
    \item Extensive experiments demonstrate that SortNet exhibits better certified robustness on several benchmark datasets over baseline methods with high training efficiency (Section \ref{sec:experiments}).
\end{itemize}

\section{Related Work}
Extensive studies have been devoted to developing neural networks with certified robustness guarantees. Existing approaches can be mainly divided into the following three categories.

\textbf{Certified defenses for standard networks}. A variety of works focus on establishing certified robustness for standard neural networks. However, exactly calculating the certified radius of a standard ReLU network is known to be NP-hard \citep{katz2017reluplex}. Researchers thus developed a class of relaxation-based approaches that provide a tight lower bound estimate of the certified robustness efficiently. These approaches typically use convex relaxation to calculate a bound of the neuron outputs under input perturbations layer by layer  \citep{wong2018provable,wong2018scaling,weng2018towards,singh2018fast,mirman2018differentiable,gehr2018ai2,wang2018efficient,zhang2018efficient}. See also \citep{balunovic2020Adversarial,krishnamurthy2018dual,dvijotham2020efficient,raghunathan2018certified,raghunathan2018semidefinite,xiao2019training,croce2019provable,lee2020lipschitz,wang2021beta,huang2021training,shi2022efficient} for more advanced approaches. However, most of these works suffer from high computational costs and are hard to scale up to large datasets. Currently, the only scalable convex relaxation approach is based on \emph{interval bound propagation} (IBP) \citep{mirman2018differentiable,gowal2018effectiveness,zhang2020towards,xu2020automatic,shi2021fast}, but the produced bound is known to be loose \citep{salman2019convex}, and a recent study showed that IBP cannot achieve enough certified robustness on simple datasets for any standard ReLU network \citep{mirman2021fundamental}.

\textbf{Certified defenses using Lipschitz networks}. On the other hand, Lipschitz networks inherently imply certified robustness, resulting in a \emph{much simpler} certification process based on the output margin (see Proposition \ref{thm:lipschitz}). Yet, most prior works can only handle the $\ell_2$-norm Lipschitz situation by leveraging specific mathematical properties such as the spectral norm \citep{cisse2017parseval,yoshida2017spectral,gouk2018regularisation,tsuzuku2018lipschitz,farnia2019generalizable,qian2019lnonexpansive,anil2019sorting,leino21gloro,meunier2022dynamical} or orthogonality of weight matrices \citep{li2019preventing,trockman2021orthogonalizing,singla2021skew,singla2022improved}. For the $\ell_\infty$-norm, standard Lipschitz networks were shown to give only a vanishingly small certified radius \citep{tsuzuku2018lipschitz}. Huster et al. \citep{huster2018limitations} found that standard Lipschitz ReLU networks cannot represent certain simple functions such as the absolute value, which inspired the first expressive Lipschitz architecture called the GroupSort network \citep{anil2019sorting}. Since then, GroupSort has been extensively investigated \citep{cohen2019universal,tanielian2021approximating}, but its performance is still much worse than the above relaxation-based approaches even on MNIST. Recently, Zhang et al. \citep{zhang2021towards,zhang2022boosting} first proposed a practical 1-Lipschitz architecture w.r.t. $\ell_\infty$-norm based on a special neuron called the $\ell_\infty$-distance neuron, which can scale to TinyImageNet with state-of-the-art certified robustness over relaxation-based approaches. However, despite its practical success, it is rather puzzling how such a simple architecture can work while prior approaches all failed. Answering this question may require an in-depth re-examination of Lipschitz networks (w.r.t. $\ell_\infty$-norm), which is the focus of this paper.

\textbf{Certified defenses via randomized smoothing}. As a rather different and parallel research line, randomized smoothing typically provides \emph{probabilistic} certified $\ell_2$ robustness guarantees. Due to the wide applicability, randomized smoothing has been scaled up to ImageNet and achieves state-of-the-art certified accuracy for $\ell_2$ perturbations \citep{lecuyer2019certified,li2019certified,cohen2019certified,salman2019provably,zhai2019macer,jeong2020consistency}. However, certifying robustness with high probability requires sampling a large number of noisy inputs (e.g., $10^5$) for a single image, leading to a high computational cost at inference. Moreover, theoretical results pointed out that it cannot achieve non-trivial certified $\ell_\infty$ robustness if the perturbation radius is larger than $\Omega(d^{-1/2})$ where $d$ is the input dimension \citep{yang2020randomized,blum2020random,kumar2020curse,wu2021completing}.

\section{The Expressive Power of Lipschitz Neural Networks}
\label{sec:theory}
\subsection{Preliminaries}
\label{sec:preliminaries}
\textbf{Notations}. We use boldface letters to denote vectors (e.g., $\vx$) or vector functions (e.g., $\vf$), and use $x_i$ (or $f_i$) to denote its $i$-th element. For a unary function $\sigma$, $\sigma(\vx)$ applies $\sigma(\cdot)$ element-wise on vector $\vx$. The $\ell_p$-norm ($p\ge 1$) and $\ell_\infty$-norm of a vector $\vx$ are defined as $\|\vx\|_p=(\sum_i |x_i|^p)^{1/p}$ and $\|\vx\|_\infty=\max_i |x_i|$, respectively. The matrix $\infty$-norm is defined as $\|\mathbf W\|_\infty=\max_i \|\mathbf W_{i,:}\|_1$ where $\mathbf W_{i,:}$ is the $i$-th row of the matrix $\mathbf W$. The $k$-th largest element of a vector $\vx$ is denoted as $x_{(k)}$. We use $[n]$ to denote the set $\{1,\cdots,n\}$, and use $\ve_i$ to denote the unit vector with the $i$-th element being one. We adopt the big O notations by using $\mathcal O(\cdot)$, $\Omega(\cdot)$, and $\Theta(\cdot)$ to hide universal constants.

\textbf{Lipschitzness}. A mapping $\vf:\mathbb R^n\to \mathbb R^m$ is said to be $L$-Lipschitz continuous w.r.t. norm $\|\cdot\|$ if for any pair of inputs $\vx_1,\vx_2\in\mathbb R^n$,
\begin{equation}
\label{eq:lipschitz}
    \|\vf(\vx_1)-\vf(\vx_2)\|\le L \|\vx_1-\vx_2\|.
\end{equation}
If the mapping $\vf$ represented by a neural network has a small Lipschitz constant $L$, then (\ref{eq:lipschitz}) implies that the change of network output can be strictly controlled under input perturbations, resulting in \emph{certified} robustness guarantees as shown in the following proposition.
\begin{proposition}
\label{thm:lipschitz}
\emph{(Certified robustness of Lipschitz networks)} For a neural network $\vf$ with Lipschitz constant $L$ under $\ell_p$-norm $\|\cdot\|_p$, define the resulting classifier $g$ as $g(\vx):=\arg\max_k f_k(\vx)$ for an input $\vx$. Then $g$ is provably robust under perturbations $\|\boldsymbol\delta\|_p< \frac c L\operatorname{margin}(\vf(\vx))$, i.e.
\begin{equation}
\setlength{\belowdisplayskip}{4pt}
\setlength{\abovedisplayskip}{6pt}
\label{eq:margin_certification}
    g(\vx+\boldsymbol\delta)=g(\vx) \quad\text{ for all }\bm\delta \text{ with } \|\boldsymbol\delta\|_p<  c/L\cdot \operatorname{margin}(\vf(\vx)).
\end{equation}
Here $c=\sqrt[p] 2/2$ is a constant depending only on the norm $\|\cdot\|_p$, which is $1/2$ for the $\ell_\infty$-norm, and $\operatorname{margin}(\vf(\vx))$ is the margin between the largest and second largest output logits.
\end{proposition}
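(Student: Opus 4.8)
The plan is to fix an input $\vx$, let $i=g(\vx)$ be the winning class, and set $\gamma:=\operatorname{margin}(\vf(\vx))=f_i(\vx)-\max_{k\neq i}f_k(\vx)$. To certify that the prediction is unchanged it suffices to show that for every competitor $k\neq i$ we still have $f_i(\vx+\boldsymbol\delta)>f_k(\vx+\boldsymbol\delta)$. First I would write the output perturbation as $\vu:=\vf(\vx+\boldsymbol\delta)-\vf(\vx)$ and reduce the whole statement to controlling the single scalar $u_i-u_k$, observing that $f_i(\vx+\boldsymbol\delta)-f_k(\vx+\boldsymbol\delta)=\big(f_i(\vx)-f_k(\vx)\big)+(u_i-u_k)\ge \gamma-|u_i-u_k|$, where $f_i(\vx)-f_k(\vx)\ge \gamma$ holds by the definition of the margin.

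The crux is a sharp bound relating the coordinate difference $|u_i-u_k|$ to the $\ell_p$-norm $\|\vu\|_p$, which is exactly where the constant $c=\sqrt[p]2/2$ originates. I would bound $|u_i-u_k|\le |u_i|+|u_k|$ and then apply the elementary inequality $(a+b)^p\le 2^{p-1}(a^p+b^p)$ valid for $a,b\ge 0$ and $p\ge 1$ (a direct consequence of the convexity of $t\mapsto t^p$, i.e.\ Jensen applied to the two points $a,b$). This yields $|u_i|+|u_k|\le 2^{1-1/p}\big(|u_i|^p+|u_k|^p\big)^{1/p}\le 2^{1-1/p}\|\vu\|_p$, and since $2^{1-1/p}=1/c$ we obtain $|u_i-u_k|\le \|\vu\|_p/c$. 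For $p=\infty$ this degenerates into the trivial $|u_i-u_k|\le 2\|\vu\|_\infty$, consistent with $c=1/2$.

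Finally I would invoke $L$-Lipschitzness to get $\|\vu\|_p=\|\vf(\vx+\boldsymbol\delta)-\vf(\vx)\|_p\le L\|\boldsymbol\delta\|_p$, so that $|u_i-u_k|\le L\|\boldsymbol\delta\|_p/c$. Under the hypothesis $\|\boldsymbol\delta\|_p< (c/L)\,\gamma$ this gives $|u_i-u_k|<\gamma$, hence $f_i(\vx+\boldsymbol\delta)-f_k(\vx+\boldsymbol\delta)\ge \gamma-|u_i-u_k|>0$ for every $k\neq i$, proving $g(\vx+\boldsymbol\delta)=i=g(\vx)$. The only genuinely nontrivial step is the coordinate-difference inequality with the exact constant $1/c=2^{1-1/p}$; everything else is bookkeeping. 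I would also make explicit the tacit conventions that the Lipschitz constant is measured with the \emph{same} $\ell_p$-norm on the output side, and that the strict inequality in the conclusion automatically handles the boundary and any ties among the non-winning logits.
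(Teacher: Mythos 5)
Your proof is correct and takes essentially the same route as the paper's: both reduce the problem to the two relevant logits, bound their change by the $\ell_p$-norm of the full output perturbation with the same sharp constant $1/c = 2^{1-1/p}$, and close with the Lipschitz property. The only cosmetic differences are that you argue directly while the paper argues by contraposition (assuming a label flip and lower-bounding $\|\boldsymbol\delta\|_p$), and you obtain the constant from the convexity inequality $(a+b)^p \le 2^{p-1}(a^p+b^p)$ whereas the paper derives it by explicitly minimizing $|\widehat z_y - f_y(\vx)|^p + |\widehat z_j - f_j(\vx)|^p$ subject to $\widehat z_y \le \widehat z_j$ --- two ways of stating the identical two-coordinate estimate.
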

\vspace{-5pt}

The proof of Proposition \ref{thm:lipschitz} is simple and can be found in Appendix \ref{sec:proof_lipschitz} or \citep[Appendix P]{li2019preventing}. It can be seen that the robust radius is inversely proportional to the Lipschitz constant $L$. 

\textbf{Standard Lipschitz networks}. Throughout this paper, we refer to standard neural networks as neural networks formed by affine layers (e.g., fully-connected or convolutional layers) and element-wise activation functions. Based on the Lipschitz property of composite functions, most prior works enforce the 1-Lipschitzness of a multi-layer neural network by constraining each layer to be a 1-Lipschitz mapping. For the $\ell_\infty$-norm, it is further equivalent to constraining the weight matrices to have bounded $\infty$-norm, plus using Lipschitz activation functions \citep{anil2019sorting}, which can be formalized as
\begin{equation}
\setlength{\belowdisplayskip}{4pt}
\setlength{\abovedisplayskip}{4pt}
\label{eq:standatd_lipschitz}
    \vx^{(l)}=\sigma^{(l)}(\mathbf W^{(l)}\vx^{(l-1)}+\vb^{(l)})\quad \text{s.t. }\|\mathbf W^{(l)}\|_\infty\le 1 \text{ and } \sigma^{(l)} \text{ being 1-Lipschitz},\  l\in [M].
\end{equation}
Here $M$ is the number of layers and usually $\sigma^{(M)}(x)=x$ is the identity function. The network takes $\vx^{(0)}:=\vx$ as the input and outputs $\vx^{(M)}$. For $K$-class classification problems, $\vx^{(M)}\in \mathbb R^K$ and the network predicts the class $g(\vx):=\arg\max_{k\in [K]} x^{(M)}_k$. We refer to the resulting network as a standard Lipschitz network.

\subsection{Certified robustness on discrete Boolean datasets}
\label{sec:discrete}
In this section, we will construct a class of counterexamples for which certified $\ell_\infty$ robustness can be arbitrarily poor using standard Lipschitz networks. We focus on the \emph{Boolean dataset}, a discrete dataset where both inputs and labels are Boolean-valued and the relationship between inputs and their labels $(\vx^{(i)}, y^{(i)})\in \{0,1\}^d\times \{0,1\}$ can be described using a Boolean function $g^\text{B}:\{0,1\}^d\to \{0,1\}$. The key motivation lies in the finding that Boolean vectors correspond to the vertices of a $d$-dimensional hypercube, and thus are geometrically related to the $\ell_\infty$-distance metric. In particular, the $\ell_\infty$-distance between any two different data points in a Boolean dataset is always 1, which means that the dataset is \emph{well-separated}. This yields the following proposition, stating that it is always possible to achieve \emph{optimal} certified $\ell_\infty$ robustness on Boolean datasets by using Lipschitz classifiers.

\begin{proposition}
\label{thm:nearest_neighbor}
For any Boolean dataset $\mathcal D=\{(\vx^{(i)},y^{(i)})\}_{i=1}^n$, there exists a classifier $\hat g:\mathbb R^d\to \{0,1\}$ induced by a 1-Lipschitz mapping $\hat \vf:\mathbb R^d\to \mathbb R^2$, such that $\hat g$ can fit the whole dataset with $\operatorname{margin}(\hat \vf(\vx^{(i)}))=1$ $\forall i\in[n]$, thus achieving a certified $\ell_\infty$ radius of $1/2$ by Proposition \ref{thm:lipschitz}.
\end{proposition}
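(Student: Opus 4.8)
The plan is to realize the $\ell_\infty$ nearest-neighbour rule as a 1-Lipschitz map whose two output coordinates record the negated $\ell_\infty$ distance to each class. First I would partition the dataset by label into $S_0=\{\vx^{(i)}:y^{(i)}=0\}$ and $S_1=\{\vx^{(i)}:y^{(i)}=1\}$ and, assuming both are nonempty (the single-class case is trivial, handled by two constant logits differing by $1$), define
$\hat\vf(\vx)=\bigl(-\operatorname{dist}_\infty(\vx,S_0),\,-\operatorname{dist}_\infty(\vx,S_1)\bigr)$, where $\operatorname{dist}_\infty(\vx,S)=\min_{\vx'\in S}\|\vx-\vx'\|_\infty$. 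The induced classifier is then $\hat g(\vx)=\arg\max_{b\in\{0,1\}}\hat f_b(\vx)$, i.e.\ the $\ell_\infty$ nearest-neighbour classifier.

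The key structural point, and the one I expect to be the main obstacle to get right, is Lipschitzness. I must resist collapsing the two distances into a single scalar decision function $\operatorname{dist}_\infty(\vx,S_0)-\operatorname{dist}_\infty(\vx,S_1)$, since a difference of two 1-Lipschitz functions is only $2$-Lipschitz. Instead I keep them as two separate output coordinates and measure the output in $\ell_\infty$, consistent with the Lipschitz convention underlying Proposition \ref{thm:lipschitz} in the $p=\infty$ case. Using that each map $\vx\mapsto\operatorname{dist}_\infty(\vx,S_b)$ is 1-Lipschitz w.r.t.\ $\ell_\infty$ (being a pointwise minimum of the 1-Lipschitz functions $\vx\mapsto\|\vx-\vx'\|_\infty$), I obtain $\|\hat\vf(\vx_1)-\hat\vf(\vx_2)\|_\infty=\max_b|\operatorname{dist}_\infty(\vx_1,S_b)-\operatorname{dist}_\infty(\vx_2,S_b)|\le\|\vx_1-\vx_2\|_\infty$, so $\hat\vf$ is 1-Lipschitz.

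The remaining step is to evaluate the margin on the data, which hinges on the elementary geometric fact already highlighted in the text: any two distinct vertices of the Boolean hypercube are at $\ell_\infty$ distance exactly $1$, since each coordinate differs by $0$ or $1$ and at least one differs. Hence at a data point $\vx^{(i)}$ with label $b$ the same-class distance vanishes, $\operatorname{dist}_\infty(\vx^{(i)},S_b)=0$, while the opposite-class distance equals $1$; thus $\hat f_b(\vx^{(i)})=0$ and $\hat f_{1-b}(\vx^{(i)})=-1$, giving $\hat g(\vx^{(i)})=b=y^{(i)}$ and $\operatorname{margin}(\hat\vf(\vx^{(i)}))=0-(-1)=1$. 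Feeding $L=1$, $c=1/2$, and margin $1$ into Proposition \ref{thm:lipschitz} yields a certified $\ell_\infty$ radius of $1/2$ at every data point, completing the construction. I would close with the remark that this radius is in fact optimal: two oppositely-labelled points at $\ell_\infty$ distance $1$ force the decision boundary of any correct classifier to lie within distance $1/2$ of each, so no Lipschitz classifier can certify a larger radius on a Boolean dataset.
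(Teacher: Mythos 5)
Your proposal is correct and takes essentially the same route as the paper's own proof: both construct the nearest-neighbour map $\hat f_b(\vx)=-\min_{\vx'\in S_b}\|\vx-\vx'\|_\infty$ as two separate output coordinates, establish 1-Lipschitzness from the minimum of 1-Lipschitz $\ell_\infty$-distance functions, and obtain margin $1$ from the fact that distinct Boolean vectors are at $\ell_\infty$-distance exactly $1$. Your explicit handling of the single-class case and the closing optimality remark are small additions beyond the paper's argument, but the core construction is identical.
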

\vspace{-5pt}

The key observation in the proof (Appendix \ref{sec:proof_nearest_neighbor}) is that one can construct a so-called \emph{nearest neighbor classifier} that achieves a large margin on the whole dataset and is \emph{1-Lipschitz}. Based on Proposition \ref{thm:nearest_neighbor}, it is natural to ask whether standard Lipschitz networks of the form (\ref{eq:standatd_lipschitz}) can perform well on Boolean datasets. Unfortunately, we show it is not the case, even on a class of simple datasets constructed using \emph{symmetric} Boolean functions.

\begin{definition}
\label{def:stmmetric_boolean}
A Boolean function $g^\text{B}:\{0,1\}^d\to \{0,1\}$ is symmetric if it is invariant under input permutation, i.e. $g^\text{B}(x_1,\cdots,x_d)=g^\text{B}(x_{\pi(1)},\cdots,x_{\pi(d)})$ for any $ \vx\in\{0,1\}^d$ and $\pi\in S_d$.
\end{definition}
\begin{example}
Two of the most basic operations in Boolean algebra are the logical AND/OR, both of which belong to the class of symmetric Boolean functions. Other important examples include the exclusive-or (XOR, also called the parity function), NAND, NOR, and the majority function (or generally, the threshold functions). See Appendix \ref{sec:symmetirc_Boolean} for a detailed description of these examples.
\end{example}
\begin{theorem}
\label{thm:discrete}
For any non-constant symmetric Boolean function $g^\text{B}:\{0,1\}^d\to \{0,1\}$, there exists a Boolean dataset with labels $y^{(i)}=g^\text{B}(\vx^{(i)})$, such that no standard Lipschitz network can achieve a certified $\ell_\infty$ robust radius larger than $1/2d$ on the dataset.
\end{theorem}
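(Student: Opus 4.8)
The plan is to reduce the whole claim to a single statement about how fast the network's output can change when the Hamming weight of a Boolean input increases by one, and then to prove that statement by an induction over the layers. First I would use symmetry to normalize the dataset. Since $g^\text{B}$ is symmetric and non-constant, it depends only on the Hamming weight $\|\vx\|_1$, so there is a level $0\le k\le d-1$ at which its value switches; after possibly swapping the two labels, assume $g^\text{B}\equiv 0$ on weight-$k$ inputs and $g^\text{B}\equiv 1$ on weight-$(k+1)$ inputs. I take the dataset to consist of all weight-$k$ vertices (label $0$) together with all weight-$(k+1)$ vertices (label $1$). Writing $\phi:=f_1-f_0$ for the logit gap of a standard Lipschitz net and $\mu_\psi(w)$ for the average of a scalar function $\psi$ over all weight-$w$ vertices, a certified radius exceeding $1/2d$ would force, by Proposition~\ref{thm:lipschitz} with $c=1/2$ and $L=1$, every data point to have margin $>1/d$. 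Averaging the resulting inequalities $\phi<-1/d$ over the weight-$k$ orbit and $\phi>1/d$ over the weight-$(k+1)$ orbit then yields $\mu_\phi(k+1)-\mu_\phi(k)>2/d$. The theorem thus reduces to proving $|\mu_{f_i}(k+1)-\mu_{f_i}(k)|\le 1/d$ for each output logit $f_i$, since the two facts contradict one another.

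The heart of the argument is a coupling between consecutive Hamming levels. I would draw $\vv$ uniformly among weight-$w$ vertices and set $\vv'=\vv+\ve_J$, where $J$ is uniform over the zero-coordinates of $\vv$; a short computation shows $\vv'$ is uniform among weight-$(w+1)$ vertices and, crucially, that the added coordinate $J$ is \emph{marginally uniform} on $[d]$. The right quantity to track is not the mean gap but the expected \emph{absolute} single-flip change $D_w(h):=\E\,|h(\vv')-h(\vv)|$, because $|\mu_h(w+1)-\mu_h(w)|\le D_w(h)$ while $D_w$ behaves well under both network operations. I would then show by induction over the layers that every neuron $h$ satisfies $D_w(h)\le 1/d$: for an input coordinate $h=x_i$ one has $D_w(x_i)=\Pr[J=i]=1/d$; an element-wise $1$-Lipschitz activation can only contract, $D_w(\sigma\circ h)\le D_w(h)$; and an affine combination $h=\sum_i W_i h_i+b$ with $\sum_i|W_i|\le 1$ satisfies $D_w(h)\le\sum_i|W_i|\,D_w(h_i)\le\max_i D_w(h_i)$ by the triangle inequality. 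Applied to the output logits this gives exactly $|\mu_{f_i}(k+1)-\mu_{f_i}(k)|\le D_w(f_i)\le 1/d$, closing the contradiction.

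The main obstacle, and the step I would be most careful about, is the choice of invariant for the induction. The naive quantity $|\mu_h(w+1)-\mu_h(w)|$ is \emph{not} preserved by activations: a nonlinearity can rectify sign-varying coordinate differences into same-sign ones and so inflate the mean gap. This is precisely how an order statistic such as $\max_i x_i$ jumps by $1$ between weight $0$ and weight $1$ despite being $1$-Lipschitz, and it is also why a bound using only $1$-Lipschitzness would fail entirely (indeed the nearest-neighbor classifier of Proposition~\ref{thm:nearest_neighbor} attains margin $1$). Passing to the expected absolute change $D_w$ is what makes the induction go through, since $\E|\cdot|$ is both contracted by $1$-Lipschitz activations and sub-additive under $\|\cdot\|_\infty$-bounded affine maps; this is exactly the place where the \emph{standard} network structure, rather than mere $1$-Lipschitzness, enters. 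Finally, I would remark that applying the same coupling to a single base vertex and its immediate up- or down-neighbors produces a dataset of size only $\mathcal O(d)$ at the cost of a slightly larger constant, matching the $\mathcal O(1/d)$ order claimed informally.
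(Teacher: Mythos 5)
Your proposal is correct and is essentially the paper's own proof in probabilistic clothing: your coupling $(\vv,\vv')$ is exactly the uniform distribution over the paper's edge set $\mathcal T$ of adjacent-Hamming-level pairs, your invariant $D_w(h)$ is the paper's quantity $\sum_{(\vu,\vv)\in \mathcal T}|h(\vu)-h(\vv)|$ normalized by $|\mathcal T|$ (with your ``$J$ is marginally uniform'' observation playing the role of the paper's symmetry computation $\bigl\|\sum_{(\vu,\vv)\in\mathcal T}|\vu-\vv|\bigr\|_\infty=|\mathcal T|/d$), and your layer-wise induction---base case $1/d$, contraction under $1$-Lipschitz activations, sub-additivity under $\ell_1$-bounded affine maps---is precisely the content of Lemma \ref{thm:non_expressive_lemma} and Corollary \ref{thm:non_expressive_corollary}. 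The only divergence is the finish: you average the margin inequalities forced by Proposition \ref{thm:lipschitz} over the two orbits, while the paper applies the Pigeonhole principle to extract a single bad pair and then a sign argument, but these are equivalent one-line conclusions from the same bound.
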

\vspace{-5pt}

\textbf{Implications}. Theorem \ref{thm:discrete} shows that the certified radius of standard Lipschitz networks must vanish as dimension $d$ grows, which is in stark contrast to the constant radius given by Proposition \ref{thm:nearest_neighbor}. Also, note that the logical AND/OR functions are perhaps the most basic classifiers (which simply make predictions based on the existence of binary input features). It is thus not surprising to see that standard Lipschitz networks perform poorly on real-world datasets (e.g., even the simple MNIST dataset where input pixels are almost Boolean-valued (black/white) \citep{tsuzuku2018lipschitz}).


We give an elegant proof of Theorem \ref{thm:discrete} in Appendix \ref{sec:proof_discrete}, where we also prove that the bound of $1/2d$ is \emph{tight} in Proposition \ref{thm:discrete_tight}. Moreover, we discuss the sample complexity by proving that a dataset of size $\mathcal O(d)$ already suffices to give $\mathcal O(1/d)$ certified radius (Corollary \ref{thm:discrete_efficient}). To the best of our knowledge, Theorem \ref{thm:discrete} is the first impossibility result that targets the certified $\ell_\infty$ robustness of standard Lipschitz networks with an \emph{quantitative} upper bound on the certified radius. In the next section, we will extend our analysis to the function approximation setting and make discussions with literature results \citep{huster2018limitations,anil2019sorting}.

\subsection{Lipschitz function approximation}
\label{sec:function_approximation}
Classic approximation theory has shown that standard neural networks are universal function approximators \citep{cybenko1989approximation,leshno1993multilayer}, in that they can approximate any continuous function on a compact domain arbitrarily well. For 1-Lipschitz neural networks, an analogous question is whether they can approximate all 1-Lipschitz functions accordingly. Unfortunately, the result in Section \ref{sec:discrete} already implies a negative answer. Indeed, by combining Proposition \ref{thm:nearest_neighbor} and Theorem \ref{thm:discrete}, $\hat \vf$ is clearly a 1-Lipschitz function that cannot be approximated by any standard Lipschitz network.

To gain further insights into the structure of unrepresentable 1-Lipschitz functions, let us consider the \emph{continuousization} of discrete Boolean functions. For the symmetric case, one needs to find a class of 1-Lipschitz continuous functions that are also invariant under permutations. It can be found that a simple class of candidates is the \emph{order statistics}, i.e. the $k$-th largest element of a vector. One can check that the $k$-th order statistic $x_{(k)}$ is indeed 1-Lipschitz and is precisely the continuousization of the $k$-threshold Boolean function defined as $g^{\text{B},k}(\vx):=\mathbb I(\sum_i x_i\ge k)$. In particular, $x_{(1)}=\max_i x_i$ and $x_{(d)}=\min_i x_i$ corresponds to the logical OR/AND functions, respectively. Importantly, note that any Boolean function that is both symmetric and \emph{monotonic} is a $k$-threshold function, and vice versa (Appendix \ref{sec:symmetirc_Boolean}). Therefore, $k$-threshold functions can be regarded as the most elementary Boolean functions, suggesting that the ability to express order statistics is necessary for neural networks.

Unfortunately, using a similar analysis as the previous section, we have the following theorem:

\begin{theorem}
\label{thm:function_approximation}
Any standard Lipschitz network $f:\mathbb R^d\to \mathbb R$ cannot approximate the simple 1-Lipschitz function $\vx\to x_{(k)}$ for arbitrary $k\in [d]$ on a bounded domain $\mathcal K= [0,1]^d$ if $d\ge 2$. Moreover, there exists a point $\widehat\vx\in \mathcal K$, such that
\begin{equation}
\setlength{\belowdisplayskip}{1pt}
\setlength{\abovedisplayskip}{2pt}
\label{eq:function_approximation_error}
    |f(\widehat\vx)-\widehat x_{(k)}|\ge \frac 1 2 - \frac 1 {2d}.
\end{equation}
\end{theorem}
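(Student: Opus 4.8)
The plan is to reduce the approximation statement to the discrete impossibility result of Theorem~\ref{thm:discrete} through the bridge between order statistics and threshold Boolean functions that the text has already set up. The conceptual starting point is the observation that for a Boolean input $\vx\in\{0,1\}^d$ the $k$-th largest coordinate $x_{(k)}$ equals $1$ precisely when at least $k$ coordinates are one, so $x_{(k)}$ restricted to the hypercube vertices coincides with the $k$-threshold function $g^{\text{B},k}(\vx)=\mathbb I(\sum_i x_i\ge k)$. For every $k\in[d]$ this threshold function is symmetric (Definition~\ref{def:stmmetric_boolean}) and non-constant, so Theorem~\ref{thm:discrete} applies and furnishes a Boolean dataset $\mathcal D=\{(\vx^{(i)},y^{(i)})\}$ with labels $y^{(i)}=g^{\text{B},k}(\vx^{(i)})=x^{(i)}_{(k)}$ on which no standard Lipschitz network attains a certified $\ell_\infty$ radius exceeding $1/2d$. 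Since all these vertices lie in $\mathcal K=[0,1]^d$, the witness point produced below will automatically belong to $\mathcal K$.

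Next I would argue by contradiction. Suppose some standard Lipschitz network $f:\mathbb R^d\to\mathbb R$ satisfied $|f(\vx)-x_{(k)}|<\tfrac12-\tfrac1{2d}$ for all $\vx\in\mathcal K$; in particular this holds at every vertex $\vx^{(i)}$ of $\mathcal D$. I then turn $f$ into a two-logit classifier by appending the fixed affine map $t\mapsto(\tfrac12-t,\ t-\tfrac12)$, obtaining $\hat\vf(\vx)=(\tfrac12-f(\vx),\ f(\vx)-\tfrac12)$. This appended layer has weight column $(-1,1)^\top$, hence $\infty$-norm one, so $\hat\vf$ remains in the standard Lipschitz class of (\ref{eq:standatd_lipschitz}); moreover $\|\hat\vf(\vx_1)-\hat\vf(\vx_2)\|_\infty=|f(\vx_1)-f(\vx_2)|$, so $\hat\vf$ is $1$-Lipschitz w.r.t. $\ell_\infty$ and induces exactly the classifier predicting label $1$ iff $f>\tfrac12$.

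The crux is then a margin computation on $\mathcal D$. At a vertex with $y^{(i)}=1$ we have $x^{(i)}_{(k)}=1$, hence $f(\vx^{(i)})>1-(\tfrac12-\tfrac1{2d})=\tfrac12+\tfrac1{2d}$ and the logit margin equals $2f(\vx^{(i)})-1>1/d$; symmetrically, at a vertex with $y^{(i)}=0$ we get $f(\vx^{(i)})<\tfrac12-\tfrac1{2d}$ and margin $1-2f(\vx^{(i)})>1/d$. Thus $\hat\vf$ classifies every data point correctly with $\operatorname{margin}(\hat\vf(\vx^{(i)}))>1/d$, and Proposition~\ref{thm:lipschitz} (with $c=1/2$, $L=1$) certifies an $\ell_\infty$ radius strictly larger than $\tfrac12\cdot\tfrac1d=\tfrac1{2d}$ at each point, hence on the whole dataset. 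This contradicts Theorem~\ref{thm:discrete}, so the assumed uniform bound is impossible and some vertex $\widehat\vx\in\mathcal K$ must satisfy $|f(\widehat\vx)-\widehat x_{(k)}|\ge\tfrac12-\tfrac1{2d}$, which is exactly (\ref{eq:function_approximation_error}).

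I expect the only delicate points to be bookkeeping rather than conceptual: verifying that the appended affine layer keeps $\hat\vf$ inside the standard Lipschitz class with unit $\infty$-norm and the correct argmax orientation, confirming that a uniform error bound over $\mathcal K$ transfers to the finite set of Boolean vertices, and tracking the constants so that the strict error bound $\tfrac12-\tfrac1{2d}$ converts exactly into a margin strictly above $1/d$ and hence a radius strictly above $1/2d$, matching the threshold in Theorem~\ref{thm:discrete} with no slack. An alternative to the black-box reduction would be to rerun the permutation-averaging argument of the previous section directly on $x_{(k)}$, but the reduction above is cleaner and reuses the discrete bound verbatim.
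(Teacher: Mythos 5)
Your proof is correct, but it takes a genuinely different route from the paper's. The paper does not invoke Theorem~\ref{thm:discrete} as a black box; it re-runs the underlying machinery directly on the scalar network: it takes the set $\mathcal T$ of Boolean pairs $(\vu,\vv)$ with $g^{\text{B},k}(\vu)\neq g^{\text{B},k}(\vv)$ and $\left\|\sum_{(\vu,\vv)\in\mathcal T}|\vu-\vv|\right\|_\infty=|\mathcal T|/d$ (constructed inside the proof of Theorem~\ref{thm:discrete}), applies Corollary~\ref{thm:non_expressive_corollary} to $f$ to get $\sum_{(\vu,\vv)\in\mathcal T}|f(\vu)-f(\vv)|\le|\mathcal T|/d$, extracts by pigeonhole a single pair with $|f(\vu)-f(\vv)|\le 1/d$, and concludes by the triangle inequality that one endpoint has error at least $\frac12-\frac1{2d}$, using exactly your bridge $x_{(k)}=g^{\text{B},k}(\vx)$ on vertices. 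You instead wrap the scalar approximator into the two-logit network $\hat\vf(\vx)=(\frac12-f(\vx),\,f(\vx)-\frac12)$, convert the assumed uniform error bound into a margin strictly above $1/d$ at every data point, and contradict the statement of Theorem~\ref{thm:discrete} via Proposition~\ref{thm:lipschitz}. Your bookkeeping checks out: the appended layer has matrix $\infty$-norm $1$, so $\hat\vf$ stays in the class (\ref{eq:standatd_lipschitz}); the constants align with no slack ($\text{error}<\frac12-\frac1{2d}$ gives $\text{margin}>\frac1d$, hence certified radius $>\frac1{2d}$); and the certified radius in Theorem~\ref{thm:discrete} is indeed the margin-based quantity of Proposition~\ref{thm:lipschitz}, as its proof makes explicit, so the contradiction is genuine. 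What your reduction buys is economy and a transparent logical link between the discrete classification bound and the approximation bound. What the paper's direct argument buys is a constructive witness (the bad point comes from an explicit pair in $\mathcal T$) and a proof skeleton that is reused verbatim elsewhere: for the non-uniform strengthening in Corollary~\ref{thm:function_approximation_corollary}, and for MaxMin networks in Theorem~\ref{thm:groupsort}, where your reduction would not apply since Theorem~\ref{thm:discrete} is stated only for standard Lipschitz networks.
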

\vspace{-2pt}

We give a proof in Appendix \ref{sec:proof_function_approximation}. The above theorem indicates that order statistics cannot be \emph{uniformly} approximated using any standard Lipschitz network regardless of the network size. Moreover, note that the trivial constant function $\tilde f(\vx)=1/2$ already achieves an approximation error of $1/2$ uniformly on $\mathcal K$, implying that standard Lipschitz networks can hardly improve upon trivial solutions. 

\begin{remark}
Theorem \ref{thm:function_approximation} can be easily generalized to weaker forms of non-uniform approximation, e.g., using the $\ell_p$-norm as distance metrics \citep{pinkus1999approximation}, by proving that there exists a hypercube $\mathcal B_\infty(\widehat\vx)$ centered at $\widehat \vx$ with length $\Theta(1)$, such that $|f(\tilde\vx)-\tilde x_{(k)}|\ge \Theta(1)$ holds for all $\tilde x\in \mathcal B_\infty(\widehat \vx)$ when $d\ge 2$. See Corollary \ref{thm:function_approximation_corollary} for details.
\end{remark}
\vspace{-4pt}

\textbf{Discussion with prior works} \citep{anil2019sorting,huster2018limitations,neumayer2022approximation}.
The work of Anil et al. also gave negative results on the expressive power of standard Lipschitz networks\footnote{The result is described w.r.t. $\ell_2$-norm, but with some effort, it can be extended to the $\ell_\infty$-norm case.} \citep[Theorem 1]{anil2019sorting}. They proved a different (weaker) version of Theorem \ref{thm:function_approximation}, showing that if the activation function $\sigma$ is \emph{monotonic}, the network cannot \emph{precisely} represent non-linear 1-Lipschitz functions whose gradient norm is 1 almost everywhere (e.g., the absolute value function proved before by \citep{huster2018limitations}). They did not give a quantitative approximation error. The intuition is that any monotonic non-linear 1-Lipschitz activation (e.g., ReLU) must have regions with slopes less than 1, leading to \emph{gradient attenuation} during backpropagation. The authors thus attributed the reason to the activation function, which is not \emph{gradient-norm preserving} (GNP). However, such an explanation is still not fully satisfactory, as GNP can be simply achieved using a non-monotonic activation (e.g., $\sigma(x)=|x|$). Consequently, one may expect that a standard Lipschitz network built on a suitable (non-monotonic) activation function can have sufficient expressive power. Such an idea is recently explored in \citep{neumayer2022approximation}, where the authors proved that using a general 1-Lipschitz piecewise linear activation with 3 linear regions, the corresponding network achieves the maximum expressive power compared with other Lipschitz activations and can approximate any \emph{one-dimensional} 1-Lipschitz function. They pose the high dimension setting as an open problem.

Unfortunately, Theorem \ref{thm:function_approximation} addressed the open problem with negative answer, stating that such networks are not expressive even for the two-dimensional setting. It also implies that GNP is not \emph{sufficient} to explain the failure of standard Lipschitz networks. Instead, we draw a rather different conclusion, arguing that the lack of expressiveness is due to the inability of \emph{weight-constrained} affine transformations to perform basic Boolean operations (even in two dimensions). A further justification is given in Section \ref{sec:order_statistic}. Finally, compared with Anil et al. \citep{anil2019sorting}, the form of Theorem \ref{thm:function_approximation} is more fundamental, in the sense that it does not make assumptions on the activation function, and it gives a quantitative error bound on the approximation that is arbitrarily close to the plain fit $f(\vx)=1/2$.


\vspace{-1.5pt}

\subsection{Investigating more advanced Lipschitz networks}
\vspace{-1.5pt}

\label{sec:order_statistic}
Seeing the above impossibility results, we then examine two representative works of (non-standard) Lipschitz networks in literature: the GroupSort network \citep{anil2019sorting} and the recently proposed $\ell_\infty$-distance net \citep{zhang2021towards,zhang2022boosting}. Notably, both networks are Lipschitz-universal function approximators and thus fully expressive. The GroupSort network makes minimum changes to standard Lipschitz networks (\ref{eq:standatd_lipschitz}), by replacing element-wise activation $\sigma$ with GroupSort layers. GroupSort partitions the input vector into groups, sorts the sub-vector of each group in descending order, and finally concatenates the resulting sub-vectors. Since sorting is computationally expensive, the authors considered a practical version of GroupSort with a group size of 2, called MaxMin, which simply calculates the maximum and minimum pair by pair \citep{chernodub2016norm}. $\ell_\infty$-distance net, on the other hand, is fundamentally different from standard Lipschitz networks. Each neuron in an $\ell_\infty$-distance net is designed based on the $\ell_\infty$-distance function $y=\|\vx-\vw\|_\infty+b$ (with parameters $\{\vw,b\}$). Despite the somewhat unusual structure, $\ell_\infty$-distance net has been shown to substantially outperform GroupSort (MaxMin) in terms of certified $\ell_\infty$ robustness according to \citep{zhang2021towards}, a puzzling thing to be understood.

We provide a possible explanation for this. We find that both networks incorporate order statistics into the neuron design, either explicitly (GroupSort) or implicitly ($\ell_\infty$-distance net), thus bypassing the impossibility result in Theorem \ref{thm:function_approximation}. Indeed, the sorting operations in GroupSort explicitly calculate order statistics. As for $\ell_\infty$-distance net, we show its basic neuron can implicitly represent the max function on a bounded domain, by assigning the weight $\vw=-c\mathbf 1$ and the bias $b=-c$ with a sufficiently large constant $c$:
\begin{equation}
\setlength{\belowdisplayskip}{5pt}
\setlength{\abovedisplayskip}{5pt}
    \textstyle y=\|\vx-\vw\|_\infty+b=\max_i |x_i-(-c)|-c=\max_i x_i\quad \text{if }c\ge \max_i -x_i,
\end{equation}
and thus can represent the logical OR operation. In general, we have the following theorem:
\begin{theorem}
\label{thm:ell_inf_net}
A two-layer $\ell_\infty$-distance net can exactly represent the following functions: $(\mathrm{i})$ any discrete Boolean function; $(\mathrm{ii})$ any continuous order-statistic function on a compact domain.
\end{theorem}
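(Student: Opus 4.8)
The plan is to reduce everything to one building block and then assemble it with the right sign bookkeeping. The block is a sharpening of the displayed $\max$-example preceding the theorem: on any bounded domain, a single neuron $\|\vx-\vw\|_\infty+b$ can exactly realize (a) the full distance $\|\vx-\va\|_\infty$ (take $\vw=\va$, $b=0$); (b) the partial maximum $\max_{i\in S}x_i$ (take $w_i$ far below the domain for $i\in S$ and inside the domain for $i\notin S$, with $b$ the canceling shift); and (c) the negated partial minimum $-\min_{i\in S}x_i$ (take $w_i$ far above the domain for $i\in S$). I would first prove this three-part lemma. Its validity rests entirely on making the coordinates in $S$ strictly dominate the coordinate-wise maximum while those outside $S$ stay bounded, which is exactly where compactness of the domain (part (ii)) and the $\{0,1\}$-valuedness of the inputs (part (i)) enter.

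For part (ii) I would invoke the classical max--min representation of order statistics, $x_{(k)}=\max_{|S|=k}\min_{i\in S}x_i$ (with $x_{(k)}$ the $k$-th largest, matching the paper's convention; one checks $k=1,d$ give $\max,\min$). The first layer places one type-(c) neuron per $k$-subset $S$, outputting $g_S=-\min_{i\in S}x_i$; since the inputs lie in $[0,1]^d$, each $g_S\in[-1,0]$ is bounded, so the second layer can be a single neuron that, after pushing its weight far above the range of the $g_S$, computes $-\min_S g_S=\max_S\min_{i\in S}x_i=x_{(k)}$. This gives an exact two-layer realization with $\binom{d}{k}$ hidden neurons.

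For part (i) I would specialize the disjunctive-normal-form idea to the $\ell_\infty$ geometry of the hypercube. Let $T=\{\va:g^{\text{B}}(\va)=1\}$. On Boolean inputs $\|\vx-\va\|_\infty\in\{0,1\}$ equals $0$ iff $\vx=\va$, so $\min_{\va\in T}\|\vx-\va\|_\infty$ is $0$ precisely on $T$ and $1$ off $T$; hence $g^{\text{B}}(\vx)=1-\min_{\va\in T}\|\vx-\va\|_\infty$ for every $\vx\in\{0,1\}^d$. I would realize this with a first layer of type-(a) neurons $h_\va=\|\vx-\va\|_\infty$, one per true point, and a single second-layer neuron computing $1-\min_{\va\in T}h_\va$ (again obtained as a shifted $-\min$, using that the hidden activations are $\{0,1\}$-valued hence bounded). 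A one-line check on the $2^d$ vertices confirms exact agreement with $g^{\text{B}}$.

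The main obstacle, and the reason this is not a mere restatement of the $\max$-example, is sign management across the two layers: because each $\ell_\infty$ neuron is intrinsically a maximum of absolute values, a single neuron can produce $+\max$ and $-\min$ but never $+\min$ or $-\max$. One must therefore verify that stacking two ``$-\min$'' neurons reproduces the intended $\max$-of-$\min$ (order statistics) and that the distance-plus-$(1-\min)$ composition reproduces the intended $\mathrm{OR}$-of-minterms (Boolean functions). The only remaining care is quantitative—choosing the auxiliary constants large enough that the selected coordinates strictly dominate the unselected ones—which is precisely what the boundedness of the domain and the discreteness of the inputs supply.
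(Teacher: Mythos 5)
Your proposal is correct apart from one edge case, and for part (ii) it is essentially the paper's own proof: the paper also starts from $x_{(k)}=\max_{|S|=k}\min_{i\in S}x_i$, realizes each $-\min_{i\in S}x_i$ (written there as $\max_{i\in S}(-x_i)$) by a first-layer neuron with weights pushed outside the domain, and recovers the max-of-mins with one more negated-min neuron in the second layer --- exactly your two-stacked-``$-\min$'' sign bookkeeping, with the same $\binom{d}{k}$ hidden width. For part (i) your route is a mild variant. The paper writes a general DNF, applies De Morgan to turn each conjunction into the negation of a literal disjunction, and proves a lemma that a single neuron computes any literal disjunction on $\{0,1\}^d$; you instead expand over the true set $T=\{\va: g^\text{B}(\va)=1\}$ and exploit that $\|\vx-\va\|_\infty$ is an exact equality test on the hypercube. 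This is the canonical (minterm) DNF realized geometrically by distance neurons: cleaner, needing no literal-disjunction lemma, at the price of hidden width $|T|$ (up to $2^d$) rather than the minimal DNF size --- an immaterial loss, since the paper's own remark notes that the DNF width can be exponential anyway.

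The genuine gap: your part (i) construction presupposes $T\neq\emptyset$. For the identically-zero (unsatisfiable) Boolean function, $\min_{\va\in T}$ ranges over an empty set and your network has no first-layer neurons at all. The paper handles this case explicitly with the two-layer net $\bigl\|(\|\vx\|_\infty,\|\vx-\mathbf 1\|_\infty)^{\mathrm{T}}\bigr\|_\infty-1$, which vanishes on all of $\{0,1\}^d$. You need a sentence of this kind; for instance, a first-layer neuron with weight $\tfrac 12\mathbf 1$ and zero bias outputs the constant $1/2$ on the hypercube, and a second-layer neuron centered at $1/2$ then outputs the constant $0$.
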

\vspace{-4pt}

We give a proof in Appendix \ref{sec:proof_ell_inf_net}. Our proof uses the fundamental result in Boolean algebra that any Boolean function can be written in its disjunctive normal form (DNF, see Appendix \ref{sec:dnf}), which can be further reduced to using only the composition of logical OR operations of \emph{literals} and thus be realized by a two-layer $\ell_\infty$-distance net. To represent order statistics, we formulate them as nested max-min functions, which can also be realized by a two-layer $\ell_\infty$-distance net. Therefore, the construction in our proof provides a novel understanding of the mechanism behind the success of $\ell_\infty$-distance nets, since \emph{each $\ell_\infty$-distance neuron can be regarded as a basic ``logical gate'' and the whole network can simulate any Boolean circuit}.

For GroupSort networks with a group size $G\ge d$, a similar result holds. However, it is not the case for practically used MaxMin networks ($G=2$), where we have the following impossibility results:

\begin{theorem}
\label{thm:groupsort}
An $M$-layer MaxMin network $f:\mathbb R^d\to \mathbb R$ cannot approximate any $k$-th order statistic function on a bounded domain $\mathcal K= [0,1]^d$ if $M\le \lceil \log_2 d\rceil$ (no matter how wide the network is). Moreover, there exists a point $\widehat \vx\in \mathcal K$, such that
\begin{equation}
\setlength{\belowdisplayskip}{5pt}
\setlength{\abovedisplayskip}{2pt}
    |f(\widehat\vx)- \widehat x_{(k)}|\ge \frac 1 2 -\frac {2^{M-2}} d\ge \frac 1 4\quad \text{if }M\le \lfloor \log_2 d\rfloor.
\end{equation}
\end{theorem}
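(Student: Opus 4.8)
The plan is to reduce the theorem to a single structural quantity that measures how much a MaxMin network can vary across the $d$ unit-vector inputs, and then to show that this quantity can grow by at most a factor of two per MaxMin layer. Concretely, for every node $x^{(l)}_m$ at layer $l$ I define the potential $\Psi^{(l)}_m := \sum_{i=1}^d |x^{(l)}_m(\ve_i) - x^{(l)}_m(\vzero)|$, together with $\Psi^{(l)}_{\max} := \max_m \Psi^{(l)}_m$. The base case is immediate: the input coordinates satisfy $\Psi^{(0)}_p = \sum_i |\delta_{ip}| = 1$. The target is the structural bound $\sum_{i=1}^d |f(\ve_i) - f(\vzero)| \le 2^{M-1}$ for any $M$-layer MaxMin network with scalar output. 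Note that for $M=1$ (a bare linear map) this recovers exactly the constant $1$ that underlies the standard-network bound $\tfrac12-\tfrac1{2d}$ of Theorem \ref{thm:function_approximation}.

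The heart of the argument is the propagation of $\Psi$ through one layer. First I would check that the norm-constrained affine map cannot increase the potential: writing $z_a = \sum_p W^{(l)}_{a,p} x^{(l-1)}_p + b^{(l)}_a$ and using $\sum_p |W^{(l)}_{a,p}| \le \|\mathbf W^{(l)}\|_\infty \le 1$, the triangle inequality gives $\sum_i |z_a(\ve_i) - z_a(\vzero)| \le \sum_p |W^{(l)}_{a,p}|\,\Psi^{(l-1)}_p \le \Psi^{(l-1)}_{\max}$. The key point that separates MaxMin from a standard element-wise activation is that each MaxMin output reads \emph{two} pre-activations rather than one. Using the $\ell_\infty$-Lipschitzness of $\max$ and $\min$ in the form $|\max(p,q)-\max(p',q')| \le |p-p'| + |q-q'|$, a node $x^{(l)}_m = \max(z_a,z_b)$ (or $\min$) satisfies $\Psi^{(l)}_m \le \sum_i |z_a(\ve_i)-z_a(\vzero)| + \sum_i|z_b(\ve_i)-z_b(\vzero)| \le 2\Psi^{(l-1)}_{\max}$. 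Hence each MaxMin layer at most doubles $\Psi_{\max}$ while affine layers leave it unchanged; since an $M$-layer network contains $M-1$ MaxMin layers followed by a linear output layer, induction from $\Psi^{(0)}_{\max}=1$ yields $\sum_i |f(\ve_i)-f(\vzero)| \le \Psi^{(M-1)}_{\max} \le 2^{M-1}$.

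Given the structural bound, the impossibility follows from a short counting step mirroring Theorem \ref{thm:function_approximation}. For $k=1$ (the $\max$ function) I evaluate on the finite set $\{\vzero,\ve_1,\dots,\ve_d\} \subseteq \mathcal K$: since $x_{(1)}(\vzero)=0$ and $x_{(1)}(\ve_i)=1$, letting $\eps$ be the largest approximation error over these points gives $f(\ve_i)-f(\vzero) \ge 1-2\eps$ for every $i$, whence $d(1-2\eps) \le \sum_i |f(\ve_i)-f(\vzero)| \le 2^{M-1}$, i.e. $\eps \ge \tfrac12 - \tfrac{2^{M-2}}{d}$. For general $k$ I would repeat this around the $(k-1)$-to-$k$ threshold, using a shifted base point $\vq$ with $k-1$ coordinates set to one (so $x_{(k)}(\vq)=0$ and $x_{(k)}(\vq+\ve_i)=1$ for the remaining coordinates) and applying the structural bound to the translated network $\vy\mapsto f(\vq+\vy)$, which is again a MaxMin network. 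The two stated consequences are then arithmetic: $M \le \lceil \log_2 d\rceil$ forces $2^{M-1} < d$ and hence a strictly positive error lower bound, ruling out approximation; and $M \le \lfloor \log_2 d\rfloor$ gives $2^{M-2}/d \le 1/4$ and hence $\eps \ge 1/4$. Compactness of $\mathcal K$ (continuity of $f$) locates the promised $\widehat\vx$ among the finitely many evaluation points.

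The main obstacle I anticipate is establishing the doubling lemma with the sharp constant and in full rigor. I must confirm that the affine step genuinely cannot inflate $\Psi$ — this is precisely where the row-sum constraint $\|\mathbf W^{(l)}\|_\infty \le 1$ is used, and it is essential that the potential is measured as an $\ell_1$ sum over the coordinate directions so the row-sum bound applies — and that the factor of two contributed by the two-argument $\max/\min$ is the correct growth rate rather than an artifact of the triangle inequality. A secondary subtlety is matching the exact constant $\tfrac{2^{M-2}}{d}$ uniformly in $k$: the $(k-1)$-ones construction supplies $d-k+1$ active directions, so to recover a bound of the stated form for all $k$ I would invoke the order-statistic symmetry $x_{(k)}(\mathbf 1-\vx)=1-x_{(d-k+1)}(\vx)$ to reduce to the more favorable of $k$ and $d-k+1$.
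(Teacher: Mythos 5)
Your structural doubling lemma is correct, and it is essentially the paper's own key lemma (Lemma \ref{thm:non_expressive_groupsort}): the row-sum-constrained affine step cannot increase the potential, each two-argument $\max/\min$ at most doubles it, and induction over the $M-1$ activation layers gives, in its general form, $\sum_j |f(\vu^{(j)})-f(\vv^{(j)})| \le 2^{M-1}\bigl\|\sum_j|\vu^{(j)}-\vv^{(j)}|\bigr\|_\infty$ for an arbitrary collection of input pairs. The genuine gap is in the counting step, and it is exactly the subtlety you flagged at the end. You anchor at a single base point $\vq$ with $k-1$ ones and use the $d-k+1$ pairs $(\vq+\ve_i,\vq)$, which yields only $\frac 1 2-\frac{2^{M-2}}{d-k+1}$; your proposed repair via $x_{(k)}(\mathbf 1-\vx)=1-x_{(d-k+1)}(\vx)$ improves the number of active directions only to $\max(k,d-k+1)\ge\lceil (d+1)/2\rceil$, i.e. to roughly $\frac 1 2-\frac{2^{M-1}}{d}$ in the worst case. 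For mid-range $k$ (the median) this is strictly weaker than the stated bound and breaks both claims: under $M\le\lfloor\log_2 d\rfloor$ (so $2^M\le d$) your lower bound can be as small as about $\frac 1{2(d+1)}$ rather than $\ge\frac 1 4$, and for the first claim, when $M=\lceil\log_2 d\rceil$ with $d$ not a power of two (e.g. $d=5$, $M=3$, where $2^{M}\ge d+1$) your bound becomes non-positive, so it no longer rules out approximation at all. Your argument therefore proves the theorem only for $k$ near $1$ or near $d$.

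The missing idea is to average over \emph{all} adjacent Boolean pairs straddling the threshold rather than a star around one base point. The paper takes $\mathcal T=\{(\vu,\vv):\vu\in\mathcal S_{k},\ \vv\in\mathcal S_{k-1},\ \|\vu-\vv\|_1=1\}$, where $\mathcal S_j$ denotes the Boolean vectors with exactly $j$ ones. By permutation symmetry every coordinate is the differing coordinate in exactly $|\mathcal T|/d$ of these pairs, so $\bigl\|\sum_{(\vu,\vv)\in \mathcal T}|\vu-\vv|\bigr\|_\infty=|\mathcal T|/d$ \emph{independently of $k$}; the structural bound plus pigeonhole then produces a single pair with $|f(\vu)-f(\vv)|\le 2^{M-1}/d$, and the triangle-inequality step (using $x_{(k)}(\vu)=1$, $x_{(k)}(\vv)=0$) gives $\frac 1 2-\frac{2^{M-2}}{d}$ uniformly in $k$. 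To run this you also need your potential in the ``arbitrary pairs'' form quoted above rather than tied to the specific points $\{\vzero,\ve_1,\dots,\ve_d\}$; your induction does deliver this with the same proof, since the base case becomes $\Psi^{(0)}_p=\sum_j|u^{(j)}_p-v^{(j)}_p|\le\bigl\|\sum_j|\vu^{(j)}-\vv^{(j)}|\bigr\|_\infty$.
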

\begin{theorem}
\label{thm:groupsort_boolean}
Let $M_d$ be the minimum depth such that an $M_d$-layer MaxMin network can represent any (discrete) $d$-dimensional Boolean function. Then $M_d=\Omega(d)$.
\end{theorem}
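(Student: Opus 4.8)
The plan is to prove the bound by a counting (cardinality) argument. There are $2^{2^d}$ distinct Boolean functions on $d$ variables, so if I can bound the number of such functions realizable by a depth-$M$ MaxMin network by some $N(M,d)$ with $\log_2 N(M,d)\le 2^{O(M)}\cdot\mathrm{poly}(d)$ for an absolute constant hidden in the $O(\cdot)$, \emph{independent of the width}, then requiring $N(M,d)\ge 2^{2^d}$ forces $2^{O(M)}\cdot\mathrm{poly}(d)\ge 2^d$, i.e. $O(M)\ge d-O(\log d)$, which is exactly $M=\Omega(d)$. The entire difficulty is therefore concentrated in establishing a width-independent, depth-sensitive upper bound on the number of realizable Boolean functions.

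First I would record the structural facts I can rely on. Every linear piece of a MaxMin network is an affine function whose gradient has $\ell_1$-norm at most $1$ (this is exactly the $\|\mathbf W^{(l)}\|_\infty\le 1$ constraint together with the fact that $\max/\min$ are $1$-Lipschitz and merely select a coordinate), so the network is $1$-Lipschitz w.r.t. $\|\cdot\|_\infty$ and, on the cube, exact representation of a Boolean function forces the output to be ``tight'' (slope $\pm1$) along every sensitive edge. Next I would unfold the network into a formula over $\{\text{affine},\max,\min\}$ gates and observe that the quantity genuinely controlled by depth is the number of $\max/\min$ \emph{alternations}: each MaxMin layer contributes one level, whereas adding neurons within a layer does not. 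The goal is then to turn ``bounded alternation depth $O(M)$'' into ``few realizable Boolean functions,'' uniformly in width.

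The hard part, and the step I expect to be the main obstacle, is precisely this width-independence. A naive sign-pattern / Warren-type count bounds only the total number of parameters needed to shatter all $2^d$ points, and hence yields only a network-\emph{size} lower bound of the form $\Omega(2^d/d)$; it permits a shallow but exponentially wide network and therefore says nothing about depth. To obtain a true depth bound I would prove a normal-form lemma asserting that, \emph{restricted to the finite cube} $\{0,1\}^d$, any depth-$M$ MaxMin function is determined by a combinatorial object (for instance the min/max winner pattern of a depth-$M$ comparison tree, or the sequence of linear pieces encountered along monotone staircase paths between antipodal vertices) whose number of distinct instantiations is at most $2^{O(M)}\cdot\mathrm{poly}(d)$, with no dependence on width. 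The induction would show that an affine layer with $\infty$-norm-bounded weights cannot enlarge this object's cardinality, while one MaxMin layer at most squares it; starting from a $\mathrm{poly}(d)$-sized object, after $M$ layers one then gets $\log_2 N(M,d)=2^{O(M)}\cdot\mathrm{poly}(d)$, as required above. Carrying out this induction so that the per-layer ``squaring'' is charged to depth rather than width is the crux.

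Finally I would complete the argument by substituting this lemma into the cardinality comparison of the first paragraph. As a cross-check and possible alternative route, I would also attempt a reduction to Theorem~\ref{thm:groupsort}: if a depth-$M$ network represented \emph{all} $d$-variable Boolean functions, then in particular it would represent a specially constructed function that internally ``selects'' among $2^{\Omega(d)}$ candidate values, an order-statistic-like aggregation, which by the $\lceil\log_2(\cdot)\rceil$ selection-depth lower bound underlying Theorem~\ref{thm:groupsort} would force depth $\log_2 2^{\Omega(d)}=\Omega(d)$. Reconciling the input-dimension bookkeeping of such a reduction with the $d$-variable domain is delicate, so I would treat the counting argument as primary and the reduction as corroboration.
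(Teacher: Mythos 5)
Your overall architecture---a width-independent normal form for depth-$M$ MaxMin networks on the Boolean cube, followed by a counting comparison against the $2^{2^d}$ Boolean functions---is in fact the skeleton of the paper's own proof: the paper first proves a sparsification lemma (Lemma \ref{thm:maxmin_sparse}, specialized to Boolean data in Corollary \ref{thm:groupsort_weight_sparse}) showing that any MaxMin network interpolating a Boolean dataset can be replaced, with the \emph{same depth and width}, by one whose weight rows are all signed unit vectors or zero; every neuron then computes a constant, a literal, or a 2-ary AND/OR of literals, so the network collapses to a fan-in-2 Boolean circuit whose output cone has at most $2^{M+1}-1$ gates \emph{regardless of width}, and Shannon's 1942 counting bound ($\Omega(2^d/d)$ circuit size for most $d$-variable functions) then forces $M=\Omega(d)$. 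The problem is that your proposal leaves precisely this normal-form step unproven, and you flag it yourself as ``the crux.'' This is a genuine gap, not a deferred routine verification: the induction you sketch---``an affine layer cannot enlarge the object's cardinality, a MaxMin layer at most squares it''---cannot be run for any of the candidate objects you name, because with unbounded width an $\infty$-norm-bounded affine layer realizes a \emph{continuum} of distinct functions on the cube (all $\|\vw\|_1\le 1$ combinations of arbitrarily many previous-layer neurons), so no finite cardinality bound survives the affine layers on its own. The mechanism that kills this continuum is the tightness argument you mention only in passing: exact interpolation of a unit-$\ell_\infty$-diameter dataset forces $|\vw^{\mathrm{T}}(\vx^{(j)}-\vx^{(k)})|=1$ while $\|\vw\|_1\le 1$ and $\|\vx^{(j)}-\vx^{(k)}\|_\infty\le 1$, which by H\"older pins $\vw$ to a single coordinate with a $\pm 1$ entry. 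Propagating this sparsification through all layers (the paper's ``formatted dataset'' induction, which treats affine and MaxMin layers separately and carefully handles coordinates that are not at their extreme values) is the actual content of the theorem, and your proposal contains no substitute for it.

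Your fallback reduction to Theorem \ref{thm:groupsort} also cannot be repaired into a proof of the stated bound: that theorem's depth lower bound is $\lceil\log_2 d\rceil$ in the \emph{input} dimension, driven by the $2^{M-1}$ factor accumulating in the layerwise Lipschitz-type inequality, and there is no bookkeeping that lets an internal selection among $2^{\Omega(d)}$ candidates play the role of input dimension. Applied to $d$-variable Boolean functions (e.g., thresholds), it yields only $M=\Omega(\log d)$, exponentially weaker than the claimed $M=\Omega(d)$. So the correct diagnosis of your proposal is: right high-level plan, with the decisive lemma (width-independence via weight sparsification) asserted but missing, and a fallback that provably falls short.
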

\vspace{-4pt}
The above theorems show that MaxMin networks must be \emph{very deep} in order to represent Boolean functions and order statistics, which is in stark contrast to Theorem \ref{thm:ell_inf_net}, as a constant depth is sufficient for $\ell_\infty$-distance nets. Based on Theorem \ref{thm:groupsort_boolean}, we directly have the following corollary:
\begin{corollary}
\label{thm:groupsort_universal}
The function class induced by $M_d$-layer MaxMin networks is not a universal approximator to the $d$-dimensional 1-Lipschitz functions if $M_d=o(d)$.
\end{corollary}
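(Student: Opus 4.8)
The plan is to argue by contraposition: assuming the depth-$M_d$ MaxMin networks (ranging over all widths) formed a universal approximator for the $d$-dimensional $1$-Lipschitz functions on $[0,1]^d$, I would show that depth $M_d$ already suffices to represent \emph{every} $d$-dimensional Boolean function, at which point Theorem~\ref{thm:groupsort_boolean} forces $M_d=\Omega(d)$ and contradicts the hypothesis $M_d=o(d)$.

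First I would turn each Boolean function $g^\text{B}:\{0,1\}^d\to\{0,1\}$ into a single $1$-Lipschitz target that encodes it with a constant margin. Applying Proposition~\ref{thm:nearest_neighbor} to the complete Boolean dataset $\{(\vx,g^\text{B}(\vx)):\vx\in\{0,1\}^d\}$ produces a $1$-Lipschitz map $\hat\vf:\mathbb R^d\to\mathbb R^2$ whose margin equals $1$ at every hypercube vertex. Taking the scalar $f^\ast=\tfrac12(\hat f_1-\hat f_0)$ yields a $1$-Lipschitz function with $f^\ast(\vx)\ge\tfrac12$ when $g^\text{B}(\vx)=1$ and $f^\ast(\vx)\le-\tfrac12$ when $g^\text{B}(\vx)=0$. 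Invoking the universality hypothesis with accuracy $\tfrac14$ gives a depth-$M_d$ MaxMin network $h$ with $\sup_{\vx\in[0,1]^d}|h(\vx)-f^\ast(\vx)|<\tfrac14$; restricted to the vertices this makes $h(\vx)>0$ exactly when $g^\text{B}(\vx)=1$, so the classifier induced by $h$ (equivalently, by the two logits $(\tfrac12 h,-\tfrac12 h)$ in the sense of Proposition~\ref{thm:lipschitz}) reproduces $g^\text{B}$ on all of $\{0,1\}^d$. Crucially, this decoding is the decision rule, not an extra layer, so it costs no additional depth.

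Since $g^\text{B}$ was arbitrary, the same depth $M_d$ represents all $d$-dimensional Boolean functions, and Theorem~\ref{thm:groupsort_boolean}---which lower-bounds the minimal such depth by $\Omega(d)$---yields $M_d=\Omega(d)$, contradicting $M_d=o(d)$. I expect the only delicate point to be the bridge between \emph{continuous approximation} and \emph{discrete representation}: universality supplies only an $\varepsilon$-guarantee, and it is the constant margin from Proposition~\ref{thm:nearest_neighbor} that upgrades an accuracy-$\tfrac14$ fit into exact recovery on the vertices while leaving the depth budget untouched, so that the $\Omega(d)$ bound transfers verbatim. The one thing I would verify carefully is that the notion of ``represent'' in Theorem~\ref{thm:groupsort_boolean} is the classification (argmax/threshold) notion consistent with Proposition~\ref{thm:lipschitz}; this is the natural reading in our setting, and it is precisely what makes the approximation-to-representation reduction go through without inflating the number of layers.
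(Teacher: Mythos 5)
Your overall strategy---contraposition, building a $1$-Lipschitz extension of a hard Boolean function via Proposition \ref{thm:nearest_neighbor}, and then transferring the depth lower bound of Theorem \ref{thm:groupsort_boolean}---is exactly the route the paper gestures at (the paper gives no separate proof; it asserts the corollary follows ``directly'' from Theorem \ref{thm:groupsort_boolean}). Your construction of $f^\ast$ and the observation that a $\tfrac14$-accurate approximant $h$ classifies every hypercube vertex correctly are both fine. The proof breaks at the final step, and it is precisely the point you said you would ``verify carefully'': in this paper, ``represent'' in Theorem \ref{thm:groupsort_boolean} is \emph{not} the argmax/threshold notion. Its appendix restatement reads ``exactly represent,'' and its proof (Appendix \ref{sec:proof_groupsort_boolean}) runs through Lemma \ref{thm:maxmin_sparse} and Corollary \ref{thm:groupsort_weight_sparse}, whose hypotheses require a network that \emph{interpolates} the Boolean dataset, i.e., outputs exactly $g^\text{B}(\vx)\in\{0,1\}$ at every vertex; the matching upper bound (Proposition \ref{thm:groupsort_upper_bound_boolean}) uses the same exact notion. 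Your $h$ only agrees with $g^\text{B}$ in sign with margin $\tfrac14$; it does not interpolate, so Theorem \ref{thm:groupsort_boolean} cannot be invoked on it and no contradiction is reached.

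This gap is not cosmetic, because within the MaxMin class correct classification cannot be upgraded to exact interpolation. Post-composing $h$ with further MaxMin layers yields a $1$-Lipschitz scalar map of $h$, and any $1$-Lipschitz $\phi$ with $\phi\equiv 0$ on $(-\infty,-\tfrac14]$ and $\phi\equiv 1$ on $[\tfrac14,\infty)$ would need slope at least $2$ on $[-\tfrac14,\tfrac14]$---impossible. Moreover, threshold classification is strictly weaker than interpolation at a fixed depth: a single affine layer with $\vw=\tfrac13\mathbf 1$ sign-classifies the $3$-variable majority function with positive margin, yet no affine layer interpolates it, since majority is not affine on $\{0,1\}^3$. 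So the ``classification version'' of Theorem \ref{thm:groupsort_boolean} that your argument needs is genuinely stronger than what the paper proves: the sparsification argument of Lemma \ref{thm:maxmin_sparse} leans on exact extreme values and does not survive margins. A complete proof of the corollary needs a real bridge between approximation and exact representation: either (i) a margin-robust version of Theorem \ref{thm:groupsort_boolean}, or (ii) a width-uniform compactness/closedness argument showing that restrictions of depth-$M$ MaxMin networks to $\{0,1\}^d$ form a closed subset of $\mathbb R^{2^d}$, so that failure of exact interpolation forces a uniform approximation gap. Neither appears in your proposal (and, to be fair, the paper's one-line derivation is silent on the same point).
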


The proofs of Theorems \ref{thm:groupsort} and \ref{thm:groupsort_boolean} are deferred to Appendix \ref{sec:proof_groupsort} and \ref{sec:proof_groupsort_boolean}, respectively. In particular, the proof of Theorem \ref{thm:groupsort_boolean} is non-trivial and makes elegant use of Boolean circuit theory, so we present a proof sketch here. The key insight is that for any Boolean function, if it can be expressed by some MaxMin network $f$, then it can be expressed by a special MaxMin network with the same topology as $f$ such that all the weight vectors $\vw$ are sparse with at most one non-zero element, either $1$ or $-1$ (Corollary \ref{thm:groupsort_weight_sparse}). This implies that \emph{weight vectors have no use} in representing Boolean functions and thus MaxMin networks reduce to \emph{2-ary Boolean circuits}, i.e. directed acyclic graphs whose internal nodes are logical gates including NOT and the 2-ary AND/OR. Note that for a 2-ary Boolean circuit that has $M$ layers and outputs a scalar, the number of nodes will not exceed $2^{M+1}-1$ (achieved by a complete binary tree). However, the classic result in Boolean circuit theory (Shannon 1942) showed that for most Boolean functions of $d$ variables, a lower bound on the minimum size of 2-ary Boolean circuits is $\Omega(2^d/d)$ , which thus yields $M=\Omega(d)$ and concludes the proof.

In Appendix, we also discuss the tightness of the above theorems. We prove that a depth of $\mathcal O(\log_2 d)$ is sufficient to represent any order statistic function using Boolean circuit theory (Theorem \ref{thm:groupsort_deep}), and a straightforward construction using DNF shows that a depth of $\mathcal O(d)$ is sufficient to represent any Boolean functions (Proposition \ref{thm:groupsort_upper_bound_boolean}). Thus both theorems are tight.

Unfortunately, training deep MaxMin networks is known to be challenging due to optimization difficulties \citep{cohen2019universal}. Consequently, prior works only use a shallow MaxMin network with no more than 4 layers \citep{anil2019sorting,cohen2019universal}, which severely lacks expressive power. One possible solution is to increase the group size, and several works explored this aspect using toy examples and observed significant benefits empirically \citep{cohen2019universal,tanielian2021approximating}. However, a large group size involves computationally expensive sorting operations and makes the network hard to train \citep{anil2019sorting}, limiting its value in practice.

\section{A Unified Framework of Lipschitz Neural Networks}
\label{sec:sortnet}

The above theoretical results have justified order statistics as a crucial component in representing a class of Boolean functions, shedding light on how GroupSort and $\ell_\infty$-distance net work. Based on these insights, in this section, we will propose a unified framework of Lipschitz networks that take the respective advantage of prior Lipschitz architectures, and then give a practical (specialized) version that enables efficient training.

Consider a general Lipschitz network constructed using the following three basic types of 1-Lipschitz operations: $(\mathrm{i})$ norm-bounded affine transformations, e.g. $y=\vw^{\mathrm{T}}\vx$ ($\|\vw\|_1\le 1$) and $y=x+b$; $(\mathrm{ii})$ 1-Lipschitz unary activation functions $\sigma$; $(\mathrm{iii})$ order statistics. The first two types are extensively used in standard Lipschitz networks, while the last type is motivated by Section \ref{sec:theory} and is of crucial importance. We propose the following network which naturally combines the above components:
\begin{definition}
\label{def:sortnet}
(SortNet) Define an $M$-layer fully-connected SortNet $\vf$ as follows. The network takes $\vx=\vx^{(0)}$ as input, and the $k$-th unit in the $l$-th hidden layer $x^{(l)}_k$ is computed by 
\begin{equation}
\setlength{\belowdisplayskip}{3pt}
\setlength{\abovedisplayskip}{3pt}
\label{eq:sortnet_neuron}
  \begin{aligned}
   x^{(l)}_k=(\vw^{(l,k)})^{\mathrm{T}}\operatorname{sort}(\sigma(\vx^{(l-1)}+\vb^{(l,k)})),\quad
   \text{s.t.}\  \|\vw^{(l,k)}\|_1\le 1,\quad  l\in [M], k \in [d_l]
\end{aligned}  
\end{equation}
where $d_l$ is the size of the $l$-th layer, and $\operatorname{sort}(\vx):=(x_{(1)},\cdots,x_{(d)})^{\mathrm{T}}$ calculates all order statistics of $\vx\in\mathbb R^d$. The network outputs $\vf(\vx)=\vx^{(M)}+\vb^{\text{out}}$. Here $\{\vw^{(l,k)}\}$, $\{\vb^{(l,k)}\}$ and $\vb^{\text{out}}$ are parameters.
\end{definition}
\vspace{-2pt}

It is easy to see that SortNet is 1-Lipschitz w.r.t. $\ell_\infty$-norm. We now show that SortNet is a general architecture that extends both GroupSort and $\ell_\infty$-distance networks into a unified framework.
\begin{proposition}
Any GroupSort network with an arbitrary group size on a compact input domain can be represented by a SortNet with the same topological structure using activation $\sigma(x)=x$.
\end{proposition}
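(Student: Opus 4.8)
The plan is to realize each operation of the GroupSort network by choosing the biases and weights of a SortNet layer appropriately, exploiting the fact that a SortNet neuron (with $\sigma$ the identity, as the statement permits) sorts the whole biased input $\vx+\vb$ and then takes an $\ell_1$-bounded weighted combination of the resulting order statistics. The core difficulty is an \emph{ordering mismatch}: a GroupSort layer first applies an affine map $\mathbf{W}\vx+\vb$ (with $\|\mathbf{W}\|_\infty\le 1$) and then sorts within groups, whereas a SortNet neuron adds a bias, then sorts, then applies the linear weights $\vw$ (with $\|\vw\|_1\le 1$). Thus a single SortNet neuron naturally computes a weighted combination of order statistics of its raw inputs rather than an order statistic of affine combinations, and the two linear maps sit on opposite sides of the sort.

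The key device I would use is a \emph{band-separation} argument. Since $\mathcal K$ is compact, every pre-activation produced along the GroupSort network lies in a bounded range. For a layer with groups $S_1,\dots,S_m$ (each of size $G$), I would add to every coordinate of group $g$ a common bias $\beta_g$ whose consecutive gaps exceed this range, so that the $m$ groups occupy disjoint, strictly ordered ``bands'' on the real line for \emph{all} inputs in the domain. Then the global descending sort of the biased vector is exactly the concatenation of the within-group descending sorts, i.e. $\operatorname{sort}(\vx+\vb)=\operatorname{GroupSort}(\vx)+\vc$ for a fixed, known constant vector $\vc$. Consequently a SortNet neuron computes $\vw^{\mathrm T}\operatorname{GroupSort}(\vx)+\vw^{\mathrm T}\vc$: by setting $\vw$ equal to the matching row of the \emph{next} affine map, a single SortNet layer realizes ``group-sort, then an arbitrary affine functional.'' The $\ell_1$ constraint on $\vw$ matches the $\ell_\infty$ constraint on $\mathbf{W}$, and since the band reordering only permutes coordinates, the $\ell_1$ norm (hence the Lipschitz bound) is preserved.

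With this lemma, I would assemble the SortNet by regrouping the GroupSort computation $L_1,S_1,L_2,S_2,\dots$ (affines $L_i$ and group-sorts $S_i$) into blocks of the form ``group-sort $S_i$ followed by the next affine $L_{i+1}$'', each realized by one SortNet layer as above; the leftover initial affine $L_1$ is realized by a separate SortNet layer whose biases force the sort into a single fixed permutation (the degenerate, singleton-group case of band separation), so that the layer collapses to a pure affine map $\vw^{\mathrm T}\operatorname{sort}(\vx+\vb)=(\text{permuted }\vw)^{\mathrm T}\vx+\text{const}$. The additive constants $\vw^{\mathrm T}\vc$ produced at each layer are harmless: I would absorb them into the per-neuron biases of the following layer, and for the last layer into the output bias $\vb^{\text{out}}$. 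The whole argument proceeds by induction on depth, re-choosing the bias gaps at each layer against the (bounded, by compactness) range of that layer's inputs, while checking that the SortNet intermediate activations equal the corresponding GroupSort pre-activations.

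The main obstacle is precisely this ordering mismatch together with guaranteeing that the bias magnitudes can always be taken large enough: this is where compactness is essential, and it must be handled layer by layer, since the input range at any layer depends on the biases chosen earlier. Once the band-separation identity is established and the constants are tracked, verifying that the resulting SortNet computes the identical function with matching topology and that every weight still satisfies $\|\vw\|_1\le 1$ is routine.
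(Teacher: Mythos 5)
Your proposal is correct and follows essentially the same route as the paper's own proof: the band-separation identity $\operatorname{sort}(\vz+\vb)=\operatorname{GroupSort}_G(\vz)+\vc$ via group-wise biases with sufficiently large gaps (using compactness) is exactly the paper's key observation, and your regrouping into ``sort-then-next-affine'' blocks, with the initial affine layer realized by a degenerate banding that fixes the permutation and all constants absorbed into subsequent biases and $\vb^{\text{out}}$, matches the paper's explicit construction layer for layer. The only cosmetic difference is that the paper pins the first layer's sort to the identity permutation (biases $a_j=-jC$) rather than an arbitrary fixed permutation, which avoids permuting the weight rows.
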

\begin{proposition}
Any $\ell_\infty$-distance net can be represented by a SortNet with the same topological structure by fixing the weights $\vw^{(l,k)}=\ve_1$ and using the absolute-value activation $\sigma(x)=|x|$.
\end{proposition}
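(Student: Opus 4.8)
The plan is to substitute the prescribed weights and activation directly into the SortNet neuron, observe that what remains is exactly an $\ell_\infty$-distance computation, and then reconcile the single structural discrepancy (a missing additive bias) by a telescoping argument. First I would fix $\vw^{(l,k)}=\ve_1$ and $\sigma(x)=|x|$; since $\operatorname{sort}$ lists the order statistics in descending order, $\ve_1^{\mathrm{T}}\operatorname{sort}(\vz)=z_{(1)}=\max_i z_i$, so the neuron collapses to $x^{(l)}_k=\max_i|x^{(l-1)}_i+b^{(l,k)}_i|=\|\vx^{(l-1)}-(-\vb^{(l,k)})\|_\infty$. This is precisely the $\ell_\infty$-distance function with center $-\vb^{(l,k)}$, and the constraints of Definition~\ref{def:sortnet} are met because $\|\ve_1\|_1=1$ and $|\cdot|$ is $1$-Lipschitz. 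Hence the SortNet bias plays the role of the (negated) $\ell_\infty$-distance weight.

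The remaining discrepancy is that an $\ell_\infty$-distance neuron outputs $\|\vh^{(l-1)}-\vw^{(l,k)}\|_\infty+b^{(l,k)}$, where $\vh^{(l)}$ denotes the layer-$l$ output of the target $\ell_\infty$-distance net, whereas a SortNet hidden neuron has no post-aggregation additive bias (only the output offset $\vb^{\text{out}}$ is available). The key algebraic observation that resolves this is that shifting the input of a distance neuron by a constant vector is equivalent to shifting its center: $\|(\vh+\vc)-\vw\|_\infty=\|\vh-(\vw-\vc)\|_\infty$. I would therefore run an induction on the layer index $l$ establishing that the SortNet activations equal the $\ell_\infty$-distance net activations up to the per-neuron bias, i.e. $x^{(l)}_k=h^{(l)}_k-b^{(l,k)}$. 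The inductive step forces the choice $b^{(l,k)}_i=b^{(l-1,i)}-w^{(l,k)}_i$ (with the convention $b^{(0,i)}=0$ at the input layer), so that the leftover bias from layer $l-1$ is folded into the SortNet center of layer $l$ together with the desired $\ell_\infty$-distance weight $\vw^{(l,k)}$.

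I expect this bias bookkeeping to be the only genuine obstacle, and it is discharged cleanly by the telescoping just described: at every hidden layer the SortNet center absorbs both the current-layer weight and the accumulated bias, so no additive constant is ever lost. The single leftover bias at the final layer is then handled by the output offset, setting $b^{\text{out}}_k=b^{(M,k)}$ so that $f_k(\vx)=x^{(M)}_k+b^{\text{out}}_k=h^{(M)}_k$ coincides exactly with the $\ell_\infty$-distance net output. Since each SortNet neuron is in one-to-one correspondence with an $\ell_\infty$-distance neuron and consumes the entire previous layer, the two networks share the same depth, widths, and connectivity, which establishes the claimed representation with identical topological structure.
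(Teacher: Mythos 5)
Your proposal is correct and follows essentially the same route as the paper's own proof: you use the identity $\ve_1^{\mathrm{T}}\operatorname{sort}(|\vz|)=\|\vz\|_\infty$ to turn each SortNet neuron into an $\ell_\infty$-distance computation, and your telescoping bias assignment $b^{(l,k)}_i=b^{(l-1,i)}-w^{(l,k)}_i$ together with the induction $x^{(l)}_k=h^{(l)}_k-b^{(l,k)}$ and the final offset $\vb^{\text{out}}$ is exactly the paper's construction ($\vb^{(1,k)}=-\widetilde\vw^{(1,k)}$, $\vb^{(l,k)}=\widetilde\vb^{(l-1)}-\widetilde\vw^{(l,k)}$, $\vb^{\text{out}}=\widetilde\vb^{(M)}$, with $\vx^{(l)}=\widetilde\vx^{(l)}-\widetilde\vb^{(l)}$). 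No gaps; the bookkeeping you identify as the only obstacle is precisely what the paper's induction discharges.
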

\vspace{-3pt}

See Appendix \ref{sec:special_sortnet} for a proof. As a result, SortNet can exploit the respective advantage of these Lipschitz networks. Compared with GroupSort, SortNet can freely use activation functions such as the absolute value, thus easily addressing the problem claimed in \citep{huster2018limitations,anil2019sorting}. Moreover, unlike GroupSort, the bias vector $\vb^{(l,k)}$ in SortNet (\ref{eq:sortnet_neuron}) can be assigned diversely for different neurons in the same layer. In this way, one can control the sorting behavior of each neuron individually by varying the bias value without disturbing the output of other neurons, which is very beneficial (see Appendix \ref{sec:special_sortnet_groupsort} for details). Compared with $\ell_\infty$-distance net, SortNet adds linear transformation and incorporates all order statistics (rather than only the maximum), thus can represent certain functions more effectively.

\textbf{A practical version of SortNet}. As with GroupSort networks, we also design a practical (specialized) version of SortNet which enjoys efficient training. But different from the MaxMin network that reduces the group size, we keep the full-dimensional order statistics as they are crucial for the expressive power (Section \ref{sec:order_statistic}). The key observation is that in (\ref{eq:sortnet_neuron}), the only required computation is the linear combination of order statistics (i.e. $\vw^{\mathrm{T}} \operatorname{sort}(\cdot)$), rather than the entire sorting results (i.e. $\operatorname{sort}(\cdot)$). We find that for certain carefully designed choices of the weight vector $\vw$, there exist efficient approximation algorithms that can give a good estimation of $\vw^{\mathrm{T}} \operatorname{sort}(\cdot)$. In particular, we propose an assignment of the weight vector that follows geometric series, i.e. $w_i$ proportional to $\rho^{i}$, in which case we have the following result:
\begin{proposition}
\label{thm:dropout}
Let $\vw\in \mathbb R^d$ be a vector satisfying $w_i=(1-\rho)\rho^{i-1}, i\in[d]$ for some $0\le \rho<1$. Then for any vector $\vx\in\mathbb R_+^d$ with non-negative elements,
\begin{equation}
\setlength{\belowdisplayskip}{2pt}
\setlength{\abovedisplayskip}{2pt}
\label{eq:expectation_dropout}
    \vw^{\mathrm{T}}\operatorname{sort}(\vx)=\mathbb E_{\vs\sim\operatorname{Ber(1-\rho)}}[\max_i s_i x_i].
\end{equation}
Here $\vs$ is a random vector following independent Bernoulli distribution with probability $1-\rho$ being 1 and $\rho$ being 0.
\end{proposition}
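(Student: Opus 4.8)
The plan is to compute the expectation on the right-hand side directly by conditioning on which coordinate attains the maximum, and to exploit the fact that both sides are invariant under permuting the coordinates of $\vx$. First I would note that $\vw^{\mathrm{T}}\operatorname{sort}(\vx)=\sum_{i=1}^d (1-\rho)\rho^{i-1}x_{(i)}$ depends only on the sorted values of $\vx$, while the distribution of $\vs$ is exchangeable, so the right-hand side $\mathbb{E}[\max_i s_i x_i]$ is also unchanged under permutation of the $x_i$. Hence without loss of generality I would relabel the coordinates so that $\vx$ is already sorted in descending order, i.e. $x_i=x_{(i)}$ with $x_1\ge x_2\ge\cdots\ge x_d\ge 0$. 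This reduces the problem to matching $\sum_{i=1}^d(1-\rho)\rho^{i-1}x_i$ against $\mathbb{E}[\max_i s_i x_i]$ under this ordering.

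The key step is to rewrite $\max_i s_i x_i$ in terms of the \emph{first selected coordinate}. Since each $s_i\in\{0,1\}$, the quantity $\max_i s_i x_i$ is the maximum of $x_i$ over the random index set $\{i: s_i=1\}$; because the coordinates are sorted in descending order, this maximum is exactly $x_J$ where $J:=\min\{i:s_i=1\}$ is the smallest selected index. When no coordinate is selected (all $s_i=0$), the maximum is $\max_i 0=0$; this is the one place where the non-negativity hypothesis $\vx\in\mathbb{R}_+^d$ is essential, as it guarantees the empty-selection event contributes nothing rather than a spurious negative term. Using independence of the $s_i$, I would then compute $\Pr[J=j]=\Pr[s_1=\cdots=s_{j-1}=0,\, s_j=1]=\rho^{j-1}(1-\rho)$ for each $j\in[d]$, and $\Pr[\text{all }s_i=0]=\rho^d$.

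Assembling these by the law of total expectation gives
\begin{equation}
    \mathbb{E}\big[\max_i s_i x_i\big]=\sum_{j=1}^d \Pr[J=j]\,x_j + \rho^d\cdot 0=\sum_{j=1}^d (1-\rho)\rho^{j-1}x_j,
\end{equation}
which equals $\sum_{j=1}^d w_j x_{(j)}=\vw^{\mathrm{T}}\operatorname{sort}(\vx)$ after undoing the relabeling. A useful consistency check is that the weights sum to $\sum_{i=1}^d(1-\rho)\rho^{i-1}=1-\rho^d$, precisely the probability $\Pr[\text{some }s_i=1]$ that the max is nonzero, confirming the bookkeeping. There is no genuinely hard step here; the only subtleties are the reduction to sorted coordinates via exchangeability and the careful identification of the maximizer with the first selected index together with the role of non-negativity in discarding the empty-selection case, so I would make sure those two points are stated explicitly.
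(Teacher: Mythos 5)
Your proof is correct and follows essentially the same route as the paper's: both identify $\max_i s_i x_i$ with the value of the first selected coordinate in descending sorted order, whose probability is $(1-\rho)\rho^{k-1}$, and sum by the law of total expectation. Your version is in fact slightly more careful than the paper's, since relabeling via exchangeability avoids the paper's WLOG distinctness assumption and you explicitly handle the all-zeros selection event, where non-negativity of $\vx$ is used.
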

\vspace{-8pt}
\begin{proof}
Without loss of generality, assume $x_1,\cdots,x_d$ are different from each other. Denote $j_1,\cdots,j_d$ as the sorting indices such that $\operatorname{sort}(\vx)=(x_{j_1},\cdots,x_{j_d})$. Then
\begin{align*}
    \textstyle\mathbb E_{\vs\sim\operatorname{Ber(\rho)}}[\max_i s_i x_i]
    =&\textstyle\sum_{k\in [d]}  \operatorname{Pr}_{\vs\sim\operatorname{Ber(\rho)}}\left[\max_{i} s_i x_i=x_{j_k}\right] x_{j_k}\\
    =&\textstyle\sum_{k\in [d]}  \operatorname{Pr}_{\vs\sim\operatorname{Ber(\rho)}}\left[s_{j_k}=1 \text{ and }s_{j_i}=0\  \forall 1\le i<k\right] x_{j_k}\\
    =&\textstyle\sum_{k\in [d]}  (1-\rho)\rho^{k-1} \cdot x_{(k)}=\vw^{\mathrm{T}}\operatorname{sort}(\vx).
\end{align*}

\vspace{-25pt}
\end{proof}
\vspace{-5pt}

It is easy to check that the weight $\vw$ in the above proposition satisfies $\|\vw\|_1\le 1$, which guarantees the Lipschitzness. The non-negative condition on $\vx$ in Proposition \ref{thm:dropout} holds when using a suitable activation function in neuron (\ref{eq:sortnet_neuron}), such as the absolute value function.

Proposition \ref{thm:dropout} suggests that one can use $\max_i s_i x_i$ to give an \emph{unbiased} estimation of $\vw^{\mathrm{T}} \operatorname{sort}(\vx)$. In this way, the expensive sorting operation is avoided and replaced by a max operation, thus significantly reducing the computational cost in training. We give an efficiently GPU implementation for training SortNet in Appendix \ref{sec:gpu_implementaion}. Note that $\vs$ is a random Bernoulli vector, so the above calculation is similar to applying a mask on the input of each neuron, like dropout \citep{srivastava2014dropout}. It means that the introduced stochasticity may further prevent overfitting and benefit generalization performance.

\textbf{Regarding the value of $\rho$}. When $\rho=0$, only the maximum value is taken into the computation and the resulting network can recover the $\ell_\infty$-distance net by choosing the activation function $\sigma(x)=|x|$. This means the specialized SortNet still extends $\ell_\infty$-distance net and thus has sufficient expressive power. When $\rho>0$, all order statistics become utilized. A simple way of selecting $\rho$ is to regard it as a hyper-parameter and set its value by cross-validation, which is adopted in our experiments. One can also consider treating $\rho$ as learnable parameters for each neuron that participate in the optimization process, but this involves calculating the gradient of $\rho$ which may be complicated due to the stochastic sampling procedure (\ref{eq:expectation_dropout}). We will leave the study as future work.

\section{Experiments}
\label{sec:experiments}

In this section, we perform extensive empirical evaluations of the proposed SortNet architecture as well as various prior works in the certified $\ell_\infty$ robustness area. To show the scalability of different approaches, we consider a variety of benchmark datasets, including MNIST \citep{lecun1998mnist}, CIFAR-10 \citep{krizhevsky2009learning}, TinyImageNet \citep{le2015tiny}, and ImageNet ($64\times 64$) \citep{chrabaszcz2017downsampled}. Due to space limitations, a complete training recipe is given in Appendix \ref{sec:exp_details}. Our code and trained models are released at \texttt{\href{https://github.com/zbh2047/SortNet}{https://github.com/zbh2047/SortNet}}.

\subsection{Experimental setting}

\textbf{SortNet model configuration}. Since SortNet generalizes the $\ell_\infty$-distance net, we simply follow the same model configurations as \citep{zhang2021towards} and consider two types of models. The first one is a simple SortNet consisting of $M$ fully-connected layers with a hidden size of 5120, which is used in MNIST and CIFAR-10. Like \citep{zhang2021towards}, we choose $M=5$ for MNIST and $M=6$ for CIFAR-10. Since SortNet is Lipschitz, we directly apply the margin-based certification method to calculate the certified accuracy (Proposition \ref{thm:lipschitz}). To achieve the best results on ImageNet-like datasets, in our second type of model we consider using a composite architecture consisting of a base SortNet backbone and a prediction head (denoted as SortNet+MLP). Following \citep{zhang2021towards}, the SortNet backbone has 5 layers with a width of 5120 neurons, which serves as a robust feature extractor. The top prediction head is a lightweight 2-layer perceptron with 512 hidden neurons (or 2048 for ImageNet), which takes the robust features as input to give classification results. We also try a larger SortNet backbone, denoted as SortNet+MLP (2x), that has roughly four times the training cost (see Appendix \ref{sec:models} for architectural details). We use the same approach as \citep{zhang2021towards} to train and certify these models, i.e. by combining margin-based certification for the SortNet backbone and interval bound propagation for the top MLP \citep{gowal2018effectiveness}. 

\textbf{Baseline methods and metrics}. We compare SortNet with representative literature approaches including relaxation-based certification (for standard networks), margin-based certification (using Lipschitz networks), and mixed-integer linear programming (MILP) \citep{tjeng2019evaluating}. In Appendix \ref{sec:randomized_smoothing}, we also discuss randomized smoothing approaches \citep{cohen2019certified,salman2019provably}, which provide probabilistic guarantees rather than deterministic ones. For each method in these tables, we report five metrics: $(\mathrm{i})$ training efficiency, measured by the wall-clock time per training epoch; $(\mathrm{ii})$ certification efficiency, measured by the time needed to calculate the certified accuracy on the test dataset; $(\mathrm{iii})$ the clean test accuracy without perturbation (denoted as Clean); $(\mathrm{iv})$ the robust test accuracy under 100-step PGD attack (denoted as PGD); $(\mathrm{v})$ the certified robust test accuracy (denoted as Certified). For a fair comparison, we reproduce most of baseline methods using the official codes and report the wall-clock time under the same NVIDIA-RTX 3090 GPU. These results are presented in Tables \ref{tab:results_mnist}, \ref{tab:results_cifar10} and \ref{tab:results_imagenet}.
In Appendix \ref{sec:full_result}, we also show the training variance of each setting by running 8 sets of experiments independently, and full results (including the median performance) are reported in Table \ref{tab:results_full_clean} and \ref{tab:results_full_certified}.

\subsection{Experimental results}

\textbf{Performance on MNIST.} The results are presented in Table \ref{tab:results_mnist}. Following the common practice, we consider both a small perturbation radius $\epsilon=0.1$ and a larger one $\epsilon=0.3$. It can be seen that the SortNet models can achieve \textbf{98.14\%} ($\epsilon=0.1$) and \textbf{93.40\%} ($\epsilon=0.3$) certified accuracy, respectively, both of which outperform all previous baseline methods. In contrast, the GroupSort network can only achieve a trivial certified accuracy for $\epsilon=0.3$. This matches our theory in Section \ref{sec:order_statistic}, indicating that the expressive power of shallow MaxMin networks is insufficient in real-world applications.

\textbf{Performance on CIFAR-10.}
The results are presented in Table \ref{tab:results_cifar10}. Following the common practice, we consider two perturbation radii: $\epsilon=2/255$ and $\epsilon=8/255$. Our models can achieve \textbf{56.94\%} ($\epsilon=2/255$) and \textbf{40.39\%} ($\epsilon=8/255$) certified accuracy, respectively. Moreover, the training approach proposed in Section \ref{sec:sortnet} is very efficient, e.g., with a training time of \textbf{13$\sim$14} seconds per epoch. For both radii, our models perform the best among all existing approaches that can be certified in a reasonable time. Compared with relaxation-based methods, the certified accuracy of SortNet models is much higher (typically $+3\sim+6$ point for both radii), despite our training speed being several times faster. Such results may indicate that certified $\ell_\infty$ robustness can be better achieved by designing suitable Lipschitz models than by devising relaxation procedures for non-Lipschitz models.

\textbf{Performance on TinyImageNet and ImageNet.}
To demonstrate the scalability of SortNet models, we finally run experiments on two large-scale datasets: Tiny-ImageNet and ImageNet ($64\times 64$). Notably, the ImageNet dataset has 1000 classes and contains 1.28 million images for training and 50,000 images for testing. Due to both the large size and the huge number of classes, achieving certified $\ell_\infty$ robustness on the ImageNet level has long been a challenging task.

Table \ref{tab:results_imagenet} presents our results along with existing baselines. Among them, we achieve \textbf{18.18\%} certified accuracy on TinyImageNet and achieve \textbf{9.54\%} certified accuracy on ImageNet, both of which establish state-of-the-art results. The gap is most prominent on ImageNet, where our small SortNet+MLP model already outperforms the largest model of \citep{xu2020automatic} while being \textbf{22} times faster to train. Even for the largest model (SortNet+MLP 2x), the training is still 7 times faster, resulting in a training overhead of 4 days using two GPUs. We suspect that continuing to increase the model size will yield better results, given the noticeable improvement of the larger model over the smaller one.

\textbf{Comparing with $\ell_\infty$-distance net}. As can be seen, SortNet models consistently achieve better certified accuracy than $\ell_\infty$-distance nets for all different datasets and perturbation levels, and the performance gap is quite prominent compared with the original work \citep{zhang2021towards}. Very recently, a follow-up paper \citep{zhang2022boosting} significantly improved the performance of $\ell_\infty$-distance net using a carefully designed training strategy, creating a strong baseline on CIFAR-10. However, we find their approach does not suit the ImageNet-like datasets when the number of classes is large (see Appendix \ref{sec:reproducing_baseline}). In contrast, SortNet models enjoy great scalability ranging from MNIST to ImageNet and consistently outperform \citep{zhang2022boosting}. The improvement is also remarkable for $\eps=2/255$ on CIFAR-10 ($+7.11\%$ and $+2.82\%$ in clean / certified accuracy).

In Appendix \ref{sec:ablation}, we conduct ablation studies on CIFAR-10 by varying the value of $\rho$ and comparing SortNet models ($\rho>0$) with $\ell_\infty$-distance net ($\rho=0$), under \emph{the same training strategy} in this paper without additional tricks. We observe a large gain in certified accuracy when switching from $\ell_\infty$-distance net to general SortNet. This empirically indicates that incorporating other order statistics has extra benefits in certified $\ell_\infty$ robustness than using only the maximum (the first order statistic).


\begin{table*}[t]
\small
\vspace{-12pt}
\caption{Comparison of our results with existing methods on MNIST dataset.}
\label{tab:results_mnist}
\vspace{2pt}
\begin{adjustwidth}{-.5in}{-.5in}
\centering
\begin{tabular}{cc||c||c|c|c|c|c|c}
\Xhline{0.75pt}
\multicolumn{2}{c||}{\multirow{2}{*}{Method}}                                      & Train & \multicolumn{3}{c|}{MNIST ($\epsilon=0.1$)} & \multicolumn{3}{c}{MNIST ($\epsilon=0.3$)} \\ \cline{4-9}
\multicolumn{2}{c||}{}                                                              & \hspace{-4pt }Time (s)\hspace{-4pt }  & \hspace{1pt }Clean\hspace{1pt }  & \hspace{2pt }PGD\hspace{2pt }  & \hspace{-4pt }Certified\hspace{-4pt } & \hspace{1pt }Clean\hspace{1pt }  & \hspace{2pt }PGD\hspace{2pt }  & \hspace{-4pt }Certified\hspace{-4pt }\\ \Xhline{0.6pt}
\multicolumn{1}{c|}{\multirow{3}{*}{Relaxation}}
                       &  IBP \citep{gowal2018effectiveness}                         & 17.5    & 98.92 & 97.98 & 97.25 & 97.88 & 93.22 & 91.79    \\
\multicolumn{1}{c|}{}  & IBP \citep{shi2021fast}                                    & 34.7    & 98.84 & -- & 97.95 & 97.67 & -- & 93.10    \\
\multicolumn{1}{c|}{}  & CROWN-IBP  \citep{zhang2020towards}                        & 60.3    & 98.83 & 98.19 & 97.76 & 98.18 & 93.95 & 92.98    \\ \hline
\multicolumn{1}{c|}{\multirow{5}{*}{Lipschitz}}
                       & GroupSort (MaxMin) \citep{anil2019sorting}            & --    & 97.0 & 84.0 & 79.0 & 97.0 & 34.0 & 2.0   \\
\multicolumn{1}{c|}{}  & $\ell_\infty$-dist Net \citep{zhang2021towards}            & 17.2    & 98.66 & 97.85 & 97.73 & 98.54 & 94.62 & 92.64   \\
\multicolumn{1}{c|}{}  & $\ell_\infty$-dist Net+MLP \citep{zhang2021towards}        & 17.2    & 98.86 & 97.77 & 97.60 & 98.56 & 95.05 & 93.09    \\
\multicolumn{1}{c|}{}  & $\ell_\infty$-dist Net \citep{zhang2022boosting} & 17.0    & 98.93 & 98.03 & 97.95 & 98.56 & 94.73 & 93.20 \\
\multicolumn{1}{c|}{}  & SortNet                                                    & \textbf{10.6}    & 99.01 & 98.21 & \textbf{98.14} & 98.46 & 94.64 & \textbf{93.40}  \\ \hline
\multicolumn{1}{c|}{\multirow{1}{*}{MILP}}
                       & COLT  \citep{balunovic2020Adversarial}                     & --   & 99.2  & -- & 97.1  & 97.3  & -- & 85.7   \\
\Xhline{0.75pt}
\end{tabular}
\end{adjustwidth}


\caption{Comparison of our results with existing methods on CIFAR-10 dataset.}
\label{tab:results_cifar10}
\vspace{2pt}
\begin{adjustwidth}{-.5in}{-.5in}
\centering
\begin{tabular}{cc||r|r||c|c|c|c|c|c}
\Xhline{0.75pt}
\multicolumn{2}{c||}{\multirow{2}{*}{Method}}                                      & \multicolumn{2}{c||}{ Time (s)} & \multicolumn{3}{c|}{$\epsilon=2/255$} & \multicolumn{3}{c}{$\epsilon=8/255$} \\ \cline{3-10}
\multicolumn{2}{c||}{}                                                              & \hspace{1pt }Train\hspace{1pt }     & \hspace{-2pt }Certify\hspace{-2pt }   & \hspace{1pt }Clean\hspace{1pt }  & \hspace{2pt }PGD\hspace{2pt } & \hspace{-4pt }Certified\hspace{-4pt } & \hspace{1pt }Clean\hspace{1pt }  & \hspace{2pt }PGD\hspace{2pt } & \hspace{-4pt }Certified\hspace{-4pt } \\ \Xhline{0.6pt}
\multicolumn{1}{c|}{\multirow{5}{*}{\hspace{-3pt}Relaxation\hspace{-3pt}}}
                       & CAP \citep{wong2018scaling}                                & 659.0  & 7,570 & 68.28  & -- & 53.89     & 28.67  & -- & 21.78     \\
\multicolumn{1}{c|}{}  & IBP \citep{gowal2018effectiveness}                         & 19.0    & 2.74  & 61.46  & 50.28 & 44.79     & 50.99  & 31.27 & 29.19     \\
\multicolumn{1}{c|}{}  & IBP \citep{shi2021fast}                                    & 70.4    & 4.02  & 66.84  & -- & 52.85     & 48.94  & -- & 34.97     \\
\multicolumn{1}{c|}{}  & CROWN-IBP  \citep{zhang2020towards}                        & 87.2    & 7.01  & 71.52  &59.72 & 53.97     & 45.98  &34.58 & 33.06     \\
\multicolumn{1}{c|}{}  & CROWN-IBP  \citep{xu2020automatic}                         & 45.0    & 4.02  & --     & -- & --         & 46.29  & 35.69 & 33.38     \\ \hline
\multicolumn{1}{c|}{\multirow{5}{*}{Lipschitz}}
                       & $\ell_\infty$-dist Net \citep{zhang2021towards}            & 19.7    & 1.73  & 60.33  & 51.55 & 50.94     & 56.80  & 36.19 & 33.30     \\
\multicolumn{1}{c|}{}  & \hspace{-3pt}$\ell_\infty$-dist Net+MLP \citep{zhang2021towards}\hspace{-3pt}        & 19.7    & 1.74  & 65.62  & 51.47 &   51.05   & 50.80  & 36.51 & 35.42     \\
\multicolumn{1}{c|}{}  & $\ell_\infty$-dist Net \citep{zhang2022boosting} & 18.9    & 1.73  & 60.61  & 54.28 & 54.12     & 54.30  & 41.84 & 40.06     \\
\multicolumn{1}{c|}{}  & SortNet                                                    & 14.0    & 8.00  & 65.96 & 57.03 & 56.67 &  54.84 & 41.50 & \textbf{40.39}    \\
\multicolumn{1}{c|}{}  & SortNet+MLP                                                & \textbf{13.4}    & 8.01  & 67.72 & 57.83 & \textbf{56.94} & 54.13    & 41.58 & 39.99  \\
\hline
\multicolumn{1}{c|}{\multirow{1}{*}{MILP}}
                       & COLT  \citep{balunovic2020Adversarial}                     & 252.0   & $\sim 10^5$ & 78.4   & -- & 60.5      & 51.7   & -- & 27.5      \\
\Xhline{0.75pt}
\end{tabular}
\end{adjustwidth}

\caption{Comparison of our results with existing methods on TinyImageNet and ImageNet datasets.}
\label{tab:results_imagenet}
\vspace{2pt}
\begin{adjustwidth}{-.5in}{-.5in}
\centering
\begin{tabular}{c||r|c|c|c||r|c|c|c}
\Xhline{0.75pt}
\multirow{2}{*}{Method} & \multicolumn{4}{c||}{TinyImageNet ($\epsilon=1/255$)} & \multicolumn{4}{c}{ImageNet $64\times 64$ ($\epsilon=1/255$)} \\ \cline{2-9} 
                        & \hspace{-4pt }Time (s)\hspace{-4pt } & \hspace{1pt }Clean\hspace{1pt }    & \hspace{2pt }PGD\hspace{2pt } & \hspace{-4pt }Certified\hspace{-4pt }    & \hspace{-4pt }Time (s)\hspace{-4pt }    & \hspace{1pt }Clean\hspace{1pt }      & \hspace{2pt }PGD\hspace{2pt } & \hspace{-4pt }Certified\hspace{-4pt }      \\ \Xhline{0.6pt}
IBP \citep{gowal2018effectiveness}                      &     735 &  26.46   & 20.60 & 14.85 &   11,026 &       15.96 & 9.12 &   6.13         \\
IBP \citep{shi2021fast}                                 &     284 &  25.71  & -- & 17.64 & --    & --        & -- & --            \\
CROWN-IBP \citep{xu2020automatic}                       &   1,256 &  27.82   & 20.52 & 15.86 &  16,269 & 16.23       & 10.26 & 8.73          \\
$\ell_\infty$-dist Net+MLP \citep{zhang2021towards}     &      55 &  21.82   & -- & 16.31 & --     & --         & -- & --             \\
$\ell_\infty$-dist Net \citep{zhang2022boosting}     &      55 &  12.57   & 11.09 & 11.04 & --     & --         & -- & --             \\
SortNet+MLP                                             &      \textbf{39} &  24.17   & 20.57 & 17.92 &    \textbf{715} & 13.48      & 10.93 &  9.02          \\
SortNet+MLP (2x larger)                                 &     156 &  25.69   & 21.57 & \textbf{18.18} &   2,192 &     14.79  & 11.93 & \textbf{9.54}           \\ \Xhline{0.75pt}
\end{tabular}
\end{adjustwidth}
\vspace{-10pt}
\end{table*}

\section{Conclusion}

In this paper, we study certified $\ell_\infty$ robustness from the novel perspective of representing Boolean functions. Our analysis points out an inherent problem in the expressive power of standard Lipschitz networks, and provides novel insights on how recently proposed Lipschitz networks resolve the problem. We also answer several previous open problems, such as $(\mathrm{i})$ the expressive power of standard Lipschitz networks with general activations \citep{neumayer2022approximation} and $(\mathrm{ii})$ the lower bound on the depth of MaxMin networks to become universal approximators \citep{tanielian2021approximating,neumayer2022approximation}. Finally, guided by the theoretical results, we design a new Lipschitz network with better empirical performance than prior works.

\section*{Limitations, Open Problems, and Broader impact}
\textbf{Regarding the $\ell_p$-norm}. One major limitation of this work is that we only focus on $\ell_\infty$ robustness. While such results may shed light on general $\ell_p$-norm settings when $p$ is large, it does not apply to the standard $\ell_2$-norm. In particular, in this case MaxMin is \emph{equivalent} to the absolute value activation function in terms of expressive power \citep{anil2019sorting}, which contrasts to the $\ell_\infty$-norm case for which MaxMin networks are strcitly more expressive. Moreover, empirical results suggest that these $\ell_2$ Lipschitz networks may have sufficient expressive power \citep{singla2022improved} (although it remains a fantastic open problem to prove that MaxMin networks with bounded matrix 2-norm are universal approximators).

Based on the above finding, this work reflects an interesting ``phase transition'' in the expressive power of standard Lipschitz networks when $p$ is switched from 2 to a large number. Coincidentally, a similar limitation is also proved when using randomized smoothing, which suffers from the curse of dimensionality when $p>2$ \citep{yang2020randomized}. This raises an interesting question of why the effect of $p$ is very similar for both methods and how things change as $p$ increases. 


\textbf{Beyond standard Lipschitz networks}. Another limitation is that our results apply only for standard Lipschitz networks. When the Lipschitz constant is constrained using carefully designed bounding methods \citep{raghunathan2018certified,fazlyab2019efficient,latorre2020lipschitz,shi2022efficient} (rather than a simple stacking of 1-Lipschitz layers), the robustness certification will be less efficient, but the resulting networks are likely to bypass the impossibility results in this paper. It is an interesting direction to study whether we can just use \emph{standard networks} with a carefully-designed Lipschitz bounding method that can achieve good certified robustness while still keeping adequate efficiency.

\textbf{Other promising directions}. On the application side, it is interesting to study how to design efficient training approaches for the general SortNet models with learnable weights or learnable $\rho$. Another meaningful question is how to encode inductive biases into these Lipschitz networks (e.g., designing convolutional architectures) to better suit image tasks.

\textbf{Broader impact}.
Interestingly, our theoretical results point out a surprising connection between MaxMin/$\ell_\infty$-distance networks and Boolean circuits. We believe the value of this paper may go beyond the certified robustness community and link to the field of theoretical computer science.

\section*{Acknowledgement}
This work is supported by National Science Foundation of China (NSFC62276005), The Major Key Project of PCL (PCL2021A12), Exploratory Research Project of Zhejiang Lab (No. 2022RC0AN02), and Project 2020BD006 supported by PKUBaidu Fund. Bohang Zhang would like to thank Ruichen Li and Yuxin Dong for helpful discussions. We also thank all the anonymous reviewers for the very careful and detailed reviews as well as the valuable suggestions. Their help has further enhanced our work.

\bibliographystyle{plain}
\bibliography{ref}
\newpage

\section*{Checklist}


\begin{enumerate}

\item For all authors...
\begin{enumerate}
  \item Do the main claims made in the abstract and introduction accurately reflect the paper's contributions and scope?
    \answerYes{}
  \item Did you describe the limitations of your work?
    \answerYes{We have discussed the limitations of this work in detail.}
  \item Did you discuss any potential negative societal impacts of your work?
    \answerNA{Our work is only for academic research purpose, without any foreseeable negative societal impacts.}
  \item Have you read the ethics review guidelines and ensured that your paper conforms to them?
    \answerYes{}
\end{enumerate}

\item If you are including theoretical results...
\begin{enumerate}
  \item Did you state the full set of assumptions of all theoretical results?
    \answerYes{}
  \item Did you include complete proofs of all theoretical results?
    \answerYes{Complete proofs are given either in the main text or in Appendix \ref{sec:proof}.}
\end{enumerate}

\item If you ran experiments...
\begin{enumerate}
  \item Did you include the code, data, and instructions needed to reproduce the main experimental results (either in the supplemental material or as a URL)?
    \answerYes{We give complete implementation details in Appendix \ref{sec:exp_details}. Our code and models will be released once the paper is published. We use public benchmark datasets which can be downloaded freely.}
  \item Did you specify all the training details (e.g., data splits, hyperparameters, how they were chosen)?
    \answerYes{See Appendix \ref{sec:exp_details}.}
        \item Did you report error bars (e.g., with respect to the random seed after running experiments multiple times)?
    \answerYes{See Appendix \ref{sec:ablation}.}
        \item Did you include the total amount of compute and the type of resources used (e.g., type of GPUs, internal cluster, or cloud provider)?
    \answerYes{See Appendix \ref{sec:exp_details}.}
\end{enumerate}

\item If you are using existing assets (e.g., code, data, models) or curating/releasing new assets...
\begin{enumerate}
  \item If your work uses existing assets, did you cite the creators?
    \answerYes{}
  \item Did you mention the license of the assets?
    \answerNA{}
  \item Did you include any new assets either in the supplemental material or as a URL?
    \answerNo{}
  \item Did you discuss whether and how consent was obtained from people whose data you're using/curating?
    \answerNA{We use public benchmark datasets which can be downloaded freely.}
  \item Did you discuss whether the data you are using/curating contains personally identifiable information or offensive content?
    \answerNA{We use public benchmark datasets which can be downloaded freely.}
\end{enumerate}

\item If you used crowdsourcing or conducted research with human subjects...
\begin{enumerate}
  \item Did you include the full text of instructions given to participants and screenshots, if applicable?
    \answerNA{}
  \item Did you describe any potential participant risks, with links to Institutional Review Board (IRB) approvals, if applicable?
    \answerNA{}
  \item Did you include the estimated hourly wage paid to participants and the total amount spent on participant compensation?
    \answerNA{}
\end{enumerate}

\end{enumerate}


\newpage
\appendix

\section{Boolean Functions}
In this section, we review some basic concepts in Boolean algebra, which will be frequently used in our subsequent analysis. We only present the content that is most relevant to this paper, in particular, the concept of \emph{disjunctive normal form} and \emph{symmetric Boolean functions}. Throughout this paper, we use $g^\text{B}:\{0,1\}^d\to \{0,1\}$ to denote a Boolean function.

\subsection{Disjunctive normal form}
\label{sec:dnf}
Three most elementary Boolean functions in Boolean algebra are the logical AND, logical OR and logical NOT, denoted as $\land$, $\lor$, and $\lnot$, respectively. The value of $\land_{i=1}^d x_i=x_1\land \cdots\land x_d$ is 1 if and only if $x_i=1$ $\forall i\in[d]$. Similarly, $\lor_{i=1}^d x_i=x_1\lor \cdots\lor x_d$ is 0 if and only if $x_i=0$ $\forall i\in[d]$. The logical NOT is a unary operation which outputs 1 if and only if the input is 0.

All Boolean functions can be expressed using the above three elementary operations as building blocks. Among all possible forms of expressions, the disjunctive normal form (DNF) is a canonical form used in Boolean algebra theory. To define a DNF, we will need the notion of conjunction/disjunction.
\begin{definition}
\label{def:dnf}
Let $\vx=(x_1,\cdots,x_d)\in \{0,1\}^d$ be a Boolean vector.
\begin{itemize}[topsep=0pt,leftmargin=30pt]
\setlength{\itemsep}{0pt}
    \item A \emph{literal} is an atomic formula consisting of a single element or its negation, i.e. $x_i$ or $\lnot x_i$.
    \item A \emph{literal conjunction} is a formula consisting of a set of literals linked by the logical AND operation, i.e. $x_{i_1}\land \cdots \land x_{i_r}\land \lnot x_{j_1}\land\cdots\land\lnot x_{j_s}$, $1\le i_1\le \cdots\le i_r\le d,1\le j_1\le\cdots\le j_s\le d$, where $r,s\in \mathbb N$. Typically, the definition further requires that $i_1,\cdots, i_r,j_1,\cdots, j_s$ are different indices, otherwise the literal conjunction is \emph{unsatisfiable} (i.e. always outputs 0). Similarly, a \emph{literal disjunction} is a formula consisting of a set of literals linked by the logical OR operation.
    \item A logical formula is considered to be in its disjunctive normal form if it is a disjunction of one or more conjunctions of one or more literals.
\end{itemize}
\end{definition}
We list some examples of DNF: $x_1$, $x_1\land x_2$, $x_1\lor x_2$, $(x_1\land x_2)\lor (x_1\land \lnot x_2)\lor (\lnot x_1\land \lnot x_2)\lor \lnot x_4$, etc. A fundamental result in Boolean algebra is shown in the following:
\begin{fact}
\label{fact:dnf}
Any satisfiable Boolean function can be written in its disjunctive normal form.
\end{fact}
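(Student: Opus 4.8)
The plan is to give the standard \emph{canonical DNF} (sum-of-minterms) construction. Since $g^\text{B}$ is satisfiable, the set $S := \{\va \in \{0,1\}^d : g^\text{B}(\va) = 1\}$ is nonempty. The idea is to build, for each $\va \in S$, a single literal conjunction (a \emph{minterm}) that evaluates to $1$ exactly at the input $\va$ and to $0$ at every other Boolean input, and then to take the disjunction of these minterms over all of $S$.

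Concretely, for a point $\va = (a_1, \ldots, a_d) \in \{0,1\}^d$ I would define the minterm
\[
  m_{\va}(\vx) := \bigwedge_{i=1}^d \ell_i, \qquad \text{where } \ell_i = \begin{cases} x_i & \text{if } a_i = 1, \\ \lnot x_i & \text{if } a_i = 0. \end{cases}
\]
First I would check that $m_{\va}$ is a valid literal conjunction in the sense of Definition \ref{def:dnf}: each index $i \in [d]$ contributes exactly one literal, so the indices are distinct and the conjunction is satisfiable. Then I would verify the pointwise identity $m_{\va}(\vx) = \mathbb{I}(\vx = \va)$, which follows because the conjunction equals $1$ iff every literal is $1$, and $\ell_i = 1$ iff $x_i = a_i$; hence $m_{\va}(\vx)=1$ iff $x_i = a_i$ for all $i$, i.e. iff $\vx = \va$.

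With the minterms in hand I would set $h := \bigvee_{\va \in S} m_{\va}$ and argue $h = g^\text{B}$ pointwise. For an arbitrary input $\vy$: if $g^\text{B}(\vy) = 1$ then $\vy \in S$, so the term $m_{\vy}$ appears in the disjunction and evaluates to $1$, forcing $h(\vy) = 1$; if $g^\text{B}(\vy) = 0$ then $\vy \notin S$, and since each minterm in the disjunction is $1$ only at its own (distinct) point of $S$, all of them vanish at $\vy$, giving $h(\vy) = 0$. Because $S \neq \emptyset$, the disjunction is over a nonempty index set, so $h$ is a genuine disjunction of one or more conjunctions of one or more literals, i.e. a DNF formula, which establishes the claim.

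I do not expect a real obstacle here; the content is a routine verification of the minterm identity and the case analysis above. The only point that warrants care is the satisfiability hypothesis, which is exactly what guarantees $S \neq \emptyset$ and hence that the empty (vacuously false) disjunction is avoided. An alternative route would be an induction via Shannon expansion on the last variable, $g^\text{B}(\vx) = \bigl(x_d \land g^\text{B}(\vx_{<d}, 1)\bigr) \lor \bigl(\lnot x_d \land g^\text{B}(\vx_{<d}, 0)\bigr)$, distributing and discarding unsatisfiable branches; but the minterm construction is cleaner and makes transparent that the number of conjunctions equals $|S|$, a quantitative feature that is directly useful for the DNF-based constructions invoked later (e.g. in Theorem \ref{thm:ell_inf_net}).
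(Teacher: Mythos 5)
Your proof is correct. The paper states this Fact as a classical result from Boolean algebra and gives no proof of its own, so there is no in-paper argument to compare against; your sum-of-minterms construction is the standard one, and it verifies exactly the points that matter: each minterm is a legitimate literal conjunction under Definition \ref{def:dnf} (all $d$ indices distinct, hence satisfiable), the satisfiability hypothesis guarantees the disjunction ranges over a nonempty set, and the pointwise identity $\bigvee_{\va \in S} m_{\va} = g^\text{B}$ holds by the case analysis you give. Your closing quantitative observation, that the number of conjunctions equals $|S|$ and can therefore be exponential in $d$, is also consistent with the paper's own remark on representation efficiency following Theorem \ref{thm:ell_inf_net}, where precisely this blow-up for two-layer $\ell_\infty$-distance nets is noted.
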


\subsection{Symmetric Boolean functions}
\label{sec:symmetirc_Boolean}
Symmetric Boolean functions are an important class of Boolean functions defined in Definition \ref{def:stmmetric_boolean}. In this subsection, we will delve into this concept and give more concrete examples. First note that due to the symmetry property, the output of such functions can only depend on the number of ones (or zeros) in the Boolean input. For this reason, they are also known as \emph{Boolean counting functions}. 

As simple examples, the logical AND and logical OR are symmetric Boolean functions. Similarly, the NAND/NOR operations (i.e. $\lnot (\land_i x_i)$ and $\lnot (\lor_i x_i)$) are also symmetric. Another important example is the XOR function, denoted as $\oplus$. XOR outputs 1 iff the number of ones in the input is odd, i.e.
\begin{equation*}
    \textstyle x_1\oplus\cdots \oplus x_d=1 \quad \text{iff } (\sum_i x_i)\operatorname{mod} 2 = 1.
\end{equation*}
\textbf{Threshold functions}. If a symmetric Boolean function $g^\text{B}$ further satisfies a property called monotonicity, then we call it a threshold function. Formally, a function $f$ is \emph{monotonically increasing} if for all vectors $\vx$ and $\vx'$ such that $x_i\le x'_i$ ($\forall i\in[d]$), one has $f(\vx)\le f(\vy)$. It is easy to see that symmetric monotonic Boolean function must have the form: $g^\text{B}(\vx)=\mathbb I(\sum_i x_i\ge k)$ with integer $k$, which gives the name ``threshold function''. Depending on $k$, there are a total of $d+2$ different $d$-dimensional threshold functions (including two constant functions). In particular, for $k=1$ and $k=d$, we can recover the logical OR/AND functions, respectively. When $k=\lceil d/2 \rceil$, the resulting function is called the \emph{majority function}.

\section{Proof of Theorems in Section \ref{sec:theory}}
\label{sec:proof}
This section provides all the missing proofs in Section \ref{sec:theory}. For the convenience of reading, we will restate each theorem before giving a proof.

\subsection{Proof of Proposition \ref{thm:lipschitz} }
\label{sec:proof_lipschitz}
\begin{proposition}
\emph{(Certified Robustness of Lipschitz networks)} For a neural network $\vf:\mathbb R^n\to \mathbb R^K$ with Lipschitz constant $L$ under $\ell_p$-norm $\|\cdot\|_p$, define the resulting classifier $g$ as $g(\vx):=\arg\max_{k\in [K]} f_k(\vx)$ for an input $\vx$. Then $g$ is provably robust under perturbations $\|\boldsymbol\delta\|_p< \frac c L\operatorname{margin}(\vf(\vx))$, i.e.
\begin{equation}
    g(\vx+\boldsymbol\delta)=g(\vx) \quad\text{ for all } \|\boldsymbol\delta\|_p< \frac c L \operatorname{margin}(\vf(\vx)).
\end{equation}
Here $c$ is a constant depending only on the norm $\|\cdot\|_p$, which is $1/2$ for $\ell_\infty$-norm, and $\operatorname{margin}(\vf(\vx))$ is the margin between the largest and second largest output logits.
\end{proposition}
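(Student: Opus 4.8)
The plan is to reduce the claim to controlling how much the pairwise logit differences $f_{k^*}-f_j$ can change under the perturbation, where $k^*:=g(\vx)=\arg\max_k f_k(\vx)$ is the label predicted at the clean input $\vx$. First I would record that, by definition of the margin, $f_{k^*}(\vx)-f_j(\vx)\ge \operatorname{margin}(\vf(\vx))$ for every $j\ne k^*$, since $f_{k^*}(\vx)$ is the largest logit and every other logit is at most the second largest. To conclude $g(\vx+\boldsymbol\delta)=k^*$ it then suffices to show that each of these differences remains strictly positive after perturbing, since that keeps $k^*$ the unique argmax.

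Next I would bound the perturbation of a single difference. Writing $\vu:=\vf(\vx+\boldsymbol\delta)-\vf(\vx)$ and letting $q$ be the Hölder conjugate of $p$ (so $1/p+1/q=1$), the change in the $(k^*,j)$ difference is
\[
  \big|(f_{k^*}(\vx+\boldsymbol\delta)-f_j(\vx+\boldsymbol\delta))-(f_{k^*}(\vx)-f_j(\vx))\big|=|u_{k^*}-u_j|=|\langle \ve_{k^*}-\ve_j,\vu\rangle|.
\]
Applying Hölder's inequality gives $|u_{k^*}-u_j|\le \|\ve_{k^*}-\ve_j\|_q\,\|\vu\|_p=2^{1/q}\|\vu\|_p$, and the $L$-Lipschitz property (\ref{eq:lipschitz}) bounds $\|\vu\|_p\le L\|\boldsymbol\delta\|_p$. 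Hence each difference can decrease by at most $2^{1/q}L\|\boldsymbol\delta\|_p$ under the perturbation.

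Combining the two steps, for every $j\ne k^*$,
\[
  f_{k^*}(\vx+\boldsymbol\delta)-f_j(\vx+\boldsymbol\delta)\ge \operatorname{margin}(\vf(\vx))-2^{1/q}L\|\boldsymbol\delta\|_p,
\]
which is strictly positive precisely when $\|\boldsymbol\delta\|_p<2^{-1/q}\operatorname{margin}(\vf(\vx))/L$. Identifying the constant $c=2^{-1/q}=2^{1/p-1}=\sqrt[p]{2}/2$ then completes the argument, and the last step is to check the endpoint $p=\infty$ (so $q=1$), where $\|\ve_{k^*}-\ve_j\|_1=2$ and $c=1/2$, matching the stated value.

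The argument is short and its only delicate point is getting the constant right: the factor $2^{1/q}$ must come from $\|\ve_{k^*}-\ve_j\|_q$ via Hölder rather than from a crude triangle-inequality bound $|u_{k^*}|+|u_j|\le 2\|\vu\|_\infty\le 2\|\vu\|_p$, which would only yield $c=1/2$ (tight solely at $p=\infty$) and thus a suboptimal radius for finite $p$. I expect no genuine obstacle beyond this bookkeeping and the verification that the resulting $c$ reduces to $\sqrt[p]{2}/2$ across all $p\ge 1$.
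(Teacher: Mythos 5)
Your proof is correct, and it derives the paper's constant $c=\sqrt[p]{2}/2$ by a genuinely different (dual) route. The paper argues by contraposition: assuming the label flips under some $\widehat{\boldsymbol\delta}$, it restricts the output displacement to the two coordinates $(y,j)$ involved and explicitly minimizes $\left(|\widehat z_y-f_y(\vx)|^p+|\widehat z_j-f_j(\vx)|^p\right)^{1/p}$ over the half-space $\widehat z_y\le \widehat z_j$ (the minimizer is the midpoint of the two clean logits), which gives $L\|\widehat{\boldsymbol\delta}\|_p\ge\|\vf(\vx+\widehat{\boldsymbol\delta})-\vf(\vx)\|_p\ge \tfrac{\sqrt[p]{2}}{2}\operatorname{margin}(\vf(\vx))$. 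You instead work directly: with $\vu=\vf(\vx+\boldsymbol\delta)-\vf(\vx)$, each pairwise difference changes by $|\langle \ve_{k^*}-\ve_j,\vu\rangle|\le \|\ve_{k^*}-\ve_j\|_q\,\|\vu\|_p=2^{1/q}L\|\boldsymbol\delta\|_p$ by H\"older, so all differences stay strictly positive within the stated radius. These are the primal and dual computations of the same quantity --- the $\ell_p$-distance from $\vf(\vx)$ to the decision boundary $\{z_{k^*}=z_j\}$ --- so the constants necessarily agree ($2^{-1/q}=2^{1/p-1}=\sqrt[p]{2}/2$, equal to $1/2$ at $p=\infty$). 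The paper's primal form pays off in its follow-up remark on tightness, since the equality case of its minimization is explicit; your dual form is shorter, correctly flags that the cruder bound $|u_{k^*}|+|u_j|\le 2\|\vu\|_p$ would lose the constant for finite $p$, and generalizes immediately to arbitrary norms, where $c$ becomes the reciprocal of the dual norm of $\ve_{k^*}-\ve_j$.
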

\begin{proof}
Let $g(\vx)=y$. Suppose there exists a $\widehat{\boldsymbol\delta}$ such that $g(\vx+\widehat{\boldsymbol\delta})\neq g(\vx)$, and $f_j(\vx+\widehat{\boldsymbol\delta})\ge f_y(\vx+\widehat{\boldsymbol\delta})$ for some $j\neq y$. We will prove that $\|\widehat{\boldsymbol\delta}\|_p\ge \frac c L \operatorname{margin}(\vf(\vx))$ where $c$ only depends on the norm.

Define $\widehat{\vz}=f(\vx+\widehat{\boldsymbol\delta})$, then $\widehat{z}_y\le \widehat{z}_j$. We first bound the difference between outputs $\vz$ and $\vf(\vx)$ as follows:
\begin{align}
    \label{eq:proof_1}
    \|\widehat{\vz}-\vf(\vx)\|_p
    &\ge \|(\widehat{z}_y,\widehat{z}_j)^{\mathrm{T}} - ([f_y(\vx),f_j(\vx))^{\mathrm{T}}\|_p\\
    \label{eq:proof_2}
    &=(|\widehat{z}_y-f_y(\vx)|^p+|\widehat{z}_j-f_j(\vx)|^p)^{1/p}.
\end{align}
In (\ref{eq:proof_1}) we use the fact that zero out elements for a vector can only decrease its $\ell_p$-norm. Now consider the following optimization problem:
\begin{equation}
\begin{aligned}
    \min_{\widehat{\vz}} |\widehat{z}_y-f_y(\vx)|^p+|\widehat{z}_j-f_j(\vx)|^p\quad 
    \text{s.t.}\ \widehat{z}_y\le \widehat{z}_j.
\end{aligned}
\end{equation}
It is easy to prove that the minimum is attained when $\widehat{z}_y= \widehat{z}_j=(f_y(\vx)+f_j(\vx))/2$. Substituting the assignment into (\ref{eq:proof_2}) yields
\begin{equation}
    \|\widehat{\vz}-\vf(\vx)\|_p\ge \frac {\sqrt[p] 2} 2 (f_y(\vx)-f_j(\vx))
\end{equation}
which can be further lower bounded by $\frac {\sqrt[p] 2} 2 \operatorname{margin}(\vf(\vx))$ based on the definition of margin. Finally, due to the Lipschitz property,
\begin{equation}
    \|\widehat{\vz}-\vf(\vx)\|_p\le L\|\boldsymbol\delta\|_p.
\end{equation}
Therefore $\|\boldsymbol\delta\|_p\ge \frac {\sqrt[p] 2} {2L} \operatorname{margin}(\vf(\vx))$, which concludes the proof.
\end{proof}
\begin{remark}
All inequalities in the above proof is tight by choosing a $\widehat{\boldsymbol\delta}$ with $\|\widehat{\boldsymbol\delta}\|_p=\frac {\sqrt[p] 2} {2L} \operatorname{margin}(\vf(\vx))$ and a Lipschitz function $\vf$ such that $\widehat{z}_y=f_y(\vx)-\operatorname{margin}(\vf(\vx))/2$, $\widehat{z}_i=f_i(\vx)+\operatorname{margin}(\vf(\vx))/2$ and $\widehat{z}_k=f_k(\vx)$ for all $k\neq y$ and $k\neq i$. This implies that the certified guarantee in the above proposition is tight if only the Lipschitz property is known.
\end{remark}

\subsection{Proof of Proposition \ref{thm:nearest_neighbor}}
\label{sec:proof_nearest_neighbor}
\begin{proposition}
For any Boolean dataset $\mathcal D=\{(\vx^{(i)},y^{(i)})\}_{i=1}^n$, there exists a classifier $\hat g:\mathbb R^d\to \{0,1\}$ induced by a 1-Lipschitz mapping $\hat \vf:\mathbb R^d\to \mathbb R^2$ under $\ell_\infty$-norm, such that $\hat g$ can fit the whole dataset with $\operatorname{margin}(\hat \vf(\vx^{(i)}))=1$ $\forall i\in[n]$, thus achieving an optimal certified $\ell_\infty$ robust radius of $1/2$.
\end{proposition}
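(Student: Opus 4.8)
The plan is to realize the classifier $\hat g$ as a \emph{nearest-neighbor classifier} measured in the $\ell_\infty$-metric, and to exhibit an explicit 1-Lipschitz score function $\hat\vf$ that induces it. For each label $c\in\{0,1\}$ let $S_c=\{\vx^{(i)}:y^{(i)}=c\}$ be the set of training inputs carrying that label, and define the two output coordinates as the negative distance to the nearest same-label point,
\begin{equation*}
    \hat f_c(\vx)=-\min_{\vx^{(i)}\in S_c}\|\vx-\vx^{(i)}\|_\infty ,\qquad c\in\{0,1\},
\end{equation*}
setting $\hat f_c\equiv -1$ in the degenerate case $S_c=\emptyset$. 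The induced classifier $\hat g(\vx)=\arg\max_c \hat f_c(\vx)$ then assigns $\vx$ to the label of its $\ell_\infty$-nearest training point.

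Next I would check that $\hat\vf=(\hat f_0,\hat f_1)$ is 1-Lipschitz w.r.t. the $\ell_\infty$-norm. Each single-point distance $\vx\mapsto\|\vx-\vx^{(i)}\|_\infty$ is 1-Lipschitz, a pointwise minimum of 1-Lipschitz functions is 1-Lipschitz, and negation preserves the Lipschitz constant, so each coordinate $\hat f_c$ is 1-Lipschitz. For the vector output, the $\ell_\infty$-norm of $\hat\vf(\vx_1)-\hat\vf(\vx_2)$ is the maximum of the two coordinatewise differences, hence bounded by $\|\vx_1-\vx_2\|_\infty$, which gives 1-Lipschitzness of $\hat\vf$.

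The crux is the margin computation, which rests on one clean geometric fact: any two \emph{distinct} vertices of $\{0,1\}^d$ are at $\ell_\infty$-distance exactly $1$ (they differ in at least one coordinate, and every coordinate difference lies in $\{-1,0,1\}$). Fix a data point $\vx^{(i)}$ with label $c=y^{(i)}$. Since $\vx^{(i)}\in S_c$, we get $\hat f_c(\vx^{(i)})=0$; and since the labels partition the Boolean inputs, every point of $S_{1-c}$ is a Boolean vector distinct from $\vx^{(i)}$, so $\min_{\vx^{(j)}\in S_{1-c}}\|\vx^{(i)}-\vx^{(j)}\|_\infty=1$ and $\hat f_{1-c}(\vx^{(i)})=-1$ (this also holds in the degenerate case). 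Hence $\hat g(\vx^{(i)})=y^{(i)}$ and $\operatorname{margin}(\hat\vf(\vx^{(i)}))=0-(-1)=1$ for every $i$. Applying Proposition \ref{thm:lipschitz} with $L=1$ and $\ell_\infty$-constant $1/2$ yields a certified $\ell_\infty$ radius of $(1/2)\cdot 1=1/2$.

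There is no deep obstacle here; identifying the construction is the whole point, and the only thing to be careful about is that the margin comes out \emph{exactly} $1$ rather than merely positive — which is precisely why it matters that the same-label distance vanishes while the cross-label distance is pinned to the value $1$ by the hypercube geometry. The degenerate single-class case and the absence of ties at data points (the winning score is $0$, strictly above $-1$) are the only edge cases worth a sentence.
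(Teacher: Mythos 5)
Your proposal is correct and matches the paper's own proof essentially verbatim: the same $\ell_\infty$ nearest-neighbor construction $\hat f_c(\vx)=-\min_{\vx^{(i)}\in S_c}\|\vx-\vx^{(i)}\|_\infty$, the same margin computation from the fact that distinct hypercube vertices are at $\ell_\infty$-distance exactly $1$, and the same Lipschitzness argument via the minimum of 1-Lipschitz distance functions. Your explicit treatment of the single-class degenerate case (where the paper instead assumes WLOG both classes are nonempty) is a minor tidiness improvement, not a different approach.
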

\begin{proof}
Without loss of generality, assume for each class (either 0 or 1), there is at least one sample. Define the nearest neighbor classifier $\hat g:\mathbb R^d\to \{0,1\}$ as $\hat g(x)=\arg\max_{k\in \{0,1\}} \hat f_k(\vx)$ where $$\hat f_k(\vx)=-\min\{\|\vx-\vx^{(i)}\|_\infty:(\vx^{(i)},k)\in \mathcal D\},\quad k\in \{0,1\}.$$
Then for any $(\vx^{(i)}, y^{(i)}) \in \mathcal D$, $f_{y^{(i)}}(\vx)=0$ and $f_{1-y^{(i)}}(\vx)=1$ because $\|\vx^{(i)}-\vx^{(j)}\|_\infty=1$ for all $i\neq j$. Therefore, the classifier $g$ can correctly classify the whole dataset with margin $\operatorname{margin}(\hat \vf(\vx^{(i)}))=1\ (\forall i\in[n])$. The left thing is to prove the Lipschitzness of $\vf$, which is equivalent to proving that each function $\hat f_k(\vx)$ is 1-Lipschitz w.r.t. $\ell_\infty$-norm. This simply follows from the fact that $\hat f_k$ is the composition of two 1-Lipschitz functions: the $\ell_\infty$-distance function $\|\vx-\vx^{(i)}\|_\infty$ and the minimum function. Since the composition of 1-Lipschitz functions is still 1-Lipschitz, we have concluded the proof.
\end{proof}

\subsection{Proof of Theorem \ref{thm:discrete}}
\label{sec:proof_discrete}
We first present a core lemma which will be used to prove Theorem \ref{thm:discrete}.
\begin{lemma}
\label{thm:non_expressive_lemma}
Let $f(\vx)=\sigma(\vw^{\mathrm{T}}\vx+b)$ be a $d$-dimensional function where $\|\vw\|_1\le 1$ and $\sigma$ is a 1-Lipschitz scalar function, and let $\{(\vu^{(i)},\vv^{(i)})\in \mathbb R^d\times \mathbb R^d\}_{i=1}^n$ be $n$ pairs of inputs. Then
\begin{equation*}
    \sum_{i=1}^n \left|f(\vu^{(i)})-f(\vv^{(i)})\right| \le\left\|\sum_{i=1}^n \left|\vu^{(i)}-\vv^{(i)}\right|\right\|_\infty .
\end{equation*}
\end{lemma}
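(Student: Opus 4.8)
The plan is to reduce the nonlinear statement to a purely linear one by peeling off the activation $\sigma$, and then exploit the constraint $\|\vw\|_1\le 1$ together with the nonnegativity of the aggregated absolute differences. First I would invoke the $1$-Lipschitzness of $\sigma$ to bound each summand individually: since $f(\vx)=\sigma(\vw^{\mathrm{T}}\vx+b)$,
\[
    |f(\vu^{(i)})-f(\vv^{(i)})|\le |\vw^{\mathrm{T}}\vu^{(i)}-\vw^{\mathrm{T}}\vv^{(i)}|=|\vw^{\mathrm{T}}\vd^{(i)}|,
\]
where $\vd^{(i)}:=\vu^{(i)}-\vv^{(i)}$. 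This eliminates both $\sigma$ and the bias $b$ at once, so the claim reduces to showing $\sum_i |\vw^{\mathrm{T}}\vd^{(i)}|\le \big\|\sum_i |\vd^{(i)}|\big\|_\infty$.

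Next I would apply the triangle inequality coordinatewise, $|\vw^{\mathrm{T}}\vd^{(i)}|\le \sum_j |w_j|\,|d^{(i)}_j|$, and then swap the order of the two summations to obtain
\[
    \sum_i |\vw^{\mathrm{T}}\vd^{(i)}|\le \sum_j |w_j|\Big(\sum_i |d^{(i)}_j|\Big)=\sum_j |w_j|\, s_j,
\]
where $\vs:=\sum_i |\vd^{(i)}|$ is a vector with \emph{nonnegative} entries. The final step is a weighted-average argument: because each $s_j\ge 0$, we have $\sum_j |w_j|\, s_j\le \big(\sum_j |w_j|\big)\max_j s_j=\|\vw\|_1\,\|\vs\|_\infty\le \|\vs\|_\infty$, using $\|\vw\|_1\le 1$ and the fact that $\|\vs\|_\infty=\max_j s_j$ for a nonnegative vector. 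Chaining these three bounds yields the lemma.

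I do not expect a serious obstacle, as the lemma is elementary; the only point genuinely requiring care is the nonnegativity of $\vs$. It is precisely the element-wise absolute values inside the sum defining $\vs$ that turn the $\ell_1/\ell_\infty$ Hölder estimate into the clean factor $\|\vw\|_1\le 1$: without them, contributions across different $i$ could cancel and the $\ell_\infty$-norm would no longer dominate the left-hand side. The strength of the statement therefore lies not in its proof but in its form — it controls $n$ separate differences by a single coordinatewise-aggregated quantity, which is exactly what is needed to drive the counting/averaging argument in Theorem \ref{thm:discrete}.
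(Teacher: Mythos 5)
Your proof is correct and follows essentially the same route as the paper's: bound each term via the $1$-Lipschitzness of $\sigma$, pass to $|\vw|^{\mathrm{T}}|\vu^{(i)}-\vv^{(i)}|$, sum over $i$, and finish with the $\ell_1/\ell_\infty$ H\"older estimate together with $\|\vw\|_1\le 1$. The only cosmetic difference is that you unfold H\"older's inequality as an explicit weighted-average argument on the nonnegative vector $\vs$, whereas the paper invokes it by name.
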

\begin{proof}
First note that 
\begin{align}
    \notag
    \left|f(\vu^{(i)})-f(\vv^{(i)})\right|&=\left|\sigma(\vw^{\mathrm{T}}\vu^{(i)}+b)-\sigma(\vw^{\mathrm{T}}\vv^{(i)}+b)\right|\\
    \label{eq:proof_sigma_lip}
    &\le \left|\vw^{\mathrm{T}}(\vu^{(i)}-\vv^{(i)})\right|\\
    &\le |\vw|^{\mathrm{T}}\left|\vu^{(i)}-\vv^{(i)}\right|.
\end{align}
where (\ref{eq:proof_sigma_lip}) uses the Lipschitz property of $\sigma$. Therefore
\begin{align}
    \sum_{i=1}^n \left|f(\vu^{(i)})-f(\vv^{(i)})\right|
    \le |\vw|^{\mathrm{T}}\sum_{i=1}^n\left|\vu^{(i)}-\vv^{(i)}\right|
    \label{eq:proof_holder}
    \le \|\vw\|_1\left\|\sum_{i=1}^n\left|\vu^{(i)}-\vv^{(i)}\right|\right\|_\infty,
\end{align}
where the last inequiality in (\ref{eq:proof_holder}) follows by using the Hölder's inequality. Finally, observing $\|\vw\|_1\le 1$, we thus conclude the proof.
\end{proof}
\begin{corollary}
\label{thm:non_expressive_corollary}
Let $\vf:\mathbb R^d\to\mathbb R^K$ be a standard Lipschitz network defined in (\ref{eq:standatd_lipschitz}), and let $\{(\vu^{(i)},\vv^{(i)})\in \mathbb R^d\times \mathbb R^d\}_{i=1}^n$ be $n$ pairs of inputs. Then
\begin{equation*}
    \left\|\sum_{i=1}^n \left|\vf(\vu^{(i)})-\vf(\vv^{(i)})\right|\right\|_\infty \le\left\|\sum_{i=1}^n \left|\vu^{(i)}-\vv^{(i)}\right|\right\|_\infty .
\end{equation*}
\end{corollary}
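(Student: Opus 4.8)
The plan is to lift the single-neuron estimate of Lemma \ref{thm:non_expressive_lemma} to the entire network by a layer-by-layer induction. Write $\vx^{(l)}(\vx)$ for the activation vector of the $l$-th layer when the input to the network is $\vx$, so that $\vx^{(0)}(\vx)=\vx$ and $\vf(\vx)=\vx^{(M)}(\vx)$. I would introduce the discrepancy functional
\[
    A_l := \left\|\sum_{i=1}^n \left|\vx^{(l)}(\vu^{(i)}) - \vx^{(l)}(\vv^{(i)})\right|\right\|_\infty.
\]
The assertion to be proved is exactly $A_M \le A_0$, so it suffices to show that no single layer can increase this quantity, i.e. $A_l \le A_{l-1}$ for every $l\in[M]$; chaining these inequalities then yields the result.

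For the inductive step I would fix a layer $l$ and an output coordinate $k$. The $k$-th neuron computes $x_k^{(l)} = \sigma^{(l)}\big((\mathbf W^{(l)}_{k,:})^{\mathrm{T}}\vx^{(l-1)} + b_k^{(l)}\big)$, and the weight constraint $\|\mathbf W^{(l)}\|_\infty\le 1$ means precisely that each row satisfies $\|\mathbf W^{(l)}_{k,:}\|_1\le 1$. Hence, viewed as a scalar function of $\vx^{(l-1)}$, this neuron satisfies the hypotheses of Lemma \ref{thm:non_expressive_lemma}. Applying the lemma to the $n$ input pairs $\big(\vx^{(l-1)}(\vu^{(i)}),\,\vx^{(l-1)}(\vv^{(i)})\big)$ gives
\[
    \sum_{i=1}^n \left|x_k^{(l)}(\vu^{(i)}) - x_k^{(l)}(\vv^{(i)})\right| \le \left\|\sum_{i=1}^n \left|\vx^{(l-1)}(\vu^{(i)}) - \vx^{(l-1)}(\vv^{(i)})\right|\right\|_\infty = A_{l-1}.
\]

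Finally I would take the maximum over the output coordinate $k$. Since the right-hand bound $A_{l-1}$ is independent of $k$, every coordinate is controlled by the same quantity, and maximizing the left-hand side over $k$ produces exactly $A_l$; this gives $A_l\le A_{l-1}$, and iterating over $l=1,\dots,M$ yields $A_M\le A_{M-1}\le\cdots\le A_0$, as desired. The step I would be most careful about is precisely this interchange at the end: it is the fact that the per-coordinate bounds are all dominated by the single layer-$(l-1)$ discrepancy that allows the outer $\ell_\infty$-norm (a maximum over coordinates) to pass through a layer without loss. I would also note, to apply the lemma uniformly to all layers, that the identity activation typically used in the final layer is itself $1$-Lipschitz, so no layer is exceptional.
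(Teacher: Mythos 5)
Your proof is correct and follows essentially the same route as the paper's: both apply the single-neuron bound of Lemma \ref{thm:non_expressive_lemma} to each neuron (using that $\|\mathbf W^{(l)}\|_\infty\le 1$ gives $\ell_1$-bounded rows), take the maximum over output coordinates to obtain the layer-wise recursion, and chain it across layers. The point you flag as delicate --- that the per-coordinate bounds share the common majorant $A_{l-1}$, so the outer max passes through --- is exactly the step the paper invokes as ``the definition of the $\ell_\infty$-norm.''
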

\begin{proof}
Denote $\vf^{(l)}(\vz)$ be the output of the $l$-th layer for a standard Lipschitz network $\vf$ given input $\vz$. Based on the definition in (\ref{eq:standatd_lipschitz}), for the $l$-th layer, $\sigma^{(l)}$ is 1-Lipschitz and $\|\mathbf W^{(l)}\|_\infty\le 1$, which is equivalent to $\|\mathbf W^{(l)}_{j,:}\|_1\le 1$ for the weight vector of each neuron $j$ in layer $l$. One can thus apply Lemma \ref{thm:non_expressive_lemma} for the $j$-th neuron, given arbitrary inputs $\{\vf^{(l-1)}(\vu^{(i)})\}_{i=1}^n$ and $\{\vf^{(l-1)}(\vv^{(i)})\}_{i=1}^n$:
\begin{equation}
    \sum_{i=1}^n \left|f_j^{(l)}(\vu^{(i)})-f_j^{(l)}(\vv^{(i)})\right| \le\left\|\sum_{i=1}^n \left|\vf^{(l-1)}(\vu^{(i)})-\vf^{(l-1)}(\vv^{(i)})\right|\right\|_\infty.
\end{equation}
This is equivalence to the following inequality due to the definition of $\ell_\infty$-norm:
\begin{equation}
\label{eq:proof_non_expressive_corollary}
    \left\|\sum_{i=1}^n \left|\vf^{(l)}(\vu^{(i)})-\vf^{(l)}(\vv^{(i)})\right|\right\|_\infty \le\left\|\sum_{i=1}^n \left|\vf^{(l-1)}(\vu^{(i)})-\vf^{(l-1)}(\vv^{(i)})\right|\right\|_\infty.
\end{equation}
which is a recursive formula. Applying (\ref{eq:proof_non_expressive_corollary}) recursively from the last layer to the first yields the desired result.
\end{proof}

We are now ready for the proof of Theorem \ref{thm:discrete}, which is restated below.
\begin{theorem}
For any non-constant symmetric Boolean function $g^\text{B}:\{0,1\}^d\to \{0,1\}$, there exists a Boolean dataset with labels $y^{(i)}=g^\text{B}(\vx^{(i)})$, such that no standard Lipschitz network can achieve a certified $\ell_\infty$ robust radius larger than $1/2d$ on the dataset.
\end{theorem}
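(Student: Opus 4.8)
The plan is to convert the certified-radius statement into one about output margins, and then contradict a margin lower bound using the averaging inequality of Corollary~\ref{thm:non_expressive_corollary}. By Proposition~\ref{thm:lipschitz} (with $L=1$ and $c=1/2$), a standard Lipschitz network achieves certified radius larger than $1/2d$ at a data point exactly when it classifies that point correctly with $\operatorname{margin}>1/d$. So I will assume, toward a contradiction, that some standard Lipschitz network $\vf:\mathbb R^d\to\mathbb R^2$ fits the whole (yet-to-be-constructed) dataset with margin strictly larger than $1/d$ at every point, and derive $2<2$.

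The dataset is built from a single ``jump'' of $g^\text{B}$. Since $g^\text{B}$ is symmetric and non-constant, its value depends only on the Hamming weight, so there is a level $t^*\in\{0,\dots,d-1\}$ with $g^\text{B}(\text{weight }t^*)\ne g^\text{B}(\text{weight }t^*{+}1)$. Indexing coordinates by $\mathbb Z/d\mathbb Z$, for each $i\in\{0,\dots,d-1\}$ I take $\vu^{(i)}$ to be the block of $t^*$ consecutive ones in positions $i{+}1,\dots,i{+}t^*$ and set $\vv^{(i)}=\vu^{(i)}+\ve_i$ (the same block extended by the single coordinate $i$). Each pair differs in exactly the one coordinate $i$, these flipped coordinates are distinct across the $d$ pairs, so $\sum_{i}\lvert\vu^{(i)}-\vv^{(i)}\rvert=\vone$; moreover $\vu^{(i)},\vv^{(i)}$ have weights $t^*,t^*{+}1$, hence opposite labels. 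The dataset is the union of all these points, labelled by $g^\text{B}$, so there is no labelling conflict even if points coincide. The boundary cases $t^*=0$ and $t^*=d-1$ degenerate gracefully (to the pairs $(\vzero,\ve_i)$ and (all-ones-except-$i$, all-ones), respectively).

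Now opposite labels plus the margin assumption force a large output separation on each pair. Writing $a=g^\text{B}(\vu^{(i)})$, correctness with margin $>1/d$ gives $f_a(\vu^{(i)})-f_{1-a}(\vu^{(i)})>1/d$ and $f_{1-a}(\vv^{(i)})-f_a(\vv^{(i)})>1/d$; adding and regrouping yields $\lvert f_0(\vu^{(i)})-f_0(\vv^{(i)})\rvert+\lvert f_1(\vu^{(i)})-f_1(\vv^{(i)})\rvert>2/d$. Summing over the $d$ pairs, the left-hand side is $S_0+S_1$, where $S_k=\sum_i\lvert f_k(\vu^{(i)})-f_k(\vv^{(i)})\rvert$ are the two components of $\sum_i\lvert\vf(\vu^{(i)})-\vf(\vv^{(i)})\rvert$. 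Corollary~\ref{thm:non_expressive_corollary} bounds each component by $\lVert\sum_i\lvert\vu^{(i)}-\vv^{(i)}\rvert\rVert_\infty=\lVert\vone\rVert_\infty=1$, so $S_0+S_1\le 2$, whereas the margin bound gives $S_0+S_1>2d\cdot(1/d)=2$, the desired contradiction. Hence no standard Lipschitz network can keep every margin above $1/d$, i.e.\ at some data point the certified radius is at most $1/2d$.

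The only genuine difficulty is the combinatorial construction: I need $d$ opposite-label pairs whose single flipped coordinates are all distinct (so that the aggregate difference vector is exactly $\vone$, with $\ell_\infty$-norm $1$), uniformly over the arbitrary jump location $t^*$; the rotation-of-consecutive-blocks family accomplishes this. Everything else is bookkeeping, and the key numerical coincidence is that the factor $2$ generated by the two logits in the margin step is exactly cancelled by the two coordinates whose sum $S_0+S_1$ is controlled by the single $\ell_\infty$ bound. I would also note that this same dataset (of size $2d$) shows the bound $1/2d$ is tight and matches the $\mathcal O(d)$ sample-complexity claim.
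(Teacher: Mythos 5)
Your proof is correct and reaches the same conclusion by a genuinely different combinatorial route. Both arguments share the same two pillars: the averaging inequality of Corollary~\ref{thm:non_expressive_corollary} applied to pairs of adjacent-Hamming-weight Boolean points with opposite labels, and the margin bookkeeping that turns correct classification of both points of a pair with margin $>1/d$ into $|f_0(\vu)-f_0(\vv)|+|f_1(\vu)-f_1(\vv)|>2/d$. The difference lies in the pair family and in how the contradiction is extracted. The paper takes $\mathcal T$ to be \emph{all} pairs between the adjacent level sets $\mathcal S_p$ and $\mathcal S_q$ differing in one coordinate, uses symmetry to compute $\bigl\|\sum_{(\vu,\vv)\in\mathcal T}|\vu-\vv|\bigr\|_\infty=|\mathcal T|/d$, and then must invoke the pigeonhole principle to localize a single pair with small output variation before running the margin argument; its dataset $\mathcal S_p\cup\mathcal S_q$ is exponentially large when $p\approx d/2$. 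You instead choose $d$ pairs (the rotated-block family) whose difference vectors are exactly the $d$ distinct unit vectors, so the aggregate difference is $\vone$, the corollary caps each output coordinate's summed variation at $1$, and summing the per-pair margin inequalities over all $d$ pairs contradicts this cap directly---no pigeonhole needed. Your route buys a dataset of size at most $2d$ while retaining the tight constant $1/2d$, which is strictly stronger than the paper's own sample-complexity statement (Corollary~\ref{thm:discrete_efficient} obtains $d+1$ points but relaxes the radius bound to $1/d$); in effect you instantiate the remark following the paper's proof with the optimal constant. The paper's construction, for its part, is the canonical symmetric one, and its pigeonhole-localized ``bad pair'' is exactly the form reused in the proof of Theorem~\ref{thm:function_approximation} (though your family, which satisfies the same two properties with $|\mathcal T|=d$, would serve there too). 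One small caveat: your closing claim that this same dataset ``shows the bound $1/2d$ is tight'' is loose---tightness means some standard Lipschitz network \emph{achieves} radius $1/2d$ on it, which is the content of Proposition~\ref{thm:discrete_tight}, not something your construction establishes.
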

\begin{proof}
We first introduce some notations. Denote $\mathcal S_k=\left\{\vx\in\{0,1\}^d:\sum_i x_i = k\right\}$ as the set containing all Boolean vectors with $k$ ones. It follows that $\{\mathcal S_k\}_{k=0}^d$ is a partition of the set of $d$-dimensional Boolean vectors $\{0,1\}^d$. For a Boolean function $g$, define the positive set and the negative set as $\mathcal S^+=\left\{\vx\in\{0,1\}^d: g^\text{B}(\vx)=1\right\}$ and $\mathcal S^-=\left\{\vx\in\{0,1\}^d: g^\text{B}(\vx)=0\right\}$, respectively.

Recall that the output of a symmetric Boolean function depends only on the number of ones in the input (Appendix \ref{sec:symmetirc_Boolean}). Therefore, instead of using the positive/negative set, we can use the following two sets to equivalently characterize a symmetric Boolean function $g^\text{B}$:
\begin{equation*}
    \mathcal N^+=\left\{k\in \{0,\cdots,k\}:\mathcal S_k\cup \mathcal S^+\neq \emptyset\right\},\quad 
    \mathcal N^-=\left\{k\in \{0,\cdots,k\}:\mathcal S_k\cup \mathcal S^-\neq \emptyset\right\}.
\end{equation*}
It turns out that $\mathcal S^+=\bigcup_{k\in \mathcal N^+} \mathcal S_k$, $\mathcal S^-=\bigcup_{k\in \mathcal N^-} \mathcal S_k$, and $\mathcal N^+\cup \mathcal N^-=\{0,\cdots,d\}$.

Since $g^\text{B}$ is non-constant, both $\mathcal N^+$ and $\mathcal N^-$ are non-empty. Therefore, there must exist two adjacent integers $p,q\in\{0,\cdots,d\}$, $p-q=1$, such that either $(p\in \mathcal N^+$, $q\in\mathcal N^-)$ or $(p\in \mathcal N^-$, $q\in\mathcal N^+)$. Consider the following set of Boolean vector pairs:
\begin{equation}
\label{eq:proof_setT}
    \mathcal T=\left\{(\vu,\vv):\vu\in\mathcal S_p,\vv\in\mathcal S_q,\|\vu-\vv\|_1=1\right\}.
\end{equation}
The set $\mathcal T$ satisfies the following three properties:
\begin{itemize}[topsep=0pt,leftmargin=30pt]
\setlength{\itemsep}{0pt}
    \item $g^\text{B}(\vu)\neq g^\text{B}(\vv)$ for all $(\vu,\vv)\in \mathcal T$;
    \item The size of the set can be calculated by $$\textstyle|\mathcal T|=|\mathcal S_p|p=|\mathcal S_q|(d-q)=\frac {d!}{q!(d-p)!},$$ by using $|\mathcal S_k|=\binom{d}{k}$;
    \item $\left\|\sum_{(\vu,\vv)\in \mathcal T}|\vu-\vv|\right\|_\infty=\frac {(d-1)!}{q!(d-p)!}=\frac {|\mathcal T|} d$. This can be seen from the symmetry of the vector $\sum_{(\vu,\vv)\in \mathcal T}|\vu-\vv|$ and the fact that all vectors $|\vu-\vv|$ are unit (one-hot).
\end{itemize}
Based on the last property, given any standard Lischitz network $\vf$, one can apply Corollary \ref{thm:non_expressive_corollary} on $\mathcal T$, which yields
\begin{equation}
\label{eq:proof_discrete_0}
    \left\|\sum_{(\vu,\vv)\in \mathcal T}\left|\vf(\vu)-\vf(\vv)\right|\right\|_\infty \le \frac {|\mathcal T|} d.
\end{equation}
Using the definition of $\ell_\infty$-norm and noting that the output of function $\vf$ is 2-dimensional, (\ref{eq:proof_discrete_0}) implies that
\begin{equation}
    \sum_{k\in\{0,1\}}\sum_{(\vu,\vv)\in \mathcal T}\left|f_k(\vu)-f_k(\vv)\right| \le \frac {2|\mathcal T|} d.
\end{equation}
By the Pigeon Hole principle, there must exist a pair of points $(\vu,\vv)\in \mathcal T$, such that
\begin{equation}
\label{eq:proof_discrete_2}
    \sum_{k\in\{0,1\}}\left|f_k(\vu)-f_k(\vv)\right| \le \frac 2 d.
\end{equation}
Now it is time to construct the dataset. Define a Boolean dataset $\mathcal D=\{(\vz,g^\text{B}(\vz)):\vz\in\mathcal S_p\cup \mathcal S_q\}$. Then there must exist two data $(\vu,g^\text{B}(\vu))$ and $(\vv,g^\text{B}(\vv))$ with different labels (i.e. $g^\text{B}(\vu)\neq g^\text{B}(\vv)$) such that (\ref{eq:proof_discrete_2}) holds. Let us calculate the output margin for the two inputs $\vu$ and $\vv$. It can be obtained that
\begin{equation}
\label{eq:proof_discrete_1}
\begin{aligned}
    |(f_0(\vu)-f_1(\vu))+(f_1(\vv)-f_0(\vv))|&=|(f_0(\vu)-f_0(\vv))+(f_1(\vv)-f_1(\vu))|\\
    &\le \sum_{k\in\{0,1\}}\left|f_k(\vv)-f_k(\vu)\right|\le \frac 2 d.
\end{aligned}
\end{equation}
We will prove that it is impossible to classify both $\vu$ and $\vv$ with a margin greater than $1/d$. Otherwise, since $\vu$ and $\vv$ are in different classes, $f_0(\vu)-f_1(\vu)$ and $f_1(\vv)-f_0(\vv)$ will have the same sign, and both $|f_0(\vu)-f_1(\vu)|$ and $|f_1(\vv)-f_0(\vv)|$ will be greater than $1/d$, which contradicts (\ref{eq:proof_discrete_1}). Therefore, the output margin must be no greater than $1/d$ for either input $\vu$ or $\vv$, implying that the certified $\ell_\infty$ robust radius cannot exceed $1/2d$ (based on Proposition \ref{thm:lipschitz}) and concluding the proof.
\end{proof}

\begin{remark}
The symmetry assumption of the Boolean function $g^\text{B}$ may be relaxed, and the conclusion is still correct as long as there exists a set $\mathcal T\subset\{(\vu,\vv):\vu,\vv\in\{0,1\}^d\}$ satisfying that $(\mathrm{i})$ $g^\text{B}(\vu)\neq g^\text{B}(\vv)$ $\forall (\vu,\vv)\in \mathcal T$ and $(\mathrm{ii})$ $\left\|\sum_{(\vu,\vv)\in \mathcal T}|\vu-\vv|\right\|_\infty=\mathcal O(|\mathcal T|/d)$.
\end{remark}

We now prove that the bound of certified radius is in fact \emph{tight}, in the sense that there exists a standard Lipschitz network with a suitable activation function that achieves a certified $\ell_\infty$ radius of $1/2d$.

\begin{proposition}
\label{thm:discrete_tight}
For any $d$-dimensional symmetric Boolean dataset, there exists a standard Lipschitz network that correctly classifies the whole dataset and achieves a certified $\ell_\infty$ robust radius of $1/2d$.
\end{proposition}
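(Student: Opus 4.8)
The plan is to construct the network explicitly so that it realizes the maximal margin $1/d$ permitted by Theorem~\ref{thm:discrete}, and then to read off the radius from Proposition~\ref{thm:lipschitz} with $L=1$ and $c=1/2$. Since $g^\text{B}$ is symmetric, its value depends only on the number of ones in the input, so I would write $g^\text{B}(\vx)=h(s)$ with $s=\sum_{i=1}^d x_i$ and $h:\{0,\dots,d\}\to\{0,1\}$. The first layer should compute the scalar $t=\frac1d\sum_{i=1}^d x_i$ using the weight row $\frac1d\vone^{\mathrm T}$; because $\|\frac1d\vone^{\mathrm T}\|_\infty=1$, this map is $1$-Lipschitz w.r.t. $\ell_\infty$, and a Boolean input with $s$ ones lands on the grid point $t=s/d\in\{0,1/d,\dots,1\}$.

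The crux of the argument, and what I expect to be the main obstacle, is designing a bespoke $1$-Lipschitz scalar activation $\phi:\R\to\R$ that separates the two label regions at exactly the critical slope. I would set $\phi(s/d)=\frac{2h(s)-1}{2d}\in\{-\tfrac1{2d},+\tfrac1{2d}\}$ at each grid point, interpolate piecewise-linearly between consecutive grid points, and extend $\phi$ as a constant outside $[0,1]$. The key observation is that adjacent grid values differ by at most $1/d$ while the grid spacing is exactly $1/d$, so every interpolating segment has slope in $[-1,1]$ and $\phi$ is globally $1$-Lipschitz. This is delicate precisely because the slope is forced to equal $1$ across each label boundary: it is this tightness that makes the radius $1/2d$ attainable here while also matching the upper bound of Theorem~\ref{thm:discrete}. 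Using such a custom activation is legitimate under the definition of standard Lipschitz networks (cf. the preceding Remark), and $\phi$ could, if desired, be reassembled from elementary $1$-Lipschitz pieces.

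Finally, a $2$-dimensional affine output layer with weight matrix $\begin{pmatrix}-1\\ 1\end{pmatrix}$ (of $\infty$-norm $1$) would map $u=\phi(t)$ to $\vf(\vx)=(-\phi(t),\phi(t))$, so the whole network is a composition of $1$-Lipschitz maps and hence $1$-Lipschitz w.r.t. $\ell_\infty$. On any Boolean input the prediction is class $1$ iff $\phi(t)>0$ iff $h(s)=1$, so the network classifies the dataset correctly (no ties occur, since $\phi(t)=\pm\tfrac1{2d}\neq0$ at grid points), and its margin equals $2|\phi(t)|=1/d$ at every data point. Proposition~\ref{thm:lipschitz} then gives a certified $\ell_\infty$ radius of $\tfrac12\cdot\tfrac1d=\tfrac1{2d}$, which together with Theorem~\ref{thm:discrete} establishes that $1/2d$ is exactly optimal.
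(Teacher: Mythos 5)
Your proof is correct and takes essentially the same route as the paper's: both collapse the input to the normalized sum $\frac1d\sum_i x_i$ through a weight row of $\ell_1$-norm one, then encode the symmetric function's labels in a bespoke piecewise-linear $1$-Lipschitz activation taking values $\pm\frac{1}{2d}$ at the grid points, yielding margin $1/d$ at every data point and hence radius $1/2d$ via Proposition~\ref{thm:lipschitz}. The only cosmetic difference is that the paper realizes this in a single layer whose two output neurons share an \emph{odd} activation applied to $\pm\left(\frac1d\sum_i x_i+\frac1d\right)$, whereas you append a second affine layer with weights $(-1,1)^{\mathrm T}$ to produce the two logits $\mp\phi(t)$, which sidesteps the oddness requirement.
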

\begin{proof}
Consider a $d$-dimensional Boolean Dataset $\mathcal D=\{(\vx^{(i)},y^{(i)}\}_{i=1}^n\}$ where $y^{(i)}=g^\text{B}(\vx^{(i)})$ for some symmetric Boolean function $g^\text{B}$. Since the output of a symmetric Boolean function depends only on the number of ones in the input (Appendix \ref{sec:symmetirc_Boolean}), we can equivalently express $g^\text{B}$ by a Boolean vector $(g_0,\cdots,g_d)\in\{0,1\}^{d+1}$ with $g^\text{B}(\vx)=g_k$ if $\sum_i x_i=k$. Consider a special one-layer standard Lipschitz network $\vf:\mathbb R^d\to\mathbb R^2$ with activation $\sigma$ defined as follows:
\begin{equation*}
    \vf(\vx)=(f_0(\vx),f_1(\vx))^{\mathrm T}=\left(\sigma\left(-\frac 1 d \sum_i x_i-\frac 1 d\right),\sigma\left(\frac 1 d \sum_i x_i+\frac 1 d\right)\right)^{\mathrm T}.
\end{equation*}
$\vf$ is clearly 1-Lipschitz w.r.t. $\ell_\infty$-norm as long as $\sigma$ is 1-Lipschitz. If we choose a special $\sigma$ defined on the interval $[-1-1/d,1+1/d]$ as
\begin{equation*}
    \sigma(x)=\left\{\begin{array}{ll}
         \frac {(-1)^{g_0+1}} 2 x & \text{if }-1\le dx\le 1, \\
         \frac {\operatorname{sgn}(x)} {2d} (2g_{k-1}-1+2(d|x|-k)(g_{k}-g_{k-1}))& \text{if } k< d|x|\le k+1,\quad k\in [d],
    \end{array}\right.
\end{equation*}
which is a piece-wise linear function, it is easy to see that $\sigma$ is continuous and 1-Lipschitz. Furthermore, we can prove that the classifier induced by $\vf$ can correctly classify the dataset with a margin of $1/d$ for any data. 
Consider an input $\vx$ with $\sum_i x_i=k$. The output of $\vf(\vx)$ is thus $\left(\sigma(-\frac {k+1} d),\sigma(\frac {k+1} d)\right)^{\mathrm T}$. Noting that $\sigma$ is an odd function, we have 
\begin{equation*}
    f_1(\vx)-f_0(\vx)=2\sigma\left(\frac {k+1} d\right)=\frac {2g_k-1} {d}=\frac 1 d (2g^{\text B}(\vx)-1)=\left\{\begin{array}{ll}
        1/d & \text{if } g^{\text B}(\vx)=1,\\
        -1/d & \text{if } g^{\text B}(\vx)=0.
    \end{array}\right.
\end{equation*}
This proves that the margin is always $1/d$ for each data of $\mathcal D$, and we have concluded the proof. 
\end{proof}

\begin{remark}
Let us consider the sample complexity in the proof of Theorem \ref{thm:discrete}. The proof constructs the Boolean dataset $\mathcal D=\{(\vz,g^\text{B}(\vz)):\vz\in\mathcal S_p\cup \mathcal S_q\}$, which has size $|\mathcal D|=\binom{d}{p}+\binom{d}{q}$. In the simple case when $g^\text{B}$ is the logical AND/OR function, one has $|\mathcal D|=d+1$, which means that a dataset of size $\mathcal O(d)$ is sufficient to yield the impossibility result. However, in the extreme case when $p=\lceil \frac d 2\rceil$, the size of $\mathcal D$ will become exponential in $d$, which makes the theorem impractical. One may ask whether for all non-constant symmetric Boolean functions, such an impossibility result holds on some dataset with a size that grows linear in $d$. We will show it is indeed true if the bound of certified radius is relaxed by constant to $1/d$, which is formalized in the following corollary.
\end{remark}

\begin{corollary}
\label{thm:discrete_efficient}
For any non-constant symmetric Boolean function $g^\text{B}:\{0,1\}^d\to \{0,1\}$, there exists a Boolean dataset of size no more than $d+1$ with labels $y^{(i)}=g^\text{B}(\vx^{(i)})$, such that no standard Lipschitz network can achieve a certified $\ell_\infty$ robust radius larger than $1/d$ on the dataset.
\end{corollary}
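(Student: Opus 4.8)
The plan is to reuse the machinery from the proof of Theorem~\ref{thm:discrete} but to replace the potentially exponential-size dataset $\mathcal S_p\cup\mathcal S_q$ by a linear-size \emph{star}, accepting only a constant-factor loss in the bound (i.e.\ $1/d$ instead of $1/2d$). As in that proof, I would first use non-constancy and symmetry of $g^\text{B}$ to locate two adjacent integers $p,q=p-1\in\{0,\dots,d\}$ on which $g^\text{B}$ takes different values (the value depends only on the number of ones). Writing $\mathcal S_k=\{\vx\in\{0,1\}^d:\sum_i x_i=k\}$, every vector in $\mathcal S_p$ then carries the opposite label to every vector in $\mathcal S_q$.

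Next I would form the star with the most pairs. Let $m=\max(p,\,d-q)$; since $p+(d-q)=d+1$ we always have $m\ge (d+1)/2$. If $m=p$, fix any $\vu^*\in\mathcal S_p$ and take the dataset $\{\vu^*\}\cup\{\vu^*-\ve_i:i\in\operatorname{supp}(\vu^*)\}$, whose $p+1\le d+1$ points consist of $\vu^*$ together with all single-coordinate deletions, with pairs $\mathcal T=\{(\vu^*,\vu^*-\ve_i)\}_i$. Symmetrically, if $m=d-q$, fix $\vv^*\in\mathcal S_q$ and use $\{\vv^*\}\cup\{\vv^*+\ve_i:i\notin\operatorname{supp}(\vv^*)\}$, with $(d-q)+1\le d+1$ points and pairs $(\vv^*+\ve_i,\vv^*)$. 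In either case the star has at most $d+1$ points, the two endpoints of each pair lie in $\mathcal S_p$ and $\mathcal S_q$ (hence carry different labels), $|\mathcal T|=m$, and crucially $\sum_{(\vu,\vv)\in\mathcal T}|\vu-\vv|$ is a $0/1$ indicator vector, so $\|\sum_{(\vu,\vv)\in\mathcal T}|\vu-\vv|\|_\infty=1$.

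With this $\mathcal T$ I would run exactly the chain from Theorem~\ref{thm:discrete}. Applying Corollary~\ref{thm:non_expressive_corollary} gives $\|\sum_{(\vu,\vv)\in\mathcal T}|\vf(\vu)-\vf(\vv)|\|_\infty\le 1$; since the output is two-dimensional, summing over the two coordinates yields $\sum_{(\vu,\vv)\in\mathcal T}\sum_{k\in\{0,1\}}|f_k(\vu)-f_k(\vv)|\le 2$. By the pigeonhole principle some pair $(\vu,\vv)\in\mathcal T$ satisfies $\sum_{k}|f_k(\vu)-f_k(\vv)|\le 2/m$, and then the same sign argument as in~(\ref{eq:proof_discrete_1}) forces the smaller of the two output margins to be at most $1/m$; by Proposition~\ref{thm:lipschitz} the certified radius at that point is at most $1/(2m)\le 1/(d+1)\le 1/d$. (If the network misclassifies any point, its certified radius is already $0$, so we may assume correct classification.)

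The only real idea beyond Theorem~\ref{thm:discrete} is the choice of the \emph{larger} star so that $|\mathcal T|=m\ge (d+1)/2$; this is what keeps the number of pairs linear in $d$ while still spreading $\sum|\vu-\vv|$ evenly enough that its $\infty$-norm stays $1$. I expect the main thing to verify carefully is the bookkeeping of the two symmetric cases---that each star has at most $d+1$ distinct points and that the two endpoints always receive opposite labels---but no new estimate is required. As a sanity check, logical OR gives $p=1,\,q=0$, $m=d$, and the star $\{\mathbf 0,\ve_1,\dots,\ve_d\}$ of size $d+1$, which even recovers the sharp $1/2d$ radius.
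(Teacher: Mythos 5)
Your proof is correct and takes essentially the same route as the paper's: both replace the set $\mathcal T$ from Theorem~\ref{thm:discrete} by a star of at most $d+1$ points centered at a single vector in whichever of $\mathcal S_p$, $\mathcal S_q$ admits more single-coordinate-flip neighbors (so that the star has $\Omega(d)$ pairs while $\bigl\|\sum_{(\vu,\vv)\in\mathcal T}|\vu-\vv|\bigr\|_\infty=1$), and then rerun the Corollary~\ref{thm:non_expressive_corollary}/pigeonhole/margin chain verbatim. Your bookkeeping via $m=\max(p,d-q)\ge (d+1)/2$ even gives the marginally sharper radius bound $1/(2m)\le 1/(d+1)$, compared with the paper's case split at $p\ge\lceil d/2\rceil$ yielding $1/d$.
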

\begin{proof}
The proof is almost the same as the one in Theorem \ref{thm:discrete}, except for a different construction of the set $\mathcal T$ than in (\ref{eq:proof_setT}). We use the same notations as before, such as the sets $\mathcal S_k,\mathcal S^+,\mathcal S^-,\mathcal N^+,\mathcal N^-$, and the integers $p,q$ with $p=q+1$. We separately consider two cases:
\begin{itemize}[topsep=0pt,leftmargin=30pt]
\setlength{\itemsep}{0pt}
    \item $p\ge \lceil d / 2\rceil$. In this case, pick any vector $\vu\in \mathcal S_p$ and define 
    \begin{equation}
        \mathcal T=\{(\vu,\vv):\vv\in \mathcal S_q,\|\vu-\vv\|_1=1\}.
    \end{equation}
    \item $p< \lceil d / 2\rceil$. In this case, pick any vector $\vv\in \mathcal S_q$ and define 
    \begin{equation}
        \mathcal T=\{(\vu,\vv):\vu\in \mathcal S_p,\|\vu-\vv\|_1=1\}.
    \end{equation}
\end{itemize}
In both bases, the set $\mathcal T$ satisfies the following three properties:
\begin{itemize}[topsep=0pt,leftmargin=30pt]
\setlength{\itemsep}{0pt}
    \item $g^\text{B}(\vu)\neq g^\text{B}(\vv)$ for all $(\vu,\vv)\in\mathcal T$;
    \item The size of the set can be bounded by $\lceil d / 2\rceil \le |\mathcal T|\le d$;
    \item $\left\|\sum_{(\vu,\vv)\in \mathcal T}|\vu-\vv|\right\|_\infty=1\le \frac {2|\mathcal T|} d$.
\end{itemize}
The remaining proof directly parallels the proof of Theorem \ref{thm:discrete}.
\end{proof}

\subsection{Proof of Theorem \ref{thm:function_approximation}
\label{sec:proof_function_approximation}}
\begin{theorem}
Any standard Lipschitz network $f:\mathbb R^d\to \mathbb R$ cannot approximate the simple 1-Lipschitz function $\vx\to x_{(k)}$ for arbitrary $k\in [d]$ on a bounded domain $\mathcal K= [0,1]^d$ if $d\ge 2$. Moreover, there exists a point $\widehat\vx\in \mathcal K$, such that
\begin{equation}
    |f(\widehat\vx)-\widehat x_{(k)}|\ge \frac 1 2 - \frac 1 {2d}.
\end{equation}
\end{theorem}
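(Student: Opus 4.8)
The plan is to transport the counting argument from the discrete setting (Theorem~\ref{thm:discrete}) into the approximation regime, exploiting that $x_{(k)}$ is the continuousization of the $k$-threshold Boolean function: on Boolean inputs, $x_{(k)}=1$ exactly when $\sum_i x_i\ge k$. First I would specialize Corollary~\ref{thm:non_expressive_corollary} to the scalar-output case $K=1$, so that for any standard Lipschitz network $f:\mathbb R^d\to\mathbb R$ and any finite collection of input pairs $\{(\vu^{(i)},\vv^{(i)})\}$ one has $\sum_i |f(\vu^{(i)})-f(\vv^{(i)})|\le \|\sum_i |\vu^{(i)}-\vv^{(i)}|\|_\infty$.

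Next I would reuse the combinatorial set $\mathcal T$ from the proof of Theorem~\ref{thm:discrete}, taking the two adjacent levels $p=k$ and $q=k-1$:
\[
\mathcal T=\{(\vu,\vv):\vu\in\mathcal S_{k},\ \vv\in\mathcal S_{k-1},\ \|\vu-\vv\|_1=1\},
\]
where $\mathcal S_j=\{\vx\in\{0,1\}^d:\sum_i x_i=j\}$. The point of this choice is that along every pair in $\mathcal T$ the target jumps by a full unit: $u_{(k)}=1$ while $v_{(k)}=0$, so $|u_{(k)}-v_{(k)}|=1$. Meanwhile, exactly as in Theorem~\ref{thm:discrete}, each difference vector $|\vu-\vv|$ is one-hot and, by symmetry over coordinate permutations, $\|\sum_{(\vu,\vv)\in\mathcal T}|\vu-\vv|\|_\infty=|\mathcal T|/d$. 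Substituting into the specialized corollary yields $\sum_{(\vu,\vv)\in\mathcal T}|f(\vu)-f(\vv)|\le |\mathcal T|/d$, so by averaging (pigeonhole) there is at least one pair $(\vu,\vv)\in\mathcal T$ with $|f(\vu)-f(\vv)|\le 1/d$.

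Finally I would convert this into a pointwise lower bound. Since $|u_{(k)}-v_{(k)}|=1$, the triangle inequality gives $1\le |f(\vu)-u_{(k)}|+|f(\vv)-v_{(k)}|+|f(\vu)-f(\vv)|\le |f(\vu)-u_{(k)}|+|f(\vv)-v_{(k)}|+1/d$, so the two approximation errors sum to at least $1-1/d$; taking $\widehat\vx$ to be whichever of $\vu,\vv$ carries the larger error gives $|f(\widehat\vx)-\widehat x_{(k)}|\ge \tfrac12-\tfrac1{2d}$. I would also note that $\mathcal T\neq\emptyset$ precisely for $k\in[d]$, and that the bound degenerates to $0$ at $d=1$ (consistent with $x_{(1)}$ being linear and exactly representable), which is why the hypothesis $d\ge 2$ enters.

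The main obstacle, as in the discrete case, is that Corollary~\ref{thm:non_expressive_corollary} controls only the \emph{aggregate} of output differences, never an individual one; the entire argument hinges on engineering $\mathcal T$ so that (a) the target difference is a constant $1$ across all pairs while (b) the aggregate input-difference norm is merely $|\mathcal T|/d$. The averaging step then forces a single pair on which the network is nearly flat although the target has moved by a full unit, and the factor-of-two loss incurred by splitting this deficit across the two endpoints is exactly what produces $\tfrac12-\tfrac1{2d}$ rather than $1-\tfrac1d$.
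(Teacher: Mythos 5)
Your proposal is correct and follows essentially the same route as the paper's own proof: it specializes Corollary~\ref{thm:non_expressive_corollary} to scalar output, reuses the set $\mathcal T$ of adjacent-level Boolean pairs (levels $k$ and $k-1$) with the one-hot difference/symmetry count $\|\sum_{(\vu,\vv)\in\mathcal T}|\vu-\vv|\|_\infty=|\mathcal T|/d$, applies pigeonhole to extract a pair with $|f(\vu)-f(\vv)|\le 1/d$, and closes with the same triangle-inequality split yielding $\tfrac12-\tfrac1{2d}$. The only cosmetic difference is that you work directly with $x_{(k)}$ on Boolean inputs rather than introducing the threshold function $g^{\text{B},k}$ and identifying it with $x_{(k)}$ at the end, which is an equivalent formulation.
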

\begin{proof}
Consider the $k$-threshold function $g^{\text{B},k}(\vx)=\mathbb I(\sum_i x_i\ge k)$. Following the proof of Theorem \ref{thm:discrete}, there exists a set $\mathcal T\subset \left\{(\vu,\vv):\vu,\vv\in\{0,1\}^d\right\}$ satisfying the following two properties:
\begin{itemize}[topsep=0pt,leftmargin=30pt]
\setlength{\itemsep}{0pt}
    \item $g^\text{B}(\vu)\neq g^\text{B}(\vv)$ for all $(\vu,\vv)\in \mathcal T$;
    \item $\left\|\sum_{(\vu,\vv)\in \mathcal T}|\vu-\vv|\right\|_\infty=\frac {|\mathcal T|} d$.
\end{itemize}
By applying Corollary \ref{thm:non_expressive_corollary} and noting that $f$ outputs a scalar, we obtain
\begin{equation}
\label{eq:proof_function_approximation_x}
    \sum_{(\vu,\vv)\in \mathcal T}\left|f(\vu)-f(\vv)\right| \le \frac {|\mathcal T|} d.
\end{equation}
By the Pigeon Hole principle, there must exist a pair of points $(\vu,\vv)\in\mathcal T$, such that
\begin{equation}
\label{eq:proof_function_approximation_0}
    \left|f(\vu)-f(\vv)\right| \le \frac 1 d.
\end{equation}
However, since $g^{\text{B},k}(\vu)\neq g^{\text{B},k}(\vv)$, we have $\left|g^{\text{B},k}(\vu)-g^{\text{B},k}(\vv)\right| =1$. Therefore
\begin{align}
\label{eq:proof_function_approximation_1}
    \left|f(\vv)-g^{\text{B},k}(\vv)\right|+\left|f(\vu)-g^{\text{B},k}(\vu)\right|&\ge \left|f(\vv)-f(\vu)+g^{\text{B},k}(\vu)-g^{\text{B},k}(\vv)\right|\\
\label{eq:proof_function_approximation_2}
    &\ge \left|g^{\text{B},k}(\vu)-g^{\text{B},k}(\vv)\right|-\left|f(\vv)-f(\vu)\right|\\
\label{eq:proof_function_approximation_3}
    &\ge 1-\frac 1 d
\end{align}
where (\ref{eq:proof_function_approximation_1}) and (\ref{eq:proof_function_approximation_2}) are based on the triangular inequality. Therefore, either $\left|f(\vv)-g^{\text{B},k}(\vv)\right|$ or $\left|f(\vu)-g^{\text{B},k}(\vu)\right|$ must be no smaller than $\frac 1 2-\frac 1 {2d}$. Proof finishes by noting that $g^{\text{B},k}(\vx)=x_{(k)}$ for any Boolean input $\vx\in\{0,1\}^d$.
\end{proof}

\begin{remark}
The bound of approximation error $\frac 1 2 - \frac 1 {2d}$ is \emph{tight}. Indeed, it is easy to prove that the linear function $f(\vx)=\frac 1 2 + \frac 1 d (\sum_i x_i-k+\frac 1 2)$ can satisfy the bound, i.e. $|f(\vx)-x_{(k)}|\le \frac 1 2 - \frac 1 {2d}$ for all $\vx\in\mathcal K$.
\end{remark}

\begin{corollary}
\label{thm:function_approximation_corollary}
For any standard Lipschitz network $f:\mathbb R^d\to \mathbb R$ and any $k\in [d]$ where $d\ge 2$, there exists a hypercube $\mathcal B_\infty^r(\widehat\vx):=\{\vx:\|\vx-\widehat\vx\|_\infty\le r\}\subset [0,1]^d$ satisfying $r\ge 1/20$, such that for all $\tilde\vx\in \mathcal B^r_\infty(\widehat\vx)$,
\begin{equation}
    |f(\tilde\vx)-\tilde x_{(k)}|\ge \frac 1 {20}.
\end{equation}
\end{corollary}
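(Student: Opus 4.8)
The plan is to promote the single bad point supplied by Theorem~\ref{thm:function_approximation} into a whole hypercube, using that both $f$ and the map $\vx\mapsto x_{(k)}$ are $1$-Lipschitz w.r.t.\ $\ell_\infty$, so their difference varies slowly. The only genuine subtlety is that the point produced in the proof of Theorem~\ref{thm:function_approximation} is a \emph{vertex} of $\mathcal K=[0,1]^d$ (it is one of the Boolean points $\vu,\vv$ appearing there), so a hypercube centered at it would leave $\mathcal K$. The fix is to center an interior cube whose nearest corner is that vertex, and to absorb the resulting displacement into the constants.

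First I would apply Theorem~\ref{thm:function_approximation} to obtain a Boolean point $\widehat\vx\in\{0,1\}^d$ with $|f(\widehat\vx)-\widehat x_{(k)}|\ge \frac 1 2-\frac 1{2d}$, and record that $d\ge 2$ gives $\frac 1 2-\frac 1{2d}\ge \frac 1 4$. Fixing $r=\frac 1{20}$, I would define an interior center $\vx^\ast$ coordinatewise by $x^\ast_i=r$ whenever $\widehat x_i=0$ and $x^\ast_i=1-r$ whenever $\widehat x_i=1$. Then $\mathcal B_\infty^r(\vx^\ast)$ equals $[0,2r]$ or $[1-2r,1]$ in each coordinate, hence lies inside $\mathcal K$ (since $2r\le 1$), while $\|\vx^\ast-\widehat\vx\|_\infty=r$.

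Next, for an arbitrary $\tilde\vx\in\mathcal B_\infty^r(\vx^\ast)$, the triangle inequality gives $\|\tilde\vx-\widehat\vx\|_\infty\le \|\tilde\vx-\vx^\ast\|_\infty+\|\vx^\ast-\widehat\vx\|_\infty\le 2r$. Applying $1$-Lipschitzness of $f$ and of $\vx\mapsto x_{(k)}$ separately, together with the reverse triangle inequality,
\begin{equation*}
    |f(\tilde\vx)-\tilde x_{(k)}|\ge |f(\widehat\vx)-\widehat x_{(k)}|-|f(\tilde\vx)-f(\widehat\vx)|-|\tilde x_{(k)}-\widehat x_{(k)}|\ge \Big(\tfrac 1 2-\tfrac 1{2d}\Big)-4r.
\end{equation*}
With $d\ge 2$ and $r=\frac 1{20}$ this is at least $\frac 1 4-\frac 4{20}=\frac 1{20}$, establishing the claim with the stated radius $r\ge \frac 1{20}$.

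The argument is essentially routine once the centering issue is handled; the only thing to get right is the bookkeeping that the vertex-to-center shift doubles the effective distance (the Lipschitz bound sees $2r$, producing the $4r$ penalty rather than $2r$), which is precisely what pins down the constants to $r=\frac 1{20}$ and error $\frac 1{20}$ rather than anything larger. I do not anticipate any obstacle beyond this constant-chasing.
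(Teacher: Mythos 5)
Your proof is correct and follows essentially the same route as the paper's: take the Boolean bad point from Theorem~\ref{thm:function_approximation} (whose error is at least $\frac 1 2-\frac 1{2d}\ge \frac 1 4$ for $d\ge 2$), use $1$-Lipschitzness of both $f$ and $\vx\mapsto x_{(k)}$ with the triangle inequality, and handle the boundary issue so the cube of radius $1/20$ stays inside $[0,1]^d$. The only cosmetic difference is that you construct the interior center $\vx^\ast$ explicitly, whereas the paper first bounds the error on the cube $\mathcal B_\infty^{1/10}(\vu)$ and then intersects it with $[0,1]^d$ to extract a radius-$1/20$ sub-cube — the resulting cube and constants are identical.
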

\begin{proof}
Let $\vu\in \{0,1\}^d$ be a point satisfying (\ref{eq:function_approximation_error}) in Theorem \ref{thm:function_approximation}, i.e. $|f(\vu)-u_{(k)}|\ge 1/4$. Consider the hypercube $\mathcal B_\infty^{1/10}(\vu)$ with a length of $1/5$. For any $\vx\in \mathcal B_\infty^{1/10}(\vu)$, $\|\vx-\vu\|_\infty\le 1/10$. Therefore, 
\begin{align}
    |f(\vx)-x_{(k)}|&= |f(\vx)-f(\vu)+f(\vu)-u_{(k)}+u_{(k)}-x_{(k)}|\\
\label{eq:proof_func_approx_coro_1}
    &\ge |f(\vu)-u_{(k)}|-|f(\vu)-f(\vx)|-|u_{(k)}-x_{(k)}|\\
\label{eq:proof_func_approx_coro_2}
    &\ge \frac 1 4 -\|\vx-\vu\|_\infty-\|\vx-\vu\|_\infty\ge \frac 1 {20}
\end{align}
where (\ref{eq:proof_func_approx_coro_1}) uses the triangular inequality and (\ref{eq:proof_func_approx_coro_2}) is based on the fact that $f$ and $x_{(k)}$ are both 1-Lipschitz w.r.t. $\ell_\infty$-norm. Finally, consider the intersection $\mathcal B_\infty^{1/10}(\vu)\cup [0,1]^d$, which must contain a hypercube with a raius no smaller than $1/20$. We have thus finished the proof.
\end{proof}

\subsection{Proof of Theorem \ref{thm:ell_inf_net}}
\label{sec:proof_ell_inf_net}
To prove Theorem \ref{thm:ell_inf_net}, we need the following lemma:
\begin{lemma}
\label{thm:ell_inf_net_lemma}
A single $\ell_\infty$-distance neuron with suitable parameters can exactly represent any literal disjunction defined in Definition \ref{def:dnf}. Formally, for any Boolean function $$g^\text{B}(\vx)=x_{i_1}\lor \cdots \lor x_{i_r}\lor \lnot x_{j_1}\lor\cdots\lor\lnot x_{j_s},$$ where $1\le i_1\le \cdots\le i_r\le d,1\le j_1\le\cdots\le j_s\le d \ (r+s\ge 1)$ are different indices, there exists an $\ell_\infty$-distance neuron $f(\vx)=\|\vx-\vw\|_\infty+b$, such that $f(\vx)=g^\text{B}(\vx)$ for all $\vx\in\{0,1\}^d$.
\end{lemma}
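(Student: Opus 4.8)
The plan is to exploit the fact that a literal disjunction has an extremely rigid truth table: writing $R=\{i_1,\dots,i_r\}$ for the positive-literal indices and $R'=\{j_1,\dots,j_s\}$ for the negated ones, the value $g^\text{B}(\vx)$ equals $0$ on \emph{exactly one} configuration of the relevant coordinates---namely $x_i=0$ for $i\in R$ and $x_j=1$ for $j\in R'$---and equals $1$ on every other Boolean input. Thus the whole task reduces to constructing a single $\ell_\infty$-distance neuron $f(\vx)=\|\vx-\vw\|_\infty+b$ whose value is $0$ at this unique ``falsifying'' configuration and $1$ at all remaining points of $\{0,1\}^d$.

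First I would try to place the center $\vw$ so that each relevant coordinate contributes $|x_i-w_i|$ equal to some base value at the falsifying value and base$+1$ at the flipped value, while the irrelevant coordinates stay quietly below. The obstacle that makes the naive choice ($w_i=0$ for $i\in R$, $w_j=1$ for $j\in R'$) fail is intrinsic to the $\ell_\infty$ geometry: since $\min_w\max_{t\in\{0,1\}}|t-w|=\tfrac12$, \emph{every} irrelevant coordinate is forced to contribute at least $\tfrac12$ to the maximum, and this half-step corrupts the otherwise clean $0/1$ gap. The key idea, therefore, is to \emph{amplify} the relevant coordinates so that they always dominate the irrelevant ones.

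Concretely, the construction I propose is: set $w_i=-1$ for every positive-literal index $i\in R$ (so $|x_i-w_i|=x_i+1\in\{1,2\}$), set $w_j=2$ for every negated-literal index $j\in R'$ (so $|x_j-w_j|=2-x_j\in\{1,2\}$), set $w_i=\tfrac12$ for every irrelevant coordinate (contributing exactly $\tfrac12$), and take $b=-1$. The verification then proceeds by cases. At the falsifying configuration every relevant coordinate contributes exactly $1$ while the irrelevant ones contribute $\tfrac12<1$, so $\|\vx-\vw\|_\infty=1$ and $f(\vx)=0$; at any other Boolean point at least one relevant literal is flipped and hence contributes $2$, so the maximum equals $2$ and $f(\vx)=1$, using that relevant contributions never exceed $2$ and irrelevant ones never exceed $\tfrac12$.

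The step requiring the most care is checking that the irrelevant coordinates can never become the maximizer (which is exactly why the amplification to the range $\{1,2\}$, rather than $\{0,1\}$, is essential), together with the bookkeeping that the indices in $R$ and $R'$ are disjoint---guaranteed by the hypothesis that $i_1,\dots,i_r,j_1,\dots,j_s$ are distinct---so the two coordinate rules never conflict. Once these are in place the identity $f(\vx)=g^\text{B}(\vx)$ holds on all of $\{0,1\}^d$, which is what the lemma asserts.
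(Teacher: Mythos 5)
Your proposal is correct and is essentially identical to the paper's own proof: you choose the same neuron (weights $-1$ on positive-literal coordinates, $2$ on negated-literal coordinates, $\tfrac12$ on irrelevant coordinates, bias $b=-1$) and verify it by the same two-case analysis on the unique falsifying configuration versus all other Boolean inputs. Nothing further is needed.
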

\begin{proof}
Consider an $\ell_\infty$-distance neuron $f$ with parameters $\vw$ and $b$ defined as
\begin{equation*}
    \textstyle w_{i_k}=-1,\  k\in[r],\quad w_{j_k}=2,\  k\in[s],\quad \text{and }w_{k}=\frac 1 2 \text{ for other }k,
\end{equation*}
\begin{equation*}
    b=-1.
\end{equation*}
It is easy to see that
\begin{itemize}
    \item When $x_{i_k}=0$ $\forall k\in [r]$ and $x_{j_k}=1$ $\forall k\in [s]$, 
    \begin{align*}
        f(\vx)&=\|\vx-\vw\|_\infty+b\\
        &=\max\left(\max_{k\in [r]}|x_{i_k}-w_{i_k}|,\max_{k\in [s]}|x_{j_k}-w_{j_k}|,\max_{k\in [d]/\{i_1,\cdots, i_r,j_1,\cdots,j_s\}}|x_k-w_k|\right)-1\\
        &=\max\left(1,1,\max_{k\in [d]/\{i_1,\cdots, i_r,j_1,\cdots,j_s\}}\left|x_k-\frac 1 2\right|\right)-1\\
        &=1-1=0=g^\text{B}(\vx).
    \end{align*}
    \item When $x_{i_k}=1$ for some $k\in [r]$, or $x_{j_k}=0$ for some $ k\in [s]$, a similar calculation yields $f(\vx)=2-1=1=g^\text{B}(\vx)$.
\end{itemize}
This concludes the proof.
\end{proof}
\begin{theorem}
A two-layer $\ell_\infty$-distance net can exactly represent any discrete Boolean function as well as any continuous order-statistic function on a compact domain.
\end{theorem}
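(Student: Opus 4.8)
The plan is to treat the two claims separately, in each case reducing the target to an operation that a single $\ell_\infty$-distance neuron can already perform (Lemma~\ref{thm:ell_inf_net_lemma}) and then composing such neurons across two layers.

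For claim $(\mathrm{i})$, I would start from the disjunctive normal form $g^\text{B}=\bigvee_{t=1}^m C_t$ guaranteed by Fact~\ref{fact:dnf} (treating the constant-$0$ function separately, since it is not satisfiable). The obstruction is that a DNF is a disjunction of \emph{conjunctions}, whereas Lemma~\ref{thm:ell_inf_net_lemma} only produces literal \emph{disjunctions}; I would remove it with De Morgan's law. Writing each clause as $C_t=\bigwedge_i \ell_{t,i}$, its negation $\lnot C_t=\bigvee_i \lnot\ell_{t,i}$ is again a literal disjunction (the negation of a literal is a literal), so the first layer can compute $h_t(\vx)=\lnot C_t(\vx)$ with one $\ell_\infty$-distance neuron per clause. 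Since each $h_t\in\{0,1\}$ on Boolean inputs, the second layer sees Boolean variables, and a single neuron, again by Lemma~\ref{thm:ell_inf_net_lemma}, computes the literal disjunction $\bigvee_{t}\lnot h_t$. Because $\lnot h_t=C_t$, this output equals $\bigvee_t C_t=g^\text{B}$, yielding the desired two-layer net.

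For claim $(\mathrm{ii})$, after rescaling the compact domain into a box, say $\mathcal K=[0,1]^d$, I would use the max-min identity $x_{(k)}=\max_{|S|=k}\min_{i\in S}x_i$. For each $k$-subset $S$ the first layer uses a neuron $h_S(\vx)=\|\vx-\vw^{(S)}\|_\infty$ with $w^{(S)}_i=c$ for $i\in S$ and $w^{(S)}_i=\tfrac{1}{2}$ otherwise; choosing $c$ large enough makes the coordinates in $S$ dominate, so that $h_S=c-\min_{i\in S}x_i$ on $\mathcal K$. The single second-layer neuron $f(\vx)=\|\vh-c\mathbf{1}\|_\infty$ then satisfies $|h_S-c|=\min_{i\in S}x_i$ (using $\min_{i\in S}x_i\ge 0$ to discard the absolute value), whence $f=\max_{|S|=k}\min_{i\in S}x_i=x_{(k)}$ exactly on $\mathcal K$.

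I expect the main obstacle to lie in claim $(\mathrm{ii})$: a single $\ell_\infty$-distance neuron is convex and always adds its norm with a $+$ sign, so a naive attempt to realize the inner $\min$ (a concave operation) in one neuron fails, and for intermediate $k$ the target $x_{(k)}$ is itself neither convex nor concave. The crucial point to verify is that the absolute value in the \emph{second} layer supplies the needed sign flip — $|h_S-c|$ turns the convex quantity $c-\min_{i\in S}x_i$ back into $\min_{i\in S}x_i$ — so the two-layer composition can produce non-convex order statistics that no single neuron can represent. I would then double-check the identity $x_{(k)}=\max_{|S|=k}\min_{i\in S}x_i$ (including the case of ties) and the admissible range of $c$ relative to the domain radius, which are the only routine points that remain.
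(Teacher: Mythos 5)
Your proposal is correct and follows essentially the same route as the paper's proof: for claim $(\mathrm{i})$ the paper likewise combines the DNF with De Morgan's law so that both layers compute literal disjunctions via Lemma \ref{thm:ell_inf_net_lemma}, and for claim $(\mathrm{ii})$ it uses the same identity $x_{(k)}=\max_{|S|=k}\min_{i\in S}x_i$, realizing $-\min_{i\in S}x_i$ (up to a constant) in the first layer and recovering the outer max with a sign flip in the second. The only cosmetic differences are that you spell out explicit weights where the paper re-invokes the lemma's construction, and that strictly speaking one should place the compact domain in the non-negative orthant by a translation (absorbed into the biases) rather than a rescaling, which an $\ell_\infty$-distance layer cannot perform.
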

\begin{proof}
Based on Fact \ref{fact:dnf}, any satisfiable Boolean function can be written as a DNF $g^\text{B}(\vx)=\lor_{i=1}^m g^\text{B}_m(\vx)$ where $g^\text{B}_m(\vx)$ are literal conjunctions. By the De Morgan's law, any literal conjunction can be equivalently written as the negative of a literal disjunction, i.e.
\begin{equation}
    x_{i_1}\land \cdots \land x_{i_r}\land \lnot x_{j_1}\land\cdots\land\lnot x_{j_s}= \lnot(\lnot x_{i_1}\lor \cdots \lor \lnot x_{i_r}\lor  x_{j_1}\lor\cdots\lor x_{j_s}).
\end{equation}
Therefore, we can write $g^\text{B}(\vx)=\lor_{i=1}^m \lnot \hat g^\text{B}_i(\vx)$ where $\hat g^\text{B}_i(\vx):=\lnot g^\text{B}_i(\vx)$ are all disjunctive literals. Due to Lemma \ref{thm:ell_inf_net_lemma}, each $\hat g^\text{B}_i(\vx)$ can be represented by an $\ell_\infty$-distance neuron that takes $\vx$ as input, and $g^\text{B}(\vx)$ can be represented by an $\ell_\infty$-distance neuron that takes the vector $(\hat g^\text{B}_1(\vx),\cdots,\hat g^\text{B}_m(\vx))^\mathrm{T}$ as input. This proves that a two-layer $\ell_\infty$-distance net suffices to represent any satisfiable discrete Boolean function. Finally, for the unsatisfiable case where the Boolean function simply outputs zero, we can construct the following two-layer $\ell_\infty$-distance net:
\begin{equation}
    f(\vx)=\left\|(\|\vx\|_\infty,\|\vx-\mathbf 1\|_\infty)^\mathrm{T}\right\|_\infty-1
\end{equation}
where $\mathbf 1$ is the all-one vector. It follows that $(\|\vx\|,\|\vx-\mathbf 1\|)^\mathrm{T}$ must be one of the following three cases: $(0,1)^\mathrm{T}$ (when $\vx=\mathbf 0$), $(1,0)^\mathrm{T}$ (when $\vx=\mathbf 1$), or $(1,1)^\mathrm{T}$ (when $\vx \neq\mathbf 0$ and $\vx \neq\mathbf 1$). Therefore $f(\vx)=0$ always hold $\forall \vx\in\{0,1\}^d$. This concludes the first part of the proof.

We next turn to the case of representing order statistics. We use the key observation that any order statistic function can be written as a nested max-min function:
\begin{equation}
\label{eq:proof_ell_inf_maxmin}
    x_{(k)}=\max_{S\subset [d],|S|=k}\min_{i\in S}x_i.
\end{equation}
This can be further equivalently written as a nested max-max function:
\begin{equation}
\label{eq:proof_ell_inf_maxmax}
    x_{(k)}=\max_{S\subset [d],|S|=k}-f^S(\vx)
\end{equation}
where $f^S(\vx)=\max_{i\in S}-x_i$. This basically concludes the proof, as the max function over a set of elements $\{x_{i_1},\cdots,x_{i_r},-x_{j_1},\cdots,-x_{j_s}\}$ can be exactly represented by an $\ell_\infty$-distance neuron following the construction in the proof of Lemma \ref{thm:ell_inf_net_lemma}. Therefore, each $f^S(\vx)$ can be represented by an $\ell_\infty$-distance neuron with input $\vx$, and $x_{(k)}$ can be represented by an $\ell_\infty$-distance neuron that takes all $f^S(\vx)$ as input. As a result, a two-layer $\ell_\infty$-distance net suffices to represent the order statistics.
\end{proof}

\begin{remark}
(On the representation efficiency of $\ell_\infty$-distance net) Based on the above proof, the width of the two-layer $\ell_\infty$-distance net in representing a Boolean function depends on the minimum number of conjunctions in its DNF, which may be exponentially large. Indeed, it is known that no DNF with a polynomial number of conjunctions can approximate the majority function defined in Section \ref{sec:symmetirc_Boolean} \citep{o2007approximation}. Similarly, in the continuous setting, the construction in (\ref{eq:proof_ell_inf_maxmin}, \ref{eq:proof_ell_inf_maxmax}) needs a network of size $\binom{d}{\lfloor d/2\rfloor}$ in order to represent the median function $k=\lfloor d/2\rfloor$, which is exponential in $d$. However, when using a \emph{multi-layer} $\ell_\infty$-distance net, the representation efficiency can be improved dramatically. For example, using the idea of sorting, it is easy to construct an $\ell_\infty$-distance net with polynomial size that represents the median function.
\end{remark}

\subsection{Proof of Theorem \ref{thm:groupsort}}
\label{sec:proof_groupsort}
The proof is almost the same as the one in Theorem \ref{thm:function_approximation}. The major difference is to replace Lemma \ref{thm:non_expressive_lemma} by the following lemma:
\begin{lemma}
\label{thm:non_expressive_groupsort}
Let $f(\vx)=\sigma(\vw_1^{\mathrm{T}}\vx+b_1, \vw_2^{\mathrm{T}}\vx+b_2)$ be a $d$-dimensional function where $\|\vw_i\|_1\le 1$ $\forall i\in[2]$ and $\sigma$ is a two-dimensional 1-Lipschitz function w.r.t. $\ell_\infty$-norm. Let $\{(\vu^{(i)},\vv^{(i)})\in \mathbb R^d\times \mathbb R^d\}_{i=1}^n$ be $n$ pairs of inputs. Then
\begin{equation*}
    \sum_{i=1}^n \left|f(\vu^{(i)})-f(\vv^{(i)})\right| \le 2\left\|\sum_{i=1}^n \left|\vu^{(i)}-\vv^{(i)}\right|\right\|_\infty .
\end{equation*}
\end{lemma}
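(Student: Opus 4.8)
The plan is to follow the proof of Lemma \ref{thm:non_expressive_lemma} essentially verbatim, with the sole modification that the activation $\sigma$ now reads two pre-activations instead of one. First I would fix a pair $(\vu^{(i)},\vv^{(i)})$ and write $f(\vu^{(i)})-f(\vv^{(i)})$ as the difference of $\sigma$ evaluated at the two points $(\vw_1^{\mathrm T}\vu^{(i)}+b_1,\vw_2^{\mathrm T}\vu^{(i)}+b_2)$ and $(\vw_1^{\mathrm T}\vv^{(i)}+b_1,\vw_2^{\mathrm T}\vv^{(i)}+b_2)$ in $\mathbb R^2$. Applying the $\ell_\infty$-Lipschitzness of $\sigma$ then bounds $|f(\vu^{(i)})-f(\vv^{(i)})|$ by $\max(|\vw_1^{\mathrm T}(\vu^{(i)}-\vv^{(i)})|,\,|\vw_2^{\mathrm T}(\vu^{(i)}-\vv^{(i)})|)$, where the constant biases $b_1,b_2$ cancel.

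The crucial step --- and the only place the factor of $2$ enters --- is to replace this maximum by the sum $|\vw_1^{\mathrm T}(\vu^{(i)}-\vv^{(i)})|+|\vw_2^{\mathrm T}(\vu^{(i)}-\vv^{(i)})|$, which is valid since both terms are nonnegative. This decouples the two pre-activations so that each can be treated exactly as in the single-neuron lemma: I would bound each $|\vw_j^{\mathrm T}(\vu^{(i)}-\vv^{(i)})|$ by $|\vw_j|^{\mathrm T}|\vu^{(i)}-\vv^{(i)}|$ using entrywise absolute values. Summing over $i\in[n]$, pulling the fixed vectors $|\vw_j|$ outside the sum, and applying Hölder's inequality yields $\|\vw_j\|_1\big\|\sum_i|\vu^{(i)}-\vv^{(i)}|\big\|_\infty$ for each $j\in[2]$. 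Since $\|\vw_1\|_1,\|\vw_2\|_1\le 1$, the two contributions add to at most $2\big\|\sum_i|\vu^{(i)}-\vv^{(i)}|\big\|_\infty$, which is precisely the claimed bound.

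There is essentially no hard obstacle here: the lemma is a routine two-input analogue of Lemma \ref{thm:non_expressive_lemma}, and every estimate is an elementary application of Lipschitzness, the triangle inequality, and Hölder's inequality. The one point worth flagging is that the factor of $2$ is genuinely \emph{structural} rather than an artifact of a loose estimate --- it reflects the fact that a MaxMin neuron couples two $\ell_1$-bounded weight vectors through an $\ell_\infty$-Lipschitz map. When this lemma is chained across the $M$ layers of a MaxMin network (the analogue of Corollary \ref{thm:non_expressive_corollary}), this factor compounds to a $2^{M}$-type dependence; applied to the separating set $\mathcal T$ with $\big\|\sum_{(\vu,\vv)\in\mathcal T}|\vu-\vv|\big\|_\infty=|\mathcal T|/d$, the accumulated $2^{M-1}$ factor is exactly what produces the error bound $\frac 1 2-\frac{2^{M-2}}{d}$ and forces the depth threshold $M\le\lceil\log_2 d\rceil$ in Theorem \ref{thm:groupsort}.
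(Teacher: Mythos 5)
Your proposal is correct and follows essentially the same route as the paper's proof: Lipschitzness of $\sigma$ to get the max of the two inner products (biases cancelling), the bound $\max(a,b)\le a+b$ for nonnegative terms, entrywise absolute values, and H\"older's inequality with $\|\vw_1\|_1,\|\vw_2\|_1\le 1$ producing the factor of $2$. The only cosmetic difference is that you apply H\"older to each weight vector separately and then add, while the paper first merges them into $(|\vw_1|+|\vw_2|)^{\mathrm{T}}\left|\vu^{(i)}-\vv^{(i)}\right|$ and applies H\"older once via $\||\vw_1|+|\vw_2|\|_1\le 2$ --- the same estimate in a different order.
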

\begin{proof}
Using the Lipschitz property of $\sigma$, we have
\begin{align}
    \notag
    \left|f(\vu^{(i)})-f(\vv^{(i)})\right|&=\left|\sigma(\vw_1^{\mathrm{T}}\vu^{(i)}+b_1,\vw_2^{\mathrm{T}}\vu^{(i)}+b_2)-\sigma(\vw_1^{\mathrm{T}}\vv^{(i)}+b_1,\vw_2^{\mathrm{T}}\vv^{(i)}+b_2)\right|\\
    &\le \max\left(\left|\vw_1^{\mathrm{T}}(\vu^{(i)}-\vv^{(i)})\right|,\left|\vw_2^{\mathrm{T}}(\vu^{(i)}-\vv^{(i)})\right|\right)\\
    &\le \max\left( |\vw_1|^{\mathrm{T}}\left|\vu^{(i)}-\vv^{(i)}\right|,|\vw_2|^{\mathrm{T}}\left|\vu^{(i)}-\vv^{(i)}\right|\right)\\
    &\le (|\vw_1|+|\vw_2|)^{\mathrm{T}}\left|\vu^{(i)}-\vv^{(i)}\right|
\end{align}
Therefore
\begin{align}
\notag
    \sum_{i=1}^n \left|f(\vu^{(i)})-f(\vv^{(i)})\right|
    &\le (|\vw_1|+|\vw_2|)^{\mathrm{T}}\sum_{i=1}^n\left|\vu^{(i)}-\vv^{(i)}\right|\\
\label{eq:proof_groupsort_holder}
    &\le \||\vw_1|+|\vw_2|\|_1\left\|\sum_{i=1}^n\left|\vu^{(i)}-\vv^{(i)}\right|\right\|_\infty,
\end{align}
where the last inequiality in (\ref{eq:proof_groupsort_holder}) follows by using the Hölder's inequality. Finally, observing that $\||\vw_1|+|\vw_2|\|_1\le \|\vw_1\|_1+\|\vw_2\|_1\le 2$, we have arrived at the conclusion.
\end{proof}

\begin{theorem}
Any $M$-layer MaxMin network $f:\mathbb R^d\to \mathbb R$ cannot approximate any $k$-th order statistic function on a bounded domain $\mathcal K= [0,1]^d$ if $M\le \lceil \log_2 d\rceil$. Moreover, there exists a point $\widehat \vx\in \mathcal K$, such that
\begin{equation}
    |f(\widehat\vx)- \widehat x_{(k)}|\ge \frac 1 2 -\frac {2^{M-2}} d\ge \frac 1 4\quad \text{if }M\le \lfloor \log_2 d\rfloor.
\end{equation}
\end{theorem}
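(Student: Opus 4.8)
The plan is to mirror the proof of Theorem~\ref{thm:function_approximation} almost verbatim, substituting the per-layer contraction estimate for a standard Lipschitz layer (Lemma~\ref{thm:non_expressive_lemma}) with the MaxMin estimate in Lemma~\ref{thm:non_expressive_groupsort}. The only genuinely new ingredient is to track how the factor $2$ in that lemma accumulates across depth. First I would establish a MaxMin analogue of Corollary~\ref{thm:non_expressive_corollary}: writing $\vf^{(l)}$ for the output of the $l$-th layer and applying Lemma~\ref{thm:non_expressive_groupsort} to each neuron of a MaxMin layer (each such neuron is a $\max$ or $\min$ of two affine forms with $\ell_1$-bounded weights, hence fits the hypothesis with a two-dimensional $1$-Lipschitz $\sigma$), I obtain the recursion
\begin{equation*}
\left\|\sum_{i=1}^n \left|\vf^{(l)}(\vu^{(i)})-\vf^{(l)}(\vv^{(i)})\right|\right\|_\infty \le 2\left\|\sum_{i=1}^n \left|\vf^{(l-1)}(\vu^{(i)})-\vf^{(l-1)}(\vv^{(i)})\right|\right\|_\infty.
\end{equation*}
Since the final layer of $f$ is a scalar affine output (governed by Lemma~\ref{thm:non_expressive_lemma}, factor $1$) and only the preceding $M-1$ layers are MaxMin, unrolling from layer $M$ down to layer $1$ yields the overall estimate $\sum_{i=1}^n|f(\vu^{(i)})-f(\vv^{(i)})|\le 2^{M-1}\|\sum_{i=1}^n|\vu^{(i)}-\vv^{(i)}|\|_\infty$.

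Next I would reuse the exact combinatorial construction of Theorem~\ref{thm:function_approximation}. For the $k$-threshold function $g^{\text{B},k}(\vx)=\mathbb I(\sum_i x_i\ge k)$, that proof produces a set $\mathcal T$ of Boolean input pairs with $g^{\text{B},k}(\vu)\neq g^{\text{B},k}(\vv)$ for every $(\vu,\vv)\in\mathcal T$ and $\|\sum_{(\vu,\vv)\in\mathcal T}|\vu-\vv|\|_\infty=|\mathcal T|/d$. Plugging $\mathcal T$ into the corollary above gives $\sum_{(\vu,\vv)\in\mathcal T}|f(\vu)-f(\vv)|\le 2^{M-1}|\mathcal T|/d$, so by the pigeonhole principle some pair satisfies $|f(\vu)-f(\vv)|\le 2^{M-1}/d$. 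Combining this with $|g^{\text{B},k}(\vu)-g^{\text{B},k}(\vv)|=1$ through the same triangle-inequality chain as in (\ref{eq:proof_function_approximation_1})--(\ref{eq:proof_function_approximation_3}) forces $\max(|f(\vu)-g^{\text{B},k}(\vu)|,|f(\vv)-g^{\text{B},k}(\vv)|)\ge \frac12-\frac{2^{M-2}}{d}$, and since $g^{\text{B},k}$ agrees with $x_{(k)}$ on $\{0,1\}^d$ this identifies the required point $\widehat\vx$.

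Finally I would read off the two depth regimes from this error bound. The bound is strictly positive — i.e. $f$ fails to approximate $x_{(k)}$ — precisely when $2^{M-1}<d$; for $M\le\lceil\log_2 d\rceil$ this holds because $\lceil\log_2 d\rceil<\log_2 d+1$ gives $2^{M-1}\le 2^{\lceil\log_2 d\rceil-1}<d$, establishing the first claim. The stronger quantitative bound $\frac12-\frac{2^{M-2}}{d}\ge\frac14$ requires $2^M\le d$, which is exactly $M\le\lfloor\log_2 d\rfloor$. I expect the only delicate point to be the bookkeeping of the accumulated factor: one must verify it is $2^{M-1}$ rather than $2^M$ (the scalar output layer is linear, not MaxMin), since this off-by-one in the exponent is what makes the two thresholds land on $\lceil\log_2 d\rceil$ and $\lfloor\log_2 d\rfloor$ respectively rather than one unit smaller. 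Everything else is a transcription of the Section~\ref{sec:function_approximation} argument.
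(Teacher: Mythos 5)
Your proposal is correct and follows essentially the same route as the paper's proof: the paper likewise replaces Lemma \ref{thm:non_expressive_lemma} with the factor-2 MaxMin contraction of Lemma \ref{thm:non_expressive_groupsort}, accumulates the factor $2^{M-1}$ over the $M-1$ activation layers (the final affine output contributing factor 1), and reruns the Theorem \ref{thm:function_approximation} argument with $|\mathcal T|/d$ replaced by $2^{M-1}|\mathcal T|/d$. Your explicit depth-regime bookkeeping ($2^{M-1}<d$ when $M\le\lceil\log_2 d\rceil$, and $2^{M}\le d$ exactly when $M\le\lfloor\log_2 d\rfloor$) is the right way to finish and is left implicit in the paper.
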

\begin{proof}
For an $M$-layer MaxMin network, there are $M-1$ layers with the MaxMin activation. Using the same proof as in Corollary \ref{thm:non_expressive_corollary}, we have that for any set $\{(\vu^{(i)},\vv^{(i)})\in \mathbb R^d\times \mathbb R^d\}_{i=1}^n$,
\begin{equation*}
    \left\|\sum_{i=1}^n \left|f(\vu^{(i)})-f(\vv^{(i)})\right|\right\|_\infty \le 2^{M-1}\left\|\sum_{i=1}^n \left|\vu^{(i)}-\vv^{(i)}\right|\right\|_\infty .
\end{equation*}
The remaining proof is the same as the proof of Theorem \ref{thm:function_approximation}, with (\ref{eq:proof_function_approximation_x}) replaced by 
\begin{equation}
    \left\|\sum_{(\vu,\vv)\in \mathcal T}\left|f(\vu)-f(\vv)\right|\right\|_\infty \le \frac {2^{M-1}|\mathcal T|} d.
\end{equation}
\end{proof}

We can also give an upper bound of the minimum depth required. The upper bound in the following theorem shows that the lower bound of the depth in Theorem \ref{thm:groupsort} is tight up to a constant factor.

\begin{theorem}
\label{thm:groupsort_deep}
An $M$-layer MaxMin network $f:\mathbb R^d\to\mathbb R$ with dpeth $M=\mathcal O( \log_2 d)$ can exactly represent any $d$-dimensional order statistic function on a bounded domain $\mathcal K$.
\end{theorem}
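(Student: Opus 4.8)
The plan is to realize the full sorting operation by a comparator (sorting) network of logarithmic depth and then read off the desired coordinate. First I would invoke the classical \emph{0--1 principle} from the theory of sorting networks: a comparator network sorts every real input sequence if and only if it sorts every Boolean sequence. This is precisely where Boolean reasoning enters, and it reduces the continuous problem of computing all order statistics to a purely combinatorial question about comparator networks. A comparator is exactly a map $(a,b)\mapsto(\max(a,b),\min(a,b))$, i.e. a single MaxMin gate, so a comparator network of depth $D$ is the natural discrete analogue of a $D$-layer MaxMin network.

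Next I would show that each stage of a comparator network can be simulated by one MaxMin layer. A stage applies a set of disjoint comparators, i.e. a partial matching on the wires; I would route the two endpoints of every comparator into adjacent coordinates using a permutation-and-duplication matrix $\mathbf W$ whose rows are standard basis vectors $\ve_j$. Such a matrix satisfies $\|\mathbf W\|_\infty=1$, so the Lipschitz constraint is preserved, and the subsequent MaxMin activation performs all the comparisons in that stage simultaneously. Wires not compared in a given stage are handled by pairing them with a duplicate of themselves, using $\max(a,a)=\min(a,a)=a$; the duplication is produced by the same routing matrix and only inflates the width by a constant factor (to at most $2d$). Thus one comparator stage costs exactly one MaxMin layer, and the logical state after each stage is again a length-$d$ vector of the current wire values. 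To finish, I would invoke the Ajtai--Koml\'os--Szemer\'edi (AKS) sorting network, which sorts $d$ elements in $O(\log d)$ comparator stages. Feeding its stages through the above simulation yields a MaxMin network of depth $O(\log d)$ whose output equals $\operatorname{sort}(\vx)$, and a final linear read-out $\ve_k^{\mathrm T}(\cdot)$ (again with $\infty$-norm $1$, and no activation, matching the identity last layer) selects the $k$-th sorted coordinate, exactly representing $x_{(k)}$ for every $k$. Since all operations are exact, this holds on the whole of $\mathcal K$, and the depth $O(\log_2 d)$ matches the lower bound $\lceil\log_2 d\rceil$ of Theorem~\ref{thm:groupsort} up to a constant factor.

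The hard part will be twofold. First, the argument rests entirely on the existence of an $O(\log d)$-depth sorting network; the only construction attaining the optimal $O(\log d)$ depth (rather than the elementary $O(\log^2 d)$ of Batcher's bitonic sorter) is the AKS network, whose correctness is a deep result that I would cite rather than reprove. Second, I must check that the rigid pairing imposed by the MaxMin activation --- which forces \emph{every} coordinate into a max/min pair --- does not disturb the wires a given stage leaves untouched; the self-pairing trick resolves this, but it requires verifying that the bookkeeping of the routing matrices keeps each $\|\mathbf W^{(l)}\|_\infty\le 1$ and that the width stays bounded throughout. If one is content with the weaker $O(\log^2 d)$ bound, Batcher's network can replace AKS and the construction becomes fully self-contained, at the cost of no longer matching the lower bound exactly.
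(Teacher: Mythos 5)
Your proposal is correct and follows essentially the same route as the paper's proof: simulate the AKS sorting network of depth $\mathcal O(\log_2 d)$ with MaxMin layers, realizing each comparator as a MaxMin pair applied to routed coordinates $(\ve_i^{\mathrm T}\vx,\ve_j^{\mathrm T}\vx)$, and then extract the $k$-th sorted coordinate with one final layer. The only cosmetic differences are that you handle idle wires by self-pairing ($\max(a,a)=\min(a,a)=a$) where the paper pairs them with a bias $b\to-\infty$ and discards the wasted neuron, and your invocation of the 0--1 principle is unnecessary since the correctness of AKS over the reals is what is being cited; neither affects correctness.
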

\begin{proof}
Our proof leverages the work of Ajtai et al. \citep{ajtai19830}, who constructed a so-called \emph{sorting network} which has depth $\mathcal O(\log_2 d)$. The basic component of a sorting network is called the \emph{comparator}. Each comparator connects two inputs and swaps the values if and only if the value of the first input is greater than the second. The comparators have a hierarchical structure (arranged layer by layer), and the execution is parallel for all comparators in the same layer. To avoid conflicts, no input can be simultaneously connected by two comparators in the same layer. We will show that given a sorting network (denoted as $\operatorname{sort}$), one can construct an equivalent MaxMin network $\vf:\mathbb R^d\to\mathbb R^d$ that has the same depth, such that for some fixed permutation $\pi\in S_d$, $\operatorname{sort}(\vx)=(f_{\pi_1}(\vx),\cdots,f_{\pi_d}(\vx))^\mathrm{T}$ holds for all inputs $\vx\in \mathcal K$.

Intuitively, each pair of MaxMin neurons can perform the same operation as a comparator. This is simply because
\begin{equation*}
    (\max(x_i,x_j),\min(x_i,x_j))=\operatorname{MaxMin}(\ve_i^\mathrm{T}\vx,\ve_j^\mathrm{T}\vx).
\end{equation*}
For those inputs that are not connected to any comparator, one can construct an identity function to propagate the input without change, by using a pair of MaxMin neurons with properly chosen bias values so that one neuron is always larger than the other:
\begin{equation*}
    (x_i,b)=\operatorname{MaxMin}(\ve_i^\mathrm{T}\vx,b)\quad \text{where }b\to -\infty.
\end{equation*}
Note that we waste the neuron that always outputs $b$. Eventually, we have constructed a MaxMin network that can output the sorting result (differing by up to a permutation) within $\mathcal O( \log_2 d)$ depth. Therefore, by adding one more layer that extracts the $k$-th order statistic, the network then exactly represents the $k$-th order statistic function.
\end{proof}

\subsection{Proof of Theorem \ref{thm:groupsort_boolean}}
\label{sec:proof_groupsort_boolean}
\begin{proposition}
\label{thm:groupsort_upper_bound_boolean}
An $M$-layer MaxMin network can exactly represent any (discrete) $d$-dimensional Boolean function if $M> d + \lceil \log_2 d\rceil$. 
\end{proposition}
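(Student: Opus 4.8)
The plan is to realize the canonical disjunctive normal form of $g^\text{B}$ directly as a MaxMin network, exploiting the fact that on Boolean-valued signals the MaxMin activation \emph{is} a pair of $2$-ary logical gates. Concretely, for any $a,b\in\{0,1\}$ one has $\max(a,b)=a\lor b$ and $\min(a,b)=a\land b$, so a single MaxMin unit simultaneously outputs a $2$-ary OR and a $2$-ary AND while keeping all signals in $\{0,1\}$. Negation is affine: $\lnot x_i=1-x_i$ is computed by the weight row $-\ve_i^{\mathrm T}$ together with bias $1$, and this row has $\ell_1$-norm $1$, so the constraint $\|\mathbf W\|_\infty\le 1$ is respected. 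The affine part of each layer may additionally permute, duplicate, or inject constant coordinates (weight rows that are $\pm\ve_i^{\mathrm T}$ or zero, with arbitrary bias), all of which keep $\|\mathbf W\|_\infty\le 1$. Thus every gate of a $2$-ary Boolean circuit can be simulated exactly on $\{0,1\}$-valued signals, and it remains only to bound the depth needed to simulate a circuit computing $g^\text{B}$.

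For the circuit I would take the canonical minterm DNF guaranteed by Fact \ref{fact:dnf}: writing $N=|\{\vx\in\{0,1\}^d:g^\text{B}(\vx)=1\}|\le 2^d$, express $g^\text{B}=\bigvee_{i=1}^N C_i$ where each minterm $C_i$ is a conjunction of exactly $d$ literals. Each $C_i$ is computed by a balanced binary AND-tree of depth $\lceil\log_2 d\rceil$, and since every minterm has the same length $d$, all $N$ of these trees finish at the same layer and can run in parallel (the width blows up, but the statement allows arbitrary width). Crucially, the literal computations $x_j\mapsto x_j$ and $x_j\mapsto 1-x_j$ are folded into the affine map of the \emph{first} AND-level, so no extra layer is spent on negation. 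The $N$ conjunction values are then combined by a balanced OR-tree of depth $\lceil\log_2 N\rceil\le d$, whose final output is read off by the identity-activated output layer. Counting activations, the network uses $\lceil\log_2 d\rceil+\lceil\log_2 N\rceil\le d+\lceil\log_2 d\rceil$ MaxMin layers; since an $M$-layer network carries $M-1$ MaxMin activations, this is realized with $M=d+\lceil\log_2 d\rceil+1>d+\lceil\log_2 d\rceil$, and any larger $M$ is reached by padding with identity layers.

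The remaining details are routine bookkeeping. When $d$ or $N$ is not a power of two the trees are not perfectly balanced; I would pad them using the identity-propagation trick of Theorem \ref{thm:groupsort_deep}, namely $\min(a,1)=a$ to carry a value unchanged through an AND-level and $\max(a,0)=a$ through an OR-level, the constants $0,1$ being supplied as biases. The degenerate case $g^\text{B}\equiv 0$ (the unsatisfiable DNF) is handled separately by outputting the constant $0$ through an affine layer and padding to depth $M$. I expect the only real subtlety to be the depth accounting---making sure the literal computation is absorbed into the first AND-level and that the ``$-1$'' from the identity output activation is tracked correctly, so that the total comes out to exactly $d+\lceil\log_2 d\rceil+1$ rather than one layer more; everything else follows from the exact correspondence between MaxMin on $\{0,1\}$ and the $2$-ary Boolean gates, together with the fact that all intermediate signals remain Boolean so no approximation error ever arises.
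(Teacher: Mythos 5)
Your proof is correct and follows essentially the same route as the paper's: write $g^\text{B}$ in DNF, simulate $2$-ary AND/OR gates with MaxMin units on Boolean-valued signals, and stack a $\lceil\log_2 d\rceil$-depth AND-tree for the conjunctions beneath a $\le d$-depth OR-tree for the disjunction, arriving at $M=d+\lceil\log_2 d\rceil+1$. You are in fact somewhat more careful than the paper about the bookkeeping (folding negations into the affine maps, padding unbalanced trees via $\min(a,1)=a$ and $\max(a,0)=a$, and the unsatisfiable case), but the underlying argument is identical.
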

\begin{proof}
First, it is easy to see that a pair of MaxMin neurons can represent any two-dimensional literal conjunctions/disjunctions, i.e. $r_i\land r_j$ and $r_i\lor x_j$, where $r_i$ is either $x_i$ or $\lnot x_i$. For example, $\max(x_i,x_j)=x_i\lor x_j$, $\min(x_i,x_j)=x_i\land x_j$, $\min(1-x_i,1-x_j)=\lnot x_i\land \lnot x_j$, etc. Next, using reduction, any $d$-dimensional literal conjunction/disjunction can be represented by a $(\lceil \log_2 d\rceil+1)$-layer MaxMin network (here $+1$ means that the last layer does not have MaxMin activation). Finally, recall that any Boolean function can be written in its DNF (Appendix \ref{sec:dnf}) and the number of disjunctions cannot exceed $2^d$ for any $d$-dimensional Boolean function, we have proved that a MaxMin network with depth $M= d + \lceil \log_2 d\rceil+1$ can represent any Boolean function.
\end{proof}

In the following, we will prove that the bound of $M=\mathcal O(d)$ is \emph{tight} in order to interpolate $d$-dimensional Boolean functions. To do this, we first define several notations.

\begin{definition}
Let $\mathcal D=\{(\vx^{(i)},y^{(i)})\in \mathbb R^d\times \{0,1\}\}_{i=1}^n$ be a dataset with real vectors as inputs and Boolean labels, and the inputs have a bounded diameter of 1 w.r.t. $\ell_\infty$-norm, i.e. $\|\vx^{(j)}-\vx^{(k)}\|_\infty\le 1$ $\forall j,k\in[n]$. 
\begin{itemize}[topsep=0pt,leftmargin=30pt]
\setlength{\itemsep}{0pt}
    \item Denote $l_i(\mathcal D)=\min_{j\in[n]} x_i^{(j)}$ and $u_i(\mathcal D)=\max_{j\in[n]} x_i^{(j)}$. We say the $i$-th input element is \emph{useful} if $u_i(\mathcal D)-l_i(\mathcal D)=1$; Correspondingly, the $i$-th input element is \emph{useless} if $u_i(\mathcal D)-l_i(\mathcal D)<1$.
    \item Given an index set $\mathcal I\subset [d]$, define the \emph{$\mathcal I$-masked} dataset $\mathcal D_{\mathcal I}^{\text{M}}$ to be the dataset obtained by zeroing out the $i$-th input element of each sample for each $i\in \mathcal I$. Specifically, $\mathcal D_{\mathcal I}^{\text{M}}:=\{(\tilde{\vx}^{(j)},y^{(j)})\}_{j=1}^n$ where
    \begin{equation*}
        \tilde{x}^{(j)}_i=\left\{\begin{array}{cc}
             x^{(j)}_i & i\notin \mathcal I, \\
             0 & i\in \mathcal I.
        \end{array}\right.
    \end{equation*}
    \item Given an index set $\mathcal I\subset [d]$, define the \emph{$\mathcal I$-thresholded} dataset $\mathcal D_{\mathcal I}^{\text{T}}$ to be any dataset satisfying that, for each $i\in \mathcal I$, change the $i$-th input element of all samples to the extreme values, either the minimum $l_i(\mathcal D)$ or the maximum $u_i(\mathcal D)$, but keep the original minimum and maximum elements unchanged. Specifically, $\mathcal D_{\mathcal I}^{\text{T}}:=\{(\tilde{\vx}^{(j)},y^{(j)})\}_{j=1}^n$ where
    \begin{equation*}
    \left\{\begin{array}{ll}
        \tilde{x}^{(j)}_i= x^{(j)}_i  & \text{if}\quad i\notin \mathcal I,\\
        \tilde{x}^{(j)}_i= x^{(j)}_i  & \text{if}\quad i\in \mathcal I,\ {x}^{(j)}_i= l_i(\mathcal D)\text{ or } x_i^{(i)}=u_i(\mathcal D),\\
        \tilde{x}^{(j)}_i\in \{l_i(\mathcal D),u_i(\mathcal D)\} & \text{if}\quad i\in \mathcal I,\  l_i(\mathcal D)<{x}^{(j)}_i< u_i(\mathcal D).
    \end{array}\right.
    \end{equation*}
    Clearly, the $\mathcal I$-thresholded dataset is not unique.
    \item We say a new dataset $\mathcal D^{\text{F}}$ is \emph{formatted} from dataset $\mathcal D$, if $\mathcal D^{\text{F}}$ can be obtained by \emph{masking} all the \emph{useless} elements in $\mathcal D$ and \emph{thresholding} all \emph{useful} elements in $\mathcal D$.
\end{itemize}

\end{definition}

\begin{lemma}
\label{thm:maxmin_sparse}
Let $\mathcal D=\{(\vx^{(i)},y^{(i)})\in \mathbb R^d\times \{0,1\}\}_{i=1}^n$ be a finite dataset with a bounded diameter of 1 w.r.t. $\ell_\infty$-norm. If $\mathcal D$ can be interpolated by a MaxMin network, then any dataset $\mathcal D^{\text{F}}$ formatted from $\mathcal D$ can be interpolated by a MaxMin network with the same structure (i.e. depth and width).
\end{lemma}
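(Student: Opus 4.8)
The plan is to reduce the statement to a single elementary modification of one coordinate and to prove that such a modification preserves interpolability by a network of identical architecture. First I would decompose the passage from $\mathcal D$ to a formatted dataset $\mathcal D^{\text{F}}$ into a finite sequence of elementary steps, each of which either masks one useless coordinate $i$ (sets the $i$-th entry of every sample to $0$) or thresholds one useful coordinate $i$ (moves the $i$-th entry of each interior sample to one of the extremes $l_i(\mathcal D),u_i(\mathcal D)$, with the choice allowed to depend on the sample). Because each step touches a single coordinate, and because thresholding leaves $l_i$ and $u_i$ unchanged while masking is applied only once per coordinate, the values $l_j(\cdot),u_j(\cdot)$ and hence the useful/useless status of every \emph{other} coordinate are preserved along the way; thus it suffices to prove that one elementary step preserves interpolability by a same-architecture MaxMin network, after which the result follows by composing the steps.

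The structural backbone I would use is a rigidity property forced by the hypotheses. Since the interpolating network $f$ is $1$-Lipschitz w.r.t.\ $\|\cdot\|_\infty$ and attains the Boolean labels exactly, for any two samples $\vx^{(j)},\vx^{(k)}$ with $y^{(j)}\neq y^{(k)}$ we have $1=|f(\vx^{(j)})-f(\vx^{(k)})|\le \|\vx^{(j)}-\vx^{(k)}\|_\infty\le 1$, so opposite-label samples lie at $\ell_\infty$-distance \emph{exactly} $1$. From this I would derive that neither elementary step can create a label conflict: a useless coordinate has range strictly below $1$, so two opposite-label samples can never agree on all remaining coordinates (otherwise their distance would be $<1$), and masking it therefore keeps the labeling single-valued; an analogous short computation shows that thresholding a useful coordinate can collapse two opposite-label samples only if they already sat at the two \emph{distinct} extremes of coordinate $i$, in which case they stay separated. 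These consistency facts remove the only obvious obstruction to interpolability of $\mathcal D^{\text{F}}$.

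With consistency in hand, I would construct the required network by modifying only the way the first MaxMin layer reads coordinate $i$, leaving the architecture intact: for masking, zero the $i$-th entry of every first-layer weight vector and absorb a fixed constant into the corresponding biases; for thresholding, realign the $i$-th read to the relevant extreme. One then propagates the modification through the network and checks that the value on each formatted sample is still exactly its label, arguing layer by layer (peeling the first layer and recursing on the induced dataset of activations, which again has diameter at most $1$, so the same rigidity is available at the next stage).

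The hard part will be upgrading approximate preservation to \emph{exact} preservation. A naive estimate only says that perturbing coordinate $i$ by less than $1$ moves the output by less than $1$ (by $1$-Lipschitzness), which is far too weak to keep the output pinned at $\{0,1\}$. The crux is to exploit the distance-exactly-$1$ rigidity: because separating any opposite-label pair must consume the entire Lipschitz budget, the network's dependence on the non-separating coordinate $i$ is forced to be flat on the linear pieces relevant to the data, so clamping or thresholding coordinate $i$ does not alter the outcome of any $\max$/$\min$ comparison that determines the exact output. Carrying this case analysis through every MaxMin unit, and propagating it correctly under the layer-peeling induction, is the main technical obstacle; the $1$-Lipschitz estimates underlying Lemma \ref{thm:non_expressive_groupsort} are the natural tool for controlling how the modification accumulates across layers.
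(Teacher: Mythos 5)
Your rigidity observation (opposite-label samples sit at $\ell_\infty$-distance exactly $1$) and your consistency checks are correct, and your skeleton — modify the first layer, then recurse on the dataset of activations — has the same shape as the paper's induction. But the step you yourself flag as "the hard part" is a genuine gap, and the mechanism you offer for it is unsound. The flatness claim is false: an interpolating network's $\max/\min$ comparisons \emph{can} hinge on a useless coordinate. Take $\mathcal D=\{((0,\tfrac12),0),\,((1,\tfrac12),1)\}$ and the two-layer MaxMin network $f(\vx)=\min(x_1,\,x_2+\tfrac12)$ (affine layer $(x_1,x_2+\tfrac12)$, MaxMin, last layer selecting the min component). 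It interpolates $\mathcal D$, coordinate $2$ is useless, yet on the masked dataset $f(1,0)=\min(1,\tfrac12)=\tfrac12\neq 1$: masking flips which argument attains the min, so the Lipschitz-budget argument does not pin the comparisons, and exactness is destroyed even though coordinate $2$ separates nothing.

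More fundamentally, your recursion does not close. Your inductive unit is a single masking/thresholding step on the input, but after the (partially re-weighted) first layer $\vh$, the induced change on the activation dataset is \emph{not} a masking or thresholding of $\vh(\mathcal D)$, so the inductive hypothesis cannot be invoked on the tail network. Concretely, suppose a first-layer neuron mixes a useful and a useless input, say $z_r=0.4\,x_1+0.6\,x_i$ with $x_1$ useful and $x_i$ useless. By H\"older and the diameter bound, $z_r$ has range $<1$ over the data, i.e.\ it is a \emph{useless} feature of $\vh(\mathcal D)$, so any formatting of $\vh(\mathcal D)$ must make it constant; but your modified neuron $\tilde z_r=0.4\,x_1+b_r$ still varies across samples (and there is no "fixed constant" $b_r$ that absorbs the sample-dependent term $0.6\,x_i^{(j)}$), so the modified activations are not a formatted version of $\vh(\mathcal D)$, and the induction tells you nothing about them. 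This is precisely where the paper's proof does something your plan lacks: using H\"older's inequality together with the distance-exactly-$1$ rigidity, it replaces \emph{every} first-layer weight vector at once — by the zero vector $\vzero$ if that neuron is a useless feature of $\vh(\mathcal D)$, or by a signed unit vector $\pm\ve_s$ supported on a useful input coordinate if it is useful — with a bias correction preserving interpolation of $\mathcal D$. Only after this sparsification does formatting commute with the layer map (formatted inputs are carried to a formatted activation dataset), which is what lets the induction hypothesis apply to the tail network; it is also the content of Corollary~\ref{thm:groupsort_weight_sparse}. Without this sparsification, or an equivalent device, neither your elementary-step decomposition nor your layer-peeling can be completed.
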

\begin{proof}
Our proof is based on induction over the network depth. For the base case, consider the network that performs the identify mapping: $f(x)=x$ where $x\in\mathbb R$ is a scalar. If $f$ interpolates the dataset $\mathcal D$, then $d=1$ and  $\vx^{(i)}\in\{0,1\}$ $\forall i\in [n]$. Therefore, $\mathcal D^{\text{F}}=\mathcal D$ and $f$ interpolates $\mathcal D^{\text{F}}$.

Now assume the conclusion holds for a MaxMin network $f:\mathbb R^{d_f}\to \mathbb R$. We would like to prove that it also holds for the MaxMin network $f\circ \vh$ where $\vh:\mathbb R^{d_h}\to \mathbb R^{d_f}$ is either the affine layer or the MaxMin activation layer. This will yield the conclusion, as any MaxMin network can be constructed by starting with the identity function and stacking affine layers and MaxMin activations (from the last layer to the first).

\textbf{Case 1}: $\vh:\mathbb R^{d_h}\to \mathbb R^{d_f}$ is an affine layer. We denote $h_i(\vx)=\vw_i^{\mathrm{T}}\vx+b_i$ to be the $i$-th element of $\vh$ with weight $\vw_i$ and bias $b_i$. Assume $f\circ \vh$ can interpolate dataset $\mathcal D$. We will construct a new MaxMin network $\tilde f\circ \tilde\vh$ such that $\tilde f$ has the same topology as $f$, $\tilde\vh:\mathbb R^{d_h}\to \mathbb R^{d_f}$ is an affine layer with weights $\tilde \vw_i$ and biases $\tilde b_i$ ($i\in[d_f])$, and $\tilde f\circ \tilde\vh$ interpolates $\mathcal D^{\text{F}}$. Note that $\|\vw_i\|_1\le 1$ and $\|\tilde \vw_i\|_1\le 1$ must hold ($\forall i\in [d_f]$).

Given dataset $\mathcal D$, denote $\vz^{(j)}=\vh(\vx^{(j)})$ $\forall j\in [n]$. Applying $\vh$ to $\mathcal D$ yields another dataset denoted as $\vh(\mathcal D):=\{(\vz^{(j)},y^{(j)})\}_{j=1}^n$, which also has a bounded diameter of 1 due to the 1-Lipschitz property of $\vh$. By the assumption, $f$ interpolates $\vh(\mathcal D)$. Therefore, one can use induction to prove that there exists $\tilde f$ that interpolates $(\vh(\mathcal D))^{\text{F}}$ where $(\vh(\mathcal D))^{\text{F}}$ is any dataset formatted from $\vh(\mathcal D)$. 

We now construct the weights and biases of $\tilde \vh$ as follows:
\begin{itemize}[topsep=0pt,leftmargin=30pt]
    \item If the $i$-th element of $\vh(\mathcal D)$ is useless (i.e. $u_i(\vh(\mathcal D))-l_i(\vh(\mathcal D))<1$), then set $\tilde \vw_i=\mathbf 0$ and $\tilde b_i=0$;
    \item If the $i$-th element of $\vh(\mathcal D)$ is useful, then there exists two samples $\vx^{(j)}$ and $\vx^{(k)}$ satisfying $|z_i^{(j)}-z_i^{(k)}|=1$. Therefore, $|\vw_i^{\mathrm{T}}(\vx^{(j)}-\vx^{(k)})|=1$. Note that $\|\vw_i\|_1\le 1$ and $\|\vx^{(j)}-\vx^{(k)}\|_\infty\le 1$. Thus by Hölder's inequality, it must be $\|\vx^{(j)}-\vx^{(k)}\|_\infty= 1$ and $[\vw_i]_s=0$ for all $s$ satisfying $|x_s^{(j)}-x_s^{(k)}|<1$. Pick any $s$ with $[\vw_i]_s\neq 0$ (thus $|x_s^{(j)}-x_s^{(k)}|=1$). Choose $\tilde\vw_i=\operatorname{sgn}([\vw_i]_s)\ve_s$ (the unit vector with the $s$-th element being 1 or -1 depending on the sign of $[\vw_i]_s$) and $\tilde b_i=(\vw_i-\tilde\vw_i)^{\mathrm{T}}\vx^{(j)}+b_i$.
\end{itemize}

Similarly, denote $\tilde \vz^{(j)}=\tilde \vh(\vx^{(j)})$ $\forall j\in [n]$ and the the corresponding dataset $\tilde\vh(\mathcal D):=\{(\tilde\vz^{(j)},y^{(j)})\}_{j=1}^n$ (which also has a bounded diameter of 1 due to the 1-Lipschitz property of $\tilde \vh$). First observe that if the $i$-th element is useless in $\vh(\mathcal D)$, then it is also useless in $\tilde\vh(\mathcal D)$. We now prove that the converse also holds: if the $i$-th element is useful in $\vh(\mathcal D)$, then it is also useful in $\tilde\vh(\mathcal D)$. Otherwise, there exists $\vx^{(j)}$ and $\vx^{(k)}$ such that $|\vw_i^{\mathrm{T}}(\vx^{(j)}-\vx^{(k)})|=1$ but $|\tilde\vw_i^{\mathrm{T}}(\vx^{(j)}-\vx^{(k)})|<1$. However, in this case, by construction $\tilde\vw_i=\pm\ve_s$ is the unit vector for some $s$, implying that $|x_s^{(j)}-x_s^{(k)}|<1$. Since $[\vw_i]_s\neq 0$, Hölder's inequality implies that $|\vw_i^{\mathrm{T}}(\vx^{(j)}-\vx^{(k)})|<1$, a contradiction.

Therefore, the $i$-th element is useful in $\vh(\mathcal D)$ if and only if it is useful in $\tilde\vh(\mathcal D)$. Now assume that the $i$-th element is both useful in $\vh(\mathcal D)$ and $\tilde\vh(\mathcal D)$. We next prove that $l_i(\vh(\mathcal D))=l_i(\tilde\vh(\mathcal D))$ and $u_i(\vh(\mathcal D))=u_i(\tilde\vh(\mathcal D))$. By the assignment of $\tilde b_i$, there exists an $\vx^{(j)}$ such that $\vw_i^{\mathrm{T}}\vx^{(j)}+b_i=\tilde\vw_i^{\mathrm{T}}\vx^{(j)}+\tilde b_i$, namely $z_i^{(j)}=\tilde z_i^{(j)}$. Since $z_i^{(j)}$ must at the extreme value, without loss of generality, assume $z_i^{(j)}=l_i(\vh(\mathcal D))$. Based on the above paragraph, for any $\vx^{(k)}$ such that $|\vw_i^{\mathrm{T}}(\vx^{(j)}-\vx^{(k)})|=1$, $|\tilde\vw_i^{\mathrm{T}}(\vx^{(j)}-\vx^{(k)})|=1$. Thus for any $z_i^{(k)}=u_i(\vh(\mathcal D))$, $|z_i^{(j)}-z_i^{(k)}|=|\tilde z_i^{(j)}-\tilde z_i^{(k)}|$. Also, $| z_i^{(k)}-\tilde z_i^{(k)}|<2$ because by construction $\|\tilde\vw-\vw\|_1<2$. This proves that $\tilde z_i^{(j)}=l_i(\tilde\vh(\mathcal D))$ and $\tilde z_i^{(k)}=u_i(\tilde\vh(\mathcal D))$, thus $l_i(\vh(\mathcal D))=l_i(\tilde\vh(\mathcal D))$ and $u_i(\vh(\mathcal D))=u_i(\tilde\vh(\mathcal D))$. Note that we have actually proved a stronger result: given samples $\vz^{(k)}$ and $\tilde \vz^{(k)}$, $z_i^{(k)}=l_i(\vh(\mathcal D))\iff \tilde z_i^{(k)}=l_i(\vh(\mathcal D))$, and $z_i^{(k)}=u_i(\vh(\mathcal D))\iff \tilde z_i^{(k)}=u_i(\vh(\mathcal D))$.

Combining the above results, one can conclude that for a dataset $(\vh(\mathcal D))^{\text{F}}$ formatted from $\vh(\mathcal D)$, it is also formatted from $\tilde\vh(\mathcal D)$. \hfill (*)

Now let $\tilde\vh(\mathcal D^{\text{F}})$ be the dataset obtained by applying $\tilde\vh$ to the formatted dataset $\mathcal D^{\text{F}}$. Note that the weights in $\tilde \vh$ are special in that all $\tilde \vw_i$ are either unit or zero vectors by construction. We will use this property to prove that $\tilde\vh(\mathcal D^{\text{F}})$ is just the dataset formatted from $\tilde\vh(\mathcal D)$. The key observation is that
\begin{itemize}[topsep=0pt,leftmargin=30pt]
\setlength{\itemsep}{0pt}
    \item If $[\tilde \vw_i]_s= 0$, then masking/thresholding the $s$-th element in $\mathcal D$ does not change the network output after applying $\tilde h_i$.
    \item If $[\tilde \vw_i]_s\neq 0$, then the $s$-th element must be useful in $\mathcal D$ (bacause the $i$-th element is useful in $\vh(\mathcal D)$ and $[\vw_i]_s\neq 0$). Since $\tilde \vw_i$ is unit ($[\tilde \vw_i]_s=\pm1$), it is easy to see that thresholding the $s$-th element in $\mathcal D$ does not change the minimum and maximum of the output $\tilde h_i$.
\end{itemize}
Using these results as well as the fact that each input element across $\tilde\vh(\mathcal D^{\text{F}})$ can have at most two different values (because the weight of $\tilde\vh$ is either unit or zero and $\mathcal D^{\text{F}}$ is formatted), we have that $\tilde\vh(\mathcal D^{\text{F}})$ is just the dataset formatted from $\tilde\vh(\mathcal D)$! Finally, by induction, $(\vh(\mathcal D))^{\text{F}}$ can be interpolated by $\tilde f$, which yields that $(\tilde \vh(\mathcal D))^{\text{F}}$ can be interpolated by $\tilde f$ based on (*). Therefore, $\tilde \vh(\mathcal D^{\text{F}})$ can be interpolated by $\tilde f$, namely $\tilde f\circ \tilde\vh$ interpolates $\mathcal D^{\text{F}}$, which concludes the proof of Case 1.

\textbf{Case 2}: $\vh:\mathbb R^{d_h}\to \mathbb R^{d_h}$ is the MaxMin layer. Here we consider the MaxMin layer with learnable bias, i.e. $\vh(\vx)=\operatorname{MaxMin}(\vx+\vb)$ with parameter $\vb\in \mathbb R^{d_h}$. Note that the bias will be absorbed into the previous linear layer so it does not change the resulting architecture, but it facilities the proof here. Assume $f\circ \vh$ can interpolate dataset $\mathcal D$. We will construct a new MaxMin network $\tilde f\circ \tilde \vh$ such that $\tilde f$ has the same topology as $f$, $\tilde \vh$ is the MaxMin layer with bias $\tilde \vb$, and $\tilde f\circ\tilde \vh$ interpolates $\mathcal D^{\text{F}}$. As in Case 1, we use the same notations of $\vh(\mathcal D)$, $\tilde\vh(\mathcal D)$, $(\vh(\mathcal D))^{\text{F}}$, $\vh(\mathcal D^{\text{F}})$, etc.

Given dataset $\mathcal D$, denote $\vz^{(j)}=\vh(\vx^{(j)})$ $\forall j\in [n]$. It follows that $z_{2i-1}^{(j)}=\max(x_{2i-1}^{(j)}+b_{2i-1},x_{2i}^{(j)}+b_{2i})$ and $z_{2i}^{(j)}=\min(x_{2i-1}^{(j)}+b_{2i-1},x_{2i}^{(j)}+b_{2i})$. We can separately focus on each pair. Consider the following three sub-cases:
\begin{itemize}[topsep=0pt,leftmargin=30pt]
\setlength{\itemsep}{0pt}
    \item Both the $(2i-1)$-th and the $2i$-th elements in $\vh(\mathcal D)$ are useful. In this sub-case, we can prove that both the $(2i-1)$-th and the $2i$-th elements in $\mathcal D$ are useful. This is because
    \begin{align*}
        2&=u_{2i-1}(\vh(\mathcal D))-l_{2i-1}(\vh(\mathcal D))+u_{2i}(\vh(\mathcal D))-l_{2i}(\vh(\mathcal D))\\
        &=\left(\max_{j\in[n]}z_{2i-1}^{(j)}+\max_{j\in[n]}z_{2i}^{(j)}\right)-\left(\min_{j\in[n]}z_{2i-1}^{(j)}+\min_{j\in[n]}z_{2i}^{(j)}\right)\\
        &\le \left(\max_{j\in[n]}x_{2i-1}^{(j)}+\max_{j\in[n]}x_{2i}^{(j)}+b_{2i-1}++b_{2i}\right)-\left(\min_{j\in[n]}x_{2i-1}^{(j)}+\min_{j\in[n]}x_{2i}^{(j)}+b_{2i-1}++b_{2i}\right)\\
        &=u_{2i-1}(\mathcal D)-l_{2i-1}(\mathcal D)+u_{2i}(\mathcal D)-l_{2i}(\mathcal D)\le 2.
    \end{align*}
    The equality holds when $u_{2i-1}(\mathcal D)-l_{2i-1}(\mathcal D)=1$ and $u_{2i}(\mathcal D)-l_{2i}(\mathcal D)=1$. Also, it is easy to see that thresholding the $(2i-1)$-th and the $2i$-th elements in $\mathcal D$ is equivalent to thresholding them in $\vh(\mathcal D)$. Formally, $l_{2i-1}(\vh(\mathcal D^{\text{F}}))=l_{2i-1}(\vh(\mathcal D))$ and $u_{2i-1}(\vh(\mathcal D^{\text{F}}))=u_{2i-1}(\vh(\mathcal D))$ (similar for $l_{2i}$ and $u_{2i}$). Therefore, we just set $\tilde b_{2i-1}=b_{2i-1}$ and $\tilde b_{2i}=b_{2i}$.
    \item Both the $(2i-1)$-th and the $2i$-th elements in $\vh(\mathcal D)$ are useless. This sub-case is trivial: these useless elements can be handled in $\tilde f$, because the first layer in $\tilde f$ is linear and the corresponding weights are zero for the useless input elements (see Case 1).
    \item One of the $(2i-1)$-th and the $2i$-th elements in $\vh(\mathcal D)$ is useful and the other is useless. Without loss of generality, assume the $(2i-1)$-th element in $\vh(\mathcal D)$ is useful and the $2i$-th element in $\vh(\mathcal D)$ is useless. Then we can prove that either ($l_{2i-1}(\vh(\mathcal D))=l_{2i-1}(\mathcal D)$ and $u_{2i-1}(\vh(\mathcal D))=u_{2i-1}(\mathcal D)$) or ($l_{2i-1}(\vh(\mathcal D))=l_{2i}(\mathcal D)$ and $u_{2i-1}(\vh(\mathcal D))=u_{2i}(\mathcal D)$). Otherwise, it is easy to see that both the $(2i-1)$-th and the $2i$-th elements in $\mathcal D$ are useless and thus the $(2i-1)$-th element in $\vh(\mathcal D)$ cannot be useful.
    
    Again without loss of generality, assume the $l_{2i-1}(\vh(\mathcal D))=l_{2i-1}(\mathcal D)$ and $u_{2i-1}(\vh(\mathcal D))=u_{2i-1}(\mathcal D)$. Then by assigning a sufficiently small bias $\tilde b_{2i}\to -\infty$ and setting $\tilde b_{2i-1}= b_{2i-1}$, one can ensure that even after thresholding/masking $\mathcal D$, the following relation still holds: $l_{2i-1}(\vh(\mathcal D^{\text{F}}))=l_{2i-1}(\mathcal D^{\text{F}})$ and $u_{2i-1}(\vh(\mathcal D^{\text{F}}))=u_{2i-1}(\mathcal D^{\text{F}})$. On the other hand, the $2i$-th element in $\vh(\mathcal D)$ is useless so it can be handled in $\tilde f$ by the corresponding zero weights in the first layer of $\tilde f$ (see Case 1).
\end{itemize}
Combining the three sub-cases, we have concluded the proof of Case 2.
\end{proof}

Note that for a Boolean dataset $\mathcal D$, the formatted version is itself, i.e. $\mathcal D^{\text{F}}=\mathcal D$. Thus the proof of Lemma \ref{thm:maxmin_sparse} directly leads to the following corollary:

\begin{corollary}
\label{thm:groupsort_weight_sparse}
Let $\mathcal D$ be a Boolean dataset that can be interpolated by a MaxMin network $f$. Then there exists a network $\tilde f$ with the same architecture (i.e. depth and width) as $f$, such that:
\begin{itemize}[topsep=0pt,leftmargin=30pt]
\setlength{\itemsep}{0pt}
    \item Denote $\mathbf W^{(l)}\in \mathbb R^{d_l\times d_{l-1}}$ as the weight matrix of the $l$-th layer of $\tilde f$. Then the following holds for all $i\in [d_l]$:
    \begin{equation}
        [\mathbf W^{(l)}]_{i,:}\in\{s\cdot\ve_r:s\in\{1,-1\},r\in[d_{l-1}]\}\cup\{\mathbf 0\}.
    \end{equation}
    \vspace{-15pt}
    \item Denote $f_i^{(l)}(\vx)$ as the $i$-th neuron output of the $l$-th layer given input $\vx$, and denote $\mathcal S_i^{(l)}=\{f_i^{(l)}(\vx):(\vx,y)\in \mathcal D\}$. Then either $|\mathcal S_i^{(l)}|=1$, or $|\mathcal S_i^{(l)}|=2$ and $\max \mathcal S_i^{(l)}  - \min \mathcal S_i^{(l)}=1$.
\end{itemize}
\end{corollary}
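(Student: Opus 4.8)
The plan is to derive the corollary directly from the \emph{constructive} proof of Lemma~\ref{thm:maxmin_sparse}, which already does all the real work; the corollary merely reads off two structural features of the network $\tilde f$ produced there. The first step is to verify that a Boolean dataset is its own formatted version. For each coordinate $i$, either all samples agree on $x_i$ (useless, since $u_i(\mathcal D)-l_i(\mathcal D)=0$) or both values $0$ and $1$ occur (useful, since then $u_i(\mathcal D)-l_i(\mathcal D)=1$ with the samples already at the extremes $\{0,1\}$). Thresholding a useful Boolean coordinate thus does nothing, and masking the constant coordinates is harmless because the construction below assigns them zero weight and never reads them; hence we may take $\mathcal D^{\text F}=\mathcal D$. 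Applying Lemma~\ref{thm:maxmin_sparse} then furnishes a MaxMin network $\tilde f$ of the same depth and width that interpolates $\mathcal D$ itself.

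Next I would extract the first bullet from Case~1 of the lemma's proof. There, whenever an affine layer is rebuilt, each weight row is set explicitly to $\mathbf 0$ (for a useless output coordinate) or to $\operatorname{sgn}([\vw_i]_s)\ve_s$, a signed standard basis vector supported on a single useful coordinate. The MaxMin layers of Case~2 only adjust biases, which are absorbed into the preceding affine layer and never alter a weight matrix. Consequently every row of each $\mathbf W^{(l)}$ of $\tilde f$ lies in $\{s\cdot\ve_r:s\in\{1,-1\},\,r\in[d_{l-1}]\}\cup\{\mathbf 0\}$, which is exactly the first claim.

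For the second bullet I would invoke the central invariant established in the lemma's proof --- the statement marked $(*)$ and the accompanying argument that applying a rebuilt layer to a formatted dataset again yields a formatted dataset. Starting from the input, which is formatted by the first step, and propagating this invariant layer by layer, the intermediate dataset produced at the output of layer $l$ of $\tilde f$ is formatted (and, because $\tilde f$ ignores the useless input coordinates, these outputs are the same whether evaluated on $\mathcal D$ or on $\mathcal D^{\text F}$). By the definition of a formatted dataset, coordinate $i$ is either useless --- taking a single value across the samples, so $|\mathcal S_i^{(l)}|=1$ --- or useful and thresholded to its extremes --- taking exactly the two values $l_i$ and $u_i$ with $u_i-l_i=1$, so $|\mathcal S_i^{(l)}|=2$ and $\max\mathcal S_i^{(l)}-\min\mathcal S_i^{(l)}=1$. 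This is precisely the second claim.

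The genuine difficulty --- maintaining both the sparse-weight form and the formatted-output property inductively across affine and MaxMin layers simultaneously --- is already absorbed into Lemma~\ref{thm:maxmin_sparse}, so no substantive obstacle remains. The only points needing care are checking that a Boolean dataset is self-formatted (in particular that constant coordinates cause no trouble) and reading the two structural properties verbatim off the construction; the main conceptual content, that the formatting invariant is exactly what forces the weights to be signed unit or zero vectors, is inherited wholesale from the lemma.
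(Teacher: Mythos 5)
Your proposal is correct and follows essentially the same route as the paper, which likewise observes that a Boolean dataset is (effectively) its own formatted version and then reads both bullet points directly off the constructive proof of Lemma~\ref{thm:maxmin_sparse}. In fact you are slightly more careful than the paper's two-sentence argument: you explicitly handle the constant-$1$ coordinates (which masking would technically alter) by noting the construction assigns them zero weight, so interpolation of $\mathcal D^{\text{F}}$ and of $\mathcal D$ coincide.
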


We are now ready for the proof of Theorem \ref{thm:groupsort_boolean}, which is restated below.

\begin{theorem}
Let $M_d$ be the minimum depth such that an $M_d$-layer MaxMin network can exactly represent any (discrete) $d$-dimensional Boolean function. Then $M_d=\Omega(d)$.
\end{theorem}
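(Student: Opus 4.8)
The plan is to use Corollary \ref{thm:groupsort_weight_sparse} to reduce an arbitrary MaxMin network that represents a Boolean function to a \emph{2-ary Boolean circuit}, and then invoke the classical Shannon counting bound on circuit complexity. First I would fix a Boolean function $g^\text{B}:\{0,1\}^d\to\{0,1\}$ and suppose it is exactly represented by some $M$-layer MaxMin network $f$. Taking the full Boolean dataset $\mathcal D=\{(\vx,g^\text{B}(\vx)):\vx\in\{0,1\}^d\}$ (whose $\ell_\infty$-diameter is $1$), Corollary \ref{thm:groupsort_weight_sparse} lets me assume without loss of generality that $f$ has the special form in which every weight row is a signed unit vector $s\cdot\ve_r$ with $s\in\{1,-1\}$ or the zero vector, and every neuron output over $\mathcal D$ takes at most two distinct values that differ by exactly $1$.

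Second, I would show that such a network computes $g^\text{B}$ by a circuit of $2$-ary Boolean gates. Under the sparsity guarantee each pre-activation is an affine function $\pm x_r+b$ of a single neuron, so by the corollary each neuron value lies in a two-point set $\{c,c+1\}$ that can be identified with a Boolean value after an affine relabeling. A MaxMin activation applied to a pair then realizes $\max=\text{OR}$ and $\min=\text{AND}$ on these Booleanized values, while a sign flip $s=-1$ together with the bias realizes logical NOT. The key fact that makes this bookkeeping work is precisely the corollary's guarantee $\max\mathcal S_i^{(l)}-\min\mathcal S_i^{(l)}=1$, which pins every node to a two-valued set of width $1$. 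Thus, up to an affine relabeling of $\{0,1\}$ at each node, $f$ is a $2$-ary Boolean circuit computing $g^\text{B}$.

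Third, I would bound the size of this circuit. Tracing the dependency graph backward from the single scalar output: the output depends, through a sparse readout, on one neuron of layer $M-1$, which is a $\max$ or $\min$ of a pair, each entry of which depends on a single neuron of layer $M-2$, and so on. At $j$ layers below the output there are at most $2^j$ relevant neurons, so the total number of gates is at most $\sum_{j=0}^{M}2^j=2^{M+1}-1$, the size of a complete binary tree. Finally I would invoke the Shannon lower bound (Shannon 1942): counting the distinct $2$-ary Boolean circuits with at most $S$ gates and comparing against the $2^{2^d}$ Boolean functions on $d$ variables shows that almost all such functions require $S=\Omega(2^d/d)$ gates. Since $M_d$ must suffice for \emph{every} $d$-variable Boolean function, in particular for such a hard one, we obtain $2^{M_d+1}-1\ge\Omega(2^d/d)$, whence $M_d\ge d-\log_2 d-O(1)=\Omega(d)$.

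The hard part is the second step: making the reduction to Boolean circuits fully rigorous, in particular verifying that the additive offsets accumulated through the biases across layers never destroy the OR/AND correspondence of the $\max$/$\min$ operations. Corollary \ref{thm:groupsort_weight_sparse} does most of the heavy lifting by confining every neuron to a two-valued set of width $1$; the remaining work is careful bookkeeping of which of the two values plays the role of Boolean $0$ and which plays the role of $1$ at each node, and checking that this assignment is consistent with the $\max$/$\min$/sign operations along every path to the output.
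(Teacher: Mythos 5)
Your proposal is correct and follows essentially the same route as the paper's proof: reduce via Corollary \ref{thm:groupsort_weight_sparse} to a network whose weight rows are signed unit vectors or zero, identify each MaxMin pair acting on two-valued neurons as a 2-ary AND/OR gate (with sign flips giving NOT), bound the number of gates reachable from the scalar output by $2^{M+1}-1$ via the complete-binary-tree argument, and invoke Shannon's $\Omega(2^d/d)$ counting lower bound to conclude $M_d=\Omega(d)$. The relabeling issue you flag as the hard part is resolved in the paper exactly as you anticipate: the corollary's second property forces the two arguments of each max/min to share the same extreme values over the dataset, so each node computes a genuine 2-ary OR/AND of literals up to a single additive constant.
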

\begin{proof}
Let $\tilde f$ be an $M_d$-layer MaxMin network that interpolates the $d$-dimensional Boolean function, satisfying the condition in Corollary \ref{thm:groupsort_weight_sparse}. Using the notations in Corollary \ref{thm:groupsort_weight_sparse}, one has
\begin{equation}
    f_{2i-1}^{(l)}(\vx)=\max\left( [\mathbf W^{(l)}]_{2i-1,:}^{\mathrm{T}}\vf^{(l-1)}(\vx)+b^{(l)}_{2i-1}, [\mathbf W^{(l)}]_{2i,:}^{\mathrm{T}}\vf^{(l-1)}(\vx)+b^{(l)}_{2i}\right)
\end{equation}
which must have one of the following forms due to the choice of weight $[\mathbf W^{(l)}]$:
\begin{itemize}[topsep=0pt,leftmargin=30pt]
\setlength{\itemsep}{0pt}
    \item $f_{2i-1}^{(l)}(\vx)=c$, where $c\in \mathbb R$ is a constant;
    \item $f_{2i-1}^{(l)}(\vx)=\pm f_r^{(l-1)}(\vx)+c$, where $r\in[d_{l-1}]$ and $c\in \mathbb R$ are constants;
    \item $f_{2i-1}^{(l)}(\vx)=\max\left(s_1\cdot f_{r_1}^{(l-1)}(\vx),s_2\cdot f_{r_2}^{(l-1)}(\vx)\right)+c$, where $s_1,s_2\in\{1,-1\}$, $r_1,r_2\in[d_{l-1}]$ and $c\in \mathbb R$ are constants.
\end{itemize}
In the last case, we further have $\max_{(\vx,y)\in\mathcal D}s_1\cdot f_{r_1}^{(l-1)}(\vx)=\max_{(\vx,y)\in\mathcal D}s_2\cdot f_{r_2}^{(l-1)}(\vx)$ and $\min_{(\vx,y)\in\mathcal D}s_1\cdot f_{r_1}^{(l-1)}(\vx)=\min_{(\vx,y)\in\mathcal D}s_2\cdot f_{r_2}^{(l-1)}(\vx)$ due to the second property in Corollary \ref{thm:groupsort_weight_sparse}. Therefore, $f_{2i-1}^{(l)}(\vx)$ can calculate a constant, the identity function, or the 2-ary logical OR of \emph{literals} (up to an additional bias). Similarly, $f_{2i}^{(l)}(\vx)$ can calculate a constant, the identity function, or the logical AND of \emph{literals}.

Therefore, when considering the whole MaxMin network, it is no powerful than a 2-ary Boolean circuit. Here a 2-ary Boolean circuit refers to a directional acyclic computation graph where each internal node connects to at most two nodes by incoming edges and can only calculate the 2-ary logical AND, logical OR, and (unary) logical NOT. Define the depth of a 2-ary Boolean circuit to be the length of the longest path from the input to the output. It then follows that the depth correponds to the depth of the MaxMin netwotk, and it suffices to prove that the depth of a 2-ary Boolean circuit must be $\Omega(d)$ to represent certain Boolean functions of $d$ variables.

Note that for a 2-ary Boolean circuit that has $M$ layers and outputs a scalar, the number of nodes will not exceed $2^{M+1}-1$, where the maximum size is achieved by a complete binary tree. However, the classic result in Boolean circuit theory (Shannon 1942) showed that for most Boolean functions of $d$ variables, a lower bound on the minimum size of 2-ary Boolean circuits is $\Omega(2^d/d)$ , which thus yields $M=\Omega(d)$ and concludes the proof.
\end{proof}



\section{Special SortNet}
\label{sec:special_sortnet}
\subsection{GroupSort Network}
\label{sec:special_sortnet_groupsort}
In this section, we formally prove that any GroupSort network can be exactly represented by a SortNet with the same topological structure. An $M$-layer GroupSort network with group size $G$ and hidden size $Gd$ can be defined as:
\begin{equation}
\label{eq:group_sort}
\begin{aligned}
    &\left\{
    \begin{array}{ll}
         \widetilde \vz^{(l)}=\widetilde{\mathbf W}^{(l)}\widetilde\vx^{(l-1)}+\widetilde\vb^{(l)} & l\in[M], \\
         \widetilde x^{(l)}_{iG+j}= \left(\{\widetilde z^{(l)}_{iG+t}\}_{t=1}^G\right)_{(j)} & l\in[M-1], i\in\{0,\cdots,d-1\}, j\in[G],
    \end{array}
    \right.\\
    &s.t. \ \|\widetilde{\mathbf W}^{(l)}\|_\infty\le 1 \text{ for } l\in [M].
\end{aligned}
\end{equation}
In (\ref{eq:group_sort}), $(\cdot)_{(j)}$ denotes the $j$-th largest element in a sequence. The GroupSort network takes $\widetilde\vx^{(0)}$ as input and outputs $\widetilde\vz^{(M)}$. We will construct a SortNet such that the output is equal to $\widetilde\vz^{(M)}$.

The key observation is that for any vector $\vz$ in a bounded domain $\mathbb K$, the GroupSort activation with arbitrary group size $G$ can be represented by a sort operation plus a bias term. Concretely, we prove
\begin{equation}
\label{eq:group_sort_bias}
    \operatorname{GroupSort}_G(\vz)=\operatorname{sort}(\vz+\vb)+\vc
\end{equation}
by assigning
\begin{align}
    b_{iG+j}=-iC,\quad c_{iG+j}=iC\quad \text{for}\ j\in [G],i\in \{0,\cdots,d-1\}
\end{align}
where $C$ is a sufficiently large positive constant. This is because a large $C$ dominates the relative order, so the sorting result is divided into groups of size $G$. Then $\vb+\vc=0$ ensures that the introduced bias is finally eliminated.

Equipped with (\ref{eq:group_sort_bias}), we are now ready to construct the SortNet. We use the notations in Definition \ref{def:sortnet}. Denote vectors $\va$ and $\vb$ as
\begin{align*}
\begin{array}{ll}
     a_{j}=-jC & \text{for}\ j\in [Gd], \\
     b_{iG+j}=-iC & \text{for}\ j\in [G],i\in [d].
\end{array}
\end{align*}
The parameters of SortNet are assigned below:
\begin{equation*}
\begin{array}{ll}
    \vw^{(l,k)}=[\widetilde{\mathbf W}^{(l)}]_{k,:}\  \text{(the }k\text{th row)} & k\in [Gd]\\
    \vb^{(1,k)}=\va & k\in [Gd]\\
    \vb^{(2,k)}=\vb+\widetilde \vb^{(1)}-\widetilde{\mathbf W}^{(1)}\va     &k\in [Gd]\\
    \vb^{(l,k)}=\vb+\widetilde \vb^{(l-1)}-\widetilde{\mathbf W}^{(l-1)}\vb     &k\in [Gd], 3\le l\le M\\
    \vb^{\text{out}}=\widetilde \vb^{(M)}-\widetilde{\mathbf W}^{(M)}\vb        & 
\end{array}
\end{equation*}
The activation is simply chosen to be the identity function $\sigma(x)=x$. This leads to the following calculation which can be proved straightforwardly:
\begin{align*}
    &\vx^{(1)}=\widetilde\vz^{(1)}-\widetilde\vb^{(1)}+\widetilde{\mathbf W}^{(1)}\va\\
    &\vx^{(l)}=\widetilde\vz^{(l)}-\widetilde\vb^{(l)}+\widetilde{\mathbf W}^{(l)}\vb \quad \text{for } 2\le l\le M
\end{align*}
Finally, the SortNet outputs $\vf(\vx)=\vx^{(M)}+\vb^{\text{out}}=\widetilde \vz^{(M)}$, which matches the output of the GroupSort network. Proof finishes.

\begin{remark}
In the above construction, the same value is assigned for all biases $\vb^{(l,k)}$ of each SortNet layer. It implies that SortNet is more powerful than GroupSort, since SortNet allows diverse biases for different neurons in the same layer. This is very useful because the value of bias plays an important role in controlling the sorting behavior. For example, in a SortNet neuron (\ref{eq:sortnet_neuron}), setting $b_i^{(l,k)}$ to a sufficiently large value will dominant $x_i^{(l-1)}$ to rank the first, which can extract the $i$-th coordinate of the input vector if $\vw^{(l,k)}=\ve_1$. Similarly, setting $b_i^{(l,k)}$ to a large negative value will cause $x_i^{(l-1)}$ to rank the last, and when combined with $w_i^{(l,k)}=0$, will preclude $x_i^{(l-1)}$ in the computation of the neuron output. However, for a GroupSort layer, performing these operations will affect the output of other neurons as all neurons in the group share the same bias vector and sorting operation. The interference between neurons may lead to some neurons being wasted and not being able to express the required operations, which is undesirable. In contrast, SortNet does not face the above problems.
\end{remark}

\subsection{$\ell_\infty$-distance Net}
\label{special_sortnet_ell_inf_net}
In this section we formally prove that any $\ell_\infty$-distance net can be exactly represented by a special SortNet with the same topological architecture. An $M$-layer $\ell_\infty$-distance net with a hidden size of $d$ can be formally defined as:
\begin{equation}
\label{eq:ell_inf_net}
    \widetilde x^{(l)}_k=\|\widetilde\vx^{(l-1)}-\widetilde \vw^{(l,k)}\|_\infty+\widetilde b^{(l)}_k,\quad  l\in[M], k\in[d].
\end{equation}
The network takes $\widetilde\vx^{(0)}$ as input and outputs $\widetilde\vx^{(M)}$. We will construct a SortNet such that the output exactly matches $\widetilde\vx^{(M)}$. 

The key observation is that for any input vector $\vz$ and parameter $\vw$,
\begin{equation}
    \|\vz- \vw\|_\infty=\max_i |z_i- w_i|=\ve_1^{\mathrm{T}}\operatorname{sort}(|\vz- \vw|)
\end{equation}
where $\ve_1$ is the unit vector with the first element being one. Based on the above equation, we now construct a SortNet as follows. We use the notations in Definition \ref{def:sortnet} and set all weights $\vw^{(l,k)}=\ve_1$, set biases
\begin{align*}
    &\vb^{(1,k)}=-\widetilde \vw^{(1,k)},\\
    &\vb^{(l,k)}=\widetilde \vb^{(l-1)}-\widetilde \vw^{(l,k)}\quad \text{for} \ 2\le l\le M,\\
    &\vb^{\text{out}}=\widetilde \vb^{(M)},
\end{align*}
and set the activation $\sigma(x)=|x|$. Then by a simple induction over layer $l$ we can prove
\begin{equation}
    \vx^{(l)}=\widetilde \vx^{(l)}-\widetilde \vb^{(l)}.\quad \forall\ l\in [M]
\end{equation}
Finally, the SortNet outputs $\vf(\vx)=\vx^{(M)}+\vb^{\text{out}}=\widetilde \vx^{(M)}$, which matches the output of the $\ell_\infty$-distance net. Proof finishes.

\section{An Efficient GPU Implementation of SortNet}
\label{sec:gpu_implementaion}
In this section we describe an efficient GPU implementation of Section \ref{sec:sortnet} for training and inference, which is used in this paper. Some basic tensor operations are described in the context of Pytorch framework.

\subsection{Training}
\label{sec:dropout_training}
Consider a fully-connected SortNet layer $\vf$ with parameters $\mathbf W,\mathbf B\in \mathbb R^{d_{\text{out}}\times d_{\text{in}}}$, and $W_{ij}=(1-\rho)\rho^{j-1}$. For an input batch $\mathbf X\in\mathbb R^{n\times d_{\text{in}}}$ consisting of $n$ samples, denote the output in a training iteration as $\mathbf Z=\vf(\mathbf X)\in\mathbb R^{n\times d_{\text{out}}}$ which is stochastic and depends on the mask. The generated stochastic mask is also a matrix denoted as $\mathbf S\in \{0,1\}^{n\times d_{\text{in}}}$. According to the formula (\ref{eq:expectation_dropout}), 
\begin{equation}
\label{eq:gpu_naive}
    Z_{ik}=\max_{j\in [d_{\text{in}}]} S_{ij}\sigma(X_{ij}+B_{kj})
\end{equation}
Note that $S_{ij}$ can only be 0 or 1, therefore the multiplication between $S_{ij}$ and $\sigma(X_{ij}-B_{kj})$ is unnecessary. We can enumerate the indices such that $S_{ij}=1$, resulting in the following calculation:
\begin{equation}
\label{eq:gpu_naive2}
    Z_{ik}=\max_{j,\ \text{s.t.}\ S_{ij}=1} \sigma(X_{ij}+B_{kj})
\end{equation}
With hyper-parameter $\rho$, (\ref{eq:gpu_naive2}) will give a computational complexity of $\Theta((1-\rho)nd_{\text{in}}d_{\text{out}})$.

In fact, the above computation can be further accelerated if two adjacent SortNet layers are cascaded. In this case, the output of the first layer becomes the input of the second layer, which will also be dropped out, so the unused neurons does not need to be computed. In other words, it suffices to only calculate the output neurons which are involved in the next layer of computation. This results in a computational complexity of $\Theta((1-\rho)^2 nd_{\text{in}}d_{\text{out}})$.

Things become more complicated when it turns to a GPU implementation. Since GPU is highly parallel, it is best to share the same instructions for different samples in a batch for ease of parallelization. However, the number of ones in the mask may vary for different samples. In other words, in (\ref{eq:gpu_naive2}) enumerating the indices $j$ subject to $S_{ij}=1$ cannot be efficiently executed in a parallel way.

To address the problem, we introduce a notion called sub-batch. In a sub-batch, all samples have the same mask for ease of parallelization, and different sub-batches have different masks. Typically, a sub-batch contains 32 samples (due to the GPU underlying architecture). A batch size of 512 then contains 16 sub-batches. For each sub-batch, we first generate two index sequences $\vs^{\text{in}}\in [d_{\text{in}}]^{d_1}$ and $\vs^{\text{out}}\in [d_{\text{out}}]^{d_2}$ (the input mask and output mask respectively). We then fetch the input neurons using mask $\vs^{\text{in}}$, which corresponds to a gather operation (e.g. \texttt{torch.gather} in Pytorch). Denote the gathered input as $\widetilde{\mathbf X}\in \mathbb R^{32\times d_1}$. Then the output $\widetilde{\mathbf Z}\in \mathbb R^{32\times d_2}$ of this sub-batch can be calculated by
\begin{equation}
    \widetilde{ Z}_{ik}=\max_{j\in [d_1]} \sigma(\widetilde{X}_{ij}+B_{s^{\text{out}}_k, s^{\text{in}}_j})\quad i\in[32],k\in[d_2].
\end{equation}

We finally point out that the introduced sub-batch is only used to accelerate GPU implementation. If one does not care about the speed, simply using (\ref{eq:gpu_naive2}) for training leads to almost the same test performance.

\subsection{Inference}
For inference, the sorting operations must be calculated exactly. To speed up the computation, note that the weight $\vw$ in Proposition \ref{thm:dropout} is exponentially decayed, implying that a lot of elements are close to zero. This leads to the following results:
\begin{proposition}
\label{thm:inference}
Let $\vw\in \mathbb R^d$ be a vector with $w_i=(1-\rho)\rho^{i-1}$. Then for any vector $\vx\in\mathbb R_+^d$ with non-negative elements and any integer $k>0$,
\begin{equation}
    0\le \vw^{\mathrm{T}}\operatorname{sort}(\vx)-\sum_{i=1}^k w_i x_{(i)}\le \rho^k\|\vx\|_\infty.
\end{equation}
\end{proposition}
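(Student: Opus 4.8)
The plan is to reduce both inequalities to a statement about the geometric tail of the weight vector. Since $\vx \in \mathbb{R}_+^d$, its order statistics satisfy $x_{(1)} \ge x_{(2)} \ge \cdots \ge x_{(d)} \ge 0$, and in particular $x_{(1)} = \max_i x_i = \|\vx\|_\infty$. Writing out $\vw^{\mathrm{T}}\operatorname{sort}(\vx) = \sum_{i=1}^d w_i x_{(i)}$ and subtracting the truncation $\sum_{i=1}^k w_i x_{(i)}$, the difference I must control is exactly the tail
\begin{equation*}
    \vw^{\mathrm{T}}\operatorname{sort}(\vx) - \sum_{i=1}^k w_i x_{(i)} = \sum_{i=k+1}^d w_i x_{(i)} = \sum_{i=k+1}^d (1-\rho)\rho^{i-1} x_{(i)}.
\end{equation*}
If $k \ge d$ the sum is empty and both inequalities hold trivially, so I may assume $k < d$.

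For the lower bound I would simply note that every summand is a product of the non-negative weight $w_i = (1-\rho)\rho^{i-1} \ge 0$ (using $0 \le \rho < 1$) and the non-negative order statistic $x_{(i)} \ge 0$, so the tail is non-negative. For the upper bound I would bound each order statistic by the largest one, $x_{(i)} \le x_{(1)} = \|\vx\|_\infty$, pull this factor out, and then sum the resulting geometric series, extending the finite tail to an infinite one:
\begin{equation*}
    \sum_{i=k+1}^d (1-\rho)\rho^{i-1} x_{(i)} \le \|\vx\|_\infty (1-\rho) \sum_{i=k+1}^d \rho^{i-1} \le \|\vx\|_\infty (1-\rho) \sum_{i=k+1}^\infty \rho^{i-1} = \|\vx\|_\infty \rho^k,
\end{equation*}
where the last equality uses $\sum_{i=k+1}^\infty \rho^{i-1} = \rho^k/(1-\rho)$. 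This yields precisely the claimed bound $\rho^k \|\vx\|_\infty$.

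There is essentially no hard part here: the result is a one-line consequence of the monotonicity of the sorted vector together with the closed form of a geometric sum. The only points deserving care are $(\mathrm{i})$ extending the \emph{finite} tail $\sum_{i=k+1}^{d}$ to the \emph{infinite} geometric series in order to obtain the clean factor $\rho^k/(1-\rho)$, which is legitimate because every discarded term $(1-\rho)\rho^{i-1}$ is non-negative, and $(\mathrm{ii})$ the boundary case $\rho = 0$, where $\rho^k = 0$ for $k \ge 1$ and the tail vanishes, so both inequalities degenerate to equalities and remain valid.
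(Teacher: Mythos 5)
Your proof is correct, and it is exactly the argument the paper has in mind: the paper states only that "the proof of Proposition \ref{thm:inference} is trivial" and omits it, the intended triviality being precisely your observation that the difference is the non-negative geometric tail $\sum_{i=k+1}^{d}(1-\rho)\rho^{i-1}x_{(i)}$, bounded above by $\|\vx\|_\infty(1-\rho)\sum_{i>k}\rho^{i-1}=\rho^k\|\vx\|_\infty$. Your handling of the edge cases ($k\ge d$ and $\rho=0$) is careful and consistent with the statement.
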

The proof of Proposition \ref{thm:inference} is trivial. Based on the above proposition, picking a $k$ much smaller than $d$ already suffices to control the error below the numerical precision. In this way, only a top-$k$ operation is needed. We choose $k=10$ for all experiments. Also note that zeroing out some elements of a weight vector does not contradict the Lipschitz property of the sort neuron, and thus margin-based certification still gives a correct certified accuracy.

\section{Training Details}
\label{sec:exp_details}
In this section we provide complete train details for readers to reproduce our results. Our experiments are implemented using the Pytorch framework.

\subsection{Datasets}
We consider four benchmark datasets: MNIST, CIFAR-10, TinyImageNet and ImageNet ($64\times 64$). The statistics of these datasets are listed below, including the number of classes, the dataset size, and the image size.
\begin{table}[ht]
    \centering
    \small
    \caption{Datasets used in this paper.}
    \vspace{2pt}
    \label{tab:dataset}
    \setlength\tabcolsep{4pt}
    \begin{tabular}{c|cccc}
    \hline
         Dataset & \# Classes & \begin{tabular}[c]{@{}c@{}}\# Images\\ (train)\end{tabular} & \begin{tabular}[c]{@{}c@{}}\# Images\\ (test)\end{tabular} & \begin{tabular}[c]{@{}c@{}}Image\\ size\end{tabular} \\ \hline
         MNIST & 10 & 60K & 10K & $28\times 28$\\
         CIFAR-10 & 10 & 50K & 10K & $32\times 32$\\
         TinyImageNet & 200 & 100K & 10K & $64\times 64$\\
         ImageNet & 1000 & 1.28M & 50K & $64\times 64$\\
    \hline
    \end{tabular}
    \vspace{-10pt}
\end{table}

For all datasets, we pre-process each training image using the same pipeline as previous works. Concretely, we use random horizontal flip (except for MNIST) and random crop as the data augmentation. Each image is padded to a larger size with $\texttt{padding\_mode=edge}$ and then randomly crop to the original size. The number of pixels padded is 1,3,5,4 for MNIST, CIFAR-10, TinyImageNet, and ImageNet, respectively. Finally, the images are normalized to have zero-mean and unit variance across the dataset. For testing, no data augmentation is performed.

\subsection{Models}
\label{sec:models}
This paper considers 3 types of models in total. The first is a simple fully-connected SortNet. The second is the combination of a Lipschitz SortNet backbone and a non-Lipschitz two-layer perceptron. In this way SortNet serves as a robust feature extractor and the top perceptron is used for classification. The last is a larger model similar to the second model, except we try to use a partially convolutional SortNet backbone for better performance. Details of these models are presented in Table \ref{tab:architecture}. For the ImageNet dataset, we increase the number of hidden neurons of the top perceptron to 2048, since the number of classes is 1000 (larger than the original hidden size which is only 512).

\begin{table}[ht]
    \centering
    \small
    \caption{Network architectures used in this paper. In the table, SortFC($n$) denotes a fully-connected SortNet layer with $n$ output neurons, and SortConv($n$, ker=$k$) denotes a convolutional SortNet layer with $n$ output channels and a kernel size of $k$. Padding is not used. We use s=$s$ to denote strided convolutions with step $s$, and by default s=1. FC($n$) is the fully-connected linear layer with $n$ output neurons. $K$ denotes the number of classes.}
    \vspace{2pt}
    \label{tab:architecture}
    \setlength\tabcolsep{2pt}
    \begin{tabular}{c|c|c}
    \Xhline{0.75pt}
    SortNet                                                             & SortNet+MLP                                                         & SortNet+MLP (2x)                                                    \\ \Xhline{0.6pt}
     \begin{tabular}[c]{@{}c@{}}SortFC(5120)\\ SortFC(5120)\\ SortFC(5120)\\ SortFC(5120)\\ SortFC(5120)\\ SortFC($K$)\end{tabular} & \begin{tabular}[c]{@{}c@{}}SortFC(5120)\\ SortFC(5120)\\ SortFC(5120)\\ SortFC(5120)\\ SortFC(5120)\\ FC(512, bias=True)\\ Tanh\\ FC($K$,  bias=True)\end{tabular} & \begin{tabular}[c]{@{}c@{}}SortConv(5120,ker=60,s=4)\\ SortConv(5120, ker=1)\\ SortConv(5120, ker=1)\\ SortConv(5120, ker=1)\\ SortConv(5120, ker=1)\\ FC(512, bias=True)\\ Tanh\\ FC($K$,  bias=True)\end{tabular} \\ \Xhline{0.75pt}
    \end{tabular}
\end{table}

We also list information of the model size in Table \ref{tab:model_size}. Roughly speaking, SortNet and SortNet+MLP have similar computational costs, and SortNet+MLP (2x) is about 4 times larger than SortNet+MLP. We also point out that the models used in this paper have exactly the same topology and number of parameters as $\ell_\infty$-distance Net (or $\ell_\infty$-distance Net+MLP) in \citep{zhang2021towards}, which enables a fair comparison between our approach and theirs.

\begin{table}[ht]
    \centering
    \small
    \caption{Statistics of model size on different datasets. In the table, ``FLOPs'' measures the number of floating-point operations in forward propagation, and ``\# Neurons'' denotes the number of hidden neurons. }
    \vspace{2pt}
    \label{tab:model_size}
    \setlength\tabcolsep{2pt}
    \begin{tabular}{c|c|ccc}
    \Xhline{0.75pt}
                                   & Model            & \begin{tabular}[c]{@{}c@{}}CIFAR\\ -10\end{tabular} & \begin{tabular}[c]{@{}c@{}}Tiny\\ ImageNet\end{tabular} & \begin{tabular}[c]{@{}c@{}}ImageNet\\ ($64\times 64$)\end{tabular} \\ \Xhline{0.6pt}
    \multirow{3}{*}{FLOPs}         & SortNet          & 121M    & --                                                      & --                                                                \\
                                   & SortNet+MLP      & 123M    & 171M                                                    & 180M                                                              \\
                                   & SortNet+MLP (2x) & --    & 651M                                                    & 684M                                                               \\ \hline
    \multirow{3}{*}{\# Parameters} & SortNet          & 121M     & --                                                      &  --                                                                \\
                                   & SortNet+MLP      & 123M     & 170M                                                    & 180M                                                               \\
                                   & SortNet+MLP (2x) & --     & 171M                                                    & 204M                                                              \\ \hline
    \multirow{3}{*}{\# Neurons}    & SortNet          & 25.6K    & --                                                     & --                                                                 \\
                                   & SortNet+MLP      & 26.1K    & 26.1K                                                    & 27.6K                                                               \\
                                   & SortNet+MLP (2x) & --    & 103K                                                   & 104K                                                               \\ \Xhline{0.75pt}
    \end{tabular}
\end{table}

\subsection{Loss function}
Let $(\vx,y)$ be a training sample and $\vf$ be the mapping represented by the network. For a SortNet model, since it is strictly 1-Lipschitz, we can directly apply hinge loss to increase the margin of output logits (see Proposition \ref{thm:lipschitz}). To make the training more effective, we adopt the following loss function:
\begin{equation}
\label{eq:loss}
\begin{aligned}
    \ell(\vf,\vx,y)=\lambda \cdot \ell_{\text{CE}}(s\cdot \vf(\vx),y) +\mathbb I\left(\mathop{\arg\max}_i [\vf(\vx)]_i=y\right) \ell_{\text{hinge}}(\vf(\vx)/\theta,y)
\end{aligned}
\end{equation}
where $\theta$ is the hinge threshold hyper-parameter, $s$ is a learnable scalar, $\lambda$ is the mixing hyper-parameter, $\mathbb I(\cdot)$ is the indicator function, and $\ell_{\text{CE}}$ and $\ell_{\text{hinge}}$ are the cross-entropy loss and standard hinge loss, respectively:
\begin{align}
    &\ell_{\text{CE}}(\vz,y)=\log\left(\sum_i\exp(z_i)\right)-z_y,\\ &\ell_{\text{hinge}}(\vz,y)=\max\{\max_{i\neq y} z_i-z_y+1, 0\}.
\end{align}
Intuitively speaking, this loss first uses cross entropy to improve the clean accuracy, then boost the certified robustness for samples that have been correctly classified. Such a loss design is quite common and is also used in prior works \citep{singla2022improved,zhang2022boosting,ding2020mma}. The coefficient $\lambda$ balances the two loss terms and controls the trade-off between accuracy and robustness. We find good results can be achieved by starting with an initial value $\lambda_0$ and slowly reducing it throughout training until it approaches zero (similar to \citep{zhang2020towards}).

Now consider the composite SortNet+MLP model. The whole model is non-Lipschitz, so we have to use relaxation-based methods to calculate a margin vector for the top MLP. We adopt the simplest method called \emph{interval bound propagation} (IBP) \citep{gowal2018effectiveness}, similar to \citep{zhang2021towards}. After obtaining the margin vector, we use the following loss function
\begin{equation}
\label{eq:loss_mlp}
\begin{aligned}
    \ell(\vf,\vx,y)=\lambda \cdot \ell_{\text{CE}}(s\cdot \vf(\vx),y) + \mathbb I\left(\mathop{\arg\max}_i [\vf(\vx)]_i=y\right) \ell_{\text{hinge}}({\text{IBP}}(\vf,\vx,y,\epsilon),y)
\end{aligned}
\end{equation}
which is almost the same as (\ref{eq:loss}). Here ${\text{IBP}}(\vf,\vx,y,\epsilon)$ denotes the margin calculated by interval bound propagation for sample $(\vx,y)$ and network $\vf$ under perturbation radius $\epsilon$. Similarly, the coefficient $\lambda$ decays from an initial value $\lambda_0$ to zero throughout training, and the value $\epsilon$ follows a warmup schedule in the training process. Such a strategy is broadly applied in previous literature \citep{gowal2018effectiveness,zhang2020towards,xu2020automatic}. See Appendix \ref{sec:hyper-paramters} for details of the $\lambda$ and $\epsilon$ schedule.

\subsection{Training strategy}
\textbf{Parameter initialization}. All parameters (in both the SortNet and the top MLP) are initialized using the standard Gaussian distribution. The learnable scalar in the loss function is initialized to be 1. To keep the output scale of the linear layer the same as the input scale, the output of each linear neuron is further multiplied by $1/\sqrt d$, where $d$ is the input dimension. Such an initialization strategy is often called the NTK initialization \citep{jacot2018neural}.

\textbf{Training pipeline}. At each iteration, we first generate masks in each \emph{hidden} layer and remove neurons with zero masks, resulting in a sub-network. This is similar to applying dropout in each hidden layer with a dropout rate of $\rho$ (we do not dropout the input). We then perform forward and backward propagation on the sub-network (corresponding to an $\ell_\infty$-distance net) following Section \ref{sec:dropout_training} and calculate the gradients of parameters. We use two additional tricks that is adopted in \citep{zhang2021towards}: batch normalization and $\ell_p$-relaxation.

\textbf{Batch normalization}. Since the output of the sort neuron (\ref{eq:sortnet_neuron}) is always non-negative due to the absolute-value activation function, a mean-shift version of batch normalization \citep{ioffe2015batch} is applied after each SortNet layers. Concretely, it normalizes each hidden neuron to have zero mean for a batch of samples in each iteration. We do not use scaling since it destroys the Lipschitz property. Unlike the original batch normalization, we do not calculate the running mean statistics in our training procedure since the model is also randomly sampled (with hidden neurons dropped out). The running mean can be calculated later after the training finishes, by conducting one pass of forward propagation on the whole training dataset for the inference model.

\textbf{$\ell_p$-relaxation}. The calculation of maximum in (\ref{eq:expectation_dropout}) leads to sparse gradient for input $\vx$, which makes optimization difficult. To alleviate the problem, we adopt $\ell_p$-relaxation proposed in \citep{zhang2021towards} for training, which gives a smooth approximation of the maximum operation. Concretely, for a vector $\vx$ with non-negative elements, the following holds:
\begin{equation}
    \max_i x_i = \lim_{p\to +\infty}\textstyle \left(\sum_i x_i^p\right)^{1/p}.
\end{equation}
At the initial phase, $p$ is set to a small value so that the gradients are non-sparse. As training progresses, $p$ gradually increases until reaching a large number. For the last few epochs, $p$ is set to infinity. In this paper, we exponentially increase $p$ from 8 to 1000 without tuning for all experiments, following \citep{zhang2021towards}.

\subsection{Details and hyper-parameters}
\label{sec:hyper-paramters}
All the experiments regarding SortNet models are run on NVIDIA Tesla V100S-PCIe GPUs. As for speed comparison to prior works, we further test time metrics for all prior works as well as our approach on the latest NVIDIA RTX-3090 GPUs (24GB memory). The CUDA version is 11.1.

\textbf{Hyper-parameters of the optimizer}. Following \citep{zhang2021towards,zhang2022boosting}, we adopt the Adam optimizer with hyper-parameters $\beta_1=0.9$, $\beta_2=0.99$ and $\epsilon=10^{-10}$, and use a batch size of 512 for all datasets and experiments. The learning rate is initialized to be 0.02 and is decayed using a simple cosine annealing throughout the whole training process. The only exception is for ImageNet, where we use 2 GPUs in parallel due to the large dataset size. This results in a total batch size of 1024, and we change the initial learning rate to be 0.01. In all experiments, the weight decay is set to 0.02 for all but SortNet parameters (e.g., for the top MLP), and we do not use weight decay for SortNet parameters. 

\textbf{$\epsilon$ Schedule}. For the SortNet+MLP model, a warmup over $\epsilon$ is required due to the use of interval bound propagation, and we adopt the same warmup schedule as in \citep{xu2020automatic} (with the same hyper-parameters), which increases $\epsilon_{\text{train}}$ first-exponentially-then-linearly from 0 to $1.1\epsilon_{\text{test}}$. 

\textbf{Epoch-related hyperparameters}. Such hyper-parameters are listed below, which depends on the dataset.
\begin{itemize}[topsep=0pt]
\setlength{\itemsep}{0pt}
    \item \textbf{MNIST}. We train all models for 1500 epochs. The $\ell_p$ schedule starts at the 100th epoch and ends at the 1450th epoch.
    \item \textbf{CIFAR-10}. We train all models for 3000 epochs. The $\ell_p$ schedule starts at the 200th epoch and ends at 2950th epoch. The warmup of $\epsilon$ is in the first 500 epochs.
    \item \textbf{TinyImageNet}. We train all models for 1000 epochs. The $\ell_p$ schedule starts at the 100th epoch and ends at the 980th epoch. The warmup of $\epsilon$ is in the first 500 epochs.
    \item \textbf{ImageNet ($64\times 64$)}. We train all models for 300 epochs. The $\ell_p$ schedule starts at the 50th epoch and ends at the 290th epoch. The warmup of $\epsilon$ is in the first 200 epochs.
\end{itemize}

\textbf{Hyper-parameters $\theta$, $\rho$ and $\lambda$}. For the SortNet model, there is a hinge threshold hyper-parameter $\theta$ that has to be tuned, and we choose $\theta$ using a course grid search. Concertely, we pick $\theta=0.6$ for $\epsilon=0.1$ and $\theta=0.9$ for $\epsilon=0.3$ on MNIST, and pick $\theta=16/255$ for $\epsilon=2/255$ and $\theta=48/255$ for $\epsilon=8/255$ on CIFAR-10. As for hyper-parameter $\rho$, we find $\rho=0.3$ typically works well on most settings, while a slight adjustment may achieve the best performance. On MNIST, we pick $\rho=0.3$ for $\epsilon=0.1$ and $\rho=0.25$ for $\epsilon=0.3$. On CIFAR-10, we pick $\rho=0.25$ for $\epsilon=2/255$ and $\rho=0.15$ for $\epsilon=8/255$ in SortNet, and pick $\rho=0.3$ for $\epsilon=2/255$ and $\rho=0.4$ for $\epsilon=8/255$ in SortNet+MLP. On Tiny-ImageNet and ImageNet $64\times 64$, we do not tune the hyper-parameter $\rho$ and simply pick the value 0.3. Sensitivity analysis of hyper-parameter $\rho$ is shown in Section \ref{sec:ablation}.

Finally, as for hyper-parameter $\lambda$ in loss (\ref{eq:loss}), we find choosing its value proportional to $1/\epsilon$ works well in all experiments. Such a choice is quite reasonable in balancing the cross-entropy term and the hinge term in the loss (\ref{eq:loss}) because the magnitude of the gradient of the hinge term $\ell_{\text{hinge}}(\vf(\vx)/\theta,y)$ is $\mathcal O(1/\theta)$ which is (roughly) inversely proportional to $\epsilon$. To make the value more regular, we just pick $\lambda_0$ from the set \{0.01, 0.02, 0.05, 0.1, 0.2, 0.5, 1.0\}, while maintaining the property of being inversely proportional to $\epsilon$. Throughout training, $\lambda$ is decayed from $\lambda_0$ to a vanishing small value $0.01\lambda_0$ exponentially, and is performed simultaneously with the $\ell_p$ relaxation process. The concrete values in different settings are listed below.
\begin{table}[ht]
    \centering
    \small
    \vspace{-5pt}
    \caption{The value of hyper-parameter $\lambda$ in loss (\ref{eq:loss}) used in this paper.}
    \label{tab:lambda}
    \vspace{2pt}
    \begin{tabular}{ccc}
    \hline
        Setting & $\lambda_0$ & $\lambda_{\text{end}}$\\ \hline
        MNIST ($\epsilon=0.1$) & 0.1 & 0.001 \\
        MNIST ($\epsilon=0.3$) & 0.02 & 0.0002\\
        CIFAR-10 ($\epsilon=2/255$) & 1.0 & 0.01 \\
        CIFAR-10 ($\epsilon=8/255$) & 0.2 & 0.002\\
    \hline
    \end{tabular}
    \vspace{-10pt}
\end{table}

As for $\lambda$ in loss (\ref{eq:loss_mlp}), we find the value do not depend on $\epsilon$, and in all settings we simply pick $\lambda_0=1.0$ and $\lambda_{\text{end}}=0.01$ without tuning.

\subsection{Reproducing baseline methods}
\label{sec:reproducing_baseline}
In this paper we also reproduce baseline methods by measuring the wall-clock time in training and certification. For all methods in Tables \ref{tab:results_mnist}, \ref{tab:results_cifar10} and \ref{tab:results_imagenet}, we test the computational cost on a single NVIDIA RTX-3090 GPU. We use the official Github codes and commands for each method whenever possible. In some situations, the batch size used in the original command may be too large to fit the 24GB GPU memory, so we reduce the batch size to match the memory capacity.
\begin{itemize}[topsep=0pt]
\setlength{\itemsep}{0pt}
    \item \textbf{IBP} \citep{gowal2018effectiveness}. We use the official code of \texttt{CROWN-IBP} on CIFAR-10 dataset since the original code may not be reproducible according to \citep{zhang2020towards}. Note that the corresponding numbers in Table \ref{tab:results_cifar10} is also obtained from the CROWN-IBP paper. We use the official code of \texttt{auto\_LiRPA} on TinyImageNet and ImageNet ($64\times 64$) datasets. The corresponding numbers of TinyImageNet dataset in Table \ref{tab:results_imagenet} are obtained from \citep{xu2020automatic}. Due to limited GPU memory, for TinyImageNet and ImageNet ($64\times 64$), we reduce the batch size to 64.
    \item \textbf{CROWN-IBP} \citep{zhang2020towards}. We use the official code of \texttt{CROWN-IBP} on CIFAR-10 dataset. Due to limited GPU memory, we reduce the batch size to 256. The training speed is measured in the $\epsilon$-warmup phase, and the certification speed is measured at $\epsilon=2/255$ where both IBP and linear relaxation is used in certification.
    \item \textbf{CROWN-IBP} \citep{xu2020automatic}. We use the official code of \texttt{auto\_LiRPA} on CIFAR-10, TinyImageNet, and ImageNet ($64\times 64$) datasets. Due to limited GPU memory, we reduce the batch size to 256 on CIFAR-10 and 32 on TinyImageNet and ImageNet. The training speed is measured in the $\epsilon$-warmup phase, and the certification is the simple IBP for these settings.
    \item \textbf{IBP} \citep{shi2021fast}. We use the official code of \texttt{Fast-Certified-Robust-Training}. Due to limited GPU memory, for TinyImageNet dataset we reduce the batch size to 64.
    \item \textbf{CAP} \citep{wong2018scaling}. We use the official code of \texttt{convex\_adversarial}. 
    \item \textbf{COLT} \citep{balunovic2020Adversarial}. We use the official code of \texttt{COLT}. The training has 4 stages, and the per-epoch training time (seconds) for each stage is 51, 240, 332, 385, respectively. Therefore the average time listed in Table \ref{tab:results_cifar10} is 252.0 seconds.
    \item \textbf{$\ell_\infty$-distance Net} \citep{zhang2021towards,zhang2022boosting}. We use the official code of \texttt{L\_inf-dist-net-v2}. The training speed is measured in the $\ell_p$-relaxation phase which is the slowest.
\end{itemize}
We also reproduce 100-PGD attack for the $\ell_\infty$-distance net in the original paper \citep{zhang2021towards}, because their paper only reported the robust accuracy under the (\emph{weaker}) 20-step PGD attack. Furthermore, we report two additional settings: $\epsilon=0.1$ on MNIST and $\epsilon=2/255$ on CIFAR-10 based on their Github repo \texttt{L\_inf-dist-net}, which are not presented in \citep{zhang2021towards}.

We also try to reproduce the result of $\ell_\infty$-distance net on TinyImageNet using the training approach proposed by \citep{zhang2022boosting}. We try our best to grid search over several hyper-parameters, including $\lambda_0$, $\lambda_{\text{end}}$ and $\theta$, while keeping other hyper-parameters the same as in \citep{zhang2022boosting}. The values are chosen from $\lambda_0\in \{0.05,0.1,0.2,0.5,1.0\}$, $\lambda_{\text{end}}\in \{0.0005,0.001,0.002,0.005,0.01\}$, and $\theta\in\{12/255,16/255,20/255,24/255\}$. We use 1000 training epochs, and the $\ell_p$-relaxation starts at the 20th epoch. Results are present in Table \ref{tab:results_imagenet}, where we can only achieve 11.04\% certified accuracy, which is much lower than other methods. We hypothesis that their approach may not suit for large-scale datasets with a huge number of classes.





\section{Ablation Studies}
\label{sec:ablation}
We finally make a further investigation of the performance of SortNet models by varying the value of $\rho$ and comparing it with $\ell_\infty$-distance net (corresponding to $\rho=0$), using the training approach proposed in Appendix \ref{sec:exp_details}.
Figure \ref{fig:result_for_rho} presents the performance of trained models, where we plot both clean accuracy and certified accuracy w.r.t. different choices of hyper-parameters  $\rho$ ranging from 0 to 0.5. It can be seen that for all four cases in Figure \ref{fig:result_for_rho}, a mid-range value consistently gives the best results. In particular, choosing $\rho=0.3$ typically attains the peak certified accuracy, and the improvement is most significant when comparing to the the extreme case of $\rho=0$ (often more than 5 points). Considering that the training in each figure is under the same configuration and hyper-parameters, the improvements justify that SortNet is a better Lipschitz model than $\ell_\infty$-distance net for certified $\ell_\infty$ robustness, due to the introduced full order statistics.

\begin{figure*}[ht]
    \centering
    \includegraphics[width=\textwidth]{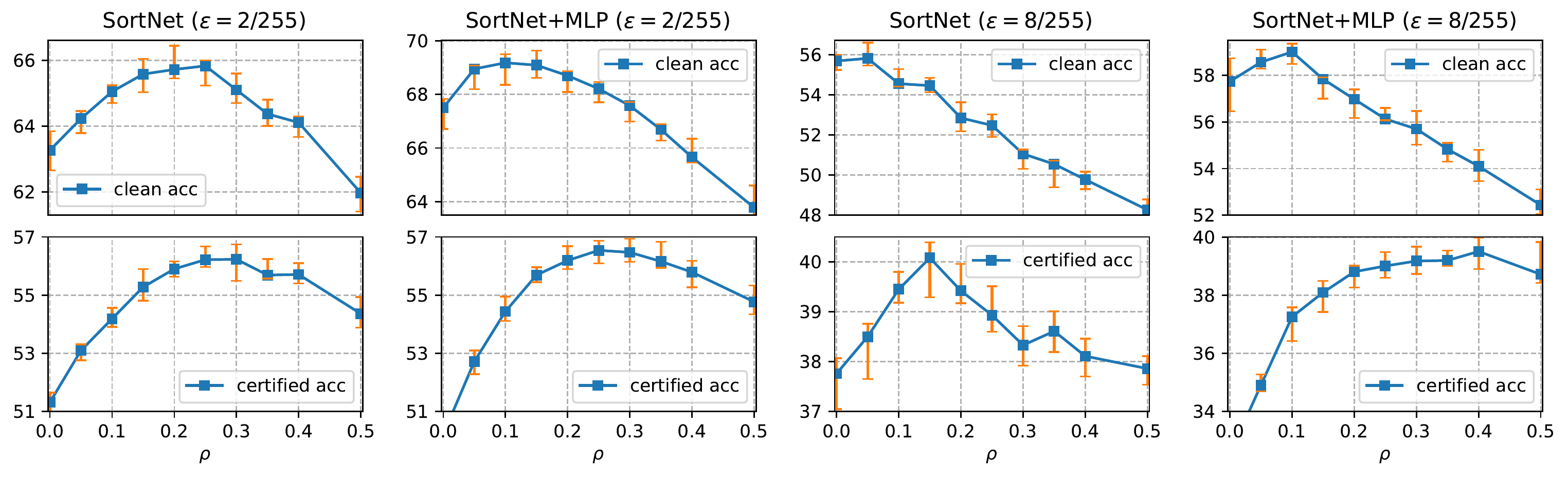}
    \caption{Performance of SortNet/SortNet+MLP trained with different hyper-paramters $\rho$ on CIFAR-10 dataset. For each $\rho$ we independently run 8 experiments and show the median value (blue square) as well as the best/worst performance (orange bar).}
    \label{fig:result_for_rho}
\end{figure*}

\section{Randomized Smoothing}
\label{sec:randomized_smoothing}
Different from other approaches in Table \ref{tab:results_cifar10}, randomized smoothing is a \emph{probabilistic} method that provides certified guarantees mainly for $\ell_2$ perturbations. To apply these methods in the $\ell_\infty$ perturbation case, a conversion of perturbation radius is performed by using norm inequalities. Specifically, to certify the robustness under $\ell_\infty$ perturbations with radius $\epsilon$, one can certify the robustness under $\ell_2$ with radius $\epsilon \sqrt d$ where $d$ is the input dimension. This clearly produces a lower bound estimate of the certified accuracy.

In the following table, we present the best-known results on MNIST and CIFAR-10. See \citep{salman2019provably,blum2020random,jeong2020consistency} as well as \citep[Appendix D]{zhang2022boosting} for references. In summary, randomized smoothing is competitive for the $\epsilon=2/255$ case on CIFAR-10. However, when $\epsilon$ is relatively large, randomized smoothing cannot achieve non-trivial performance.

\begin{table}[h]
\small
\caption{Results of randomized smoothing in different settings.}
\label{tab:results_randomized_smoothing}
\vspace{2pt}
\centering
\begin{tabular}{c|c|c|c|c}
\Xhline{0.75pt}
Dataset                                      & $\epsilon$ & Clean & Certified & Reference\\ \Xhline{0.6pt}
\multicolumn{1}{c|}{\multirow{2}{*}{MNIST}} & 0.1  & $\approx$92 & $\approx$12 & \citep{jeong2020consistency}\\
& 0.3  & -- & 10.0 & -- \\\hline
\multicolumn{1}{c|}{\multirow{2}{*}{CIFAR-10}} & 2/255  & 78.8 & 62.6 & \citep{blum2020random}\\
& 8/255  & 52.3 & 25.2 & \citep{jeong2020consistency} \\
\Xhline{0.75pt}
\end{tabular}
\end{table}



\begin{table}[t]
\centering
\small
\vspace{-5pt}
\setlength\tabcolsep{3pt}
\caption{Full results of clean accuracy over 8 independent runs, sorted in descending order.}
\label{tab:results_full_clean}
\begin{tabular}{c|c|c|cccccccc|c}
\Xhline{0.75pt}
Dataset                                                                           & $\epsilon$                 & Model           & \multicolumn{8}{c|}{Clean} & Median \\ \Xhline{0.6pt}
\multirow{2}{*}{MNIST}
  & 0.1 & SortNet         & \textbf{99.08} & 99.07 & 99.04 & 99.01 & 99.01 & 98.97 & 98.95 & 98.94 & 99.01\\
  & 0.3 & SortNet         & \textbf{98.68} & 98.67 & 98.59 & 98.59 & 98.59 & 98.57 & 98.53 & 98.46 & 98.59\\ \hline
\multirow{4}{*}{CIFAR-10}
  & \multirow{2}{*}{2/255}
        & SortNet         & \textbf{66.00} & 65.98 & 65.96 & 65.86 & 65.79 & 65.78 & 65.69 & 65.23 & 65.83 \\
  &     & SortNet+MLP     & \textbf{67.72} & 67.72 & 67.64 & 67.59 & 67.57 & 67.11 & 66.99 & 66.99 & 67.58 \\ \cline{2-12} 
  & \multirow{2}{*}{8/255}
        & SortNet         & \textbf{54.84} & 54.77 & 54.70 & 54.59 & 54.30 & 54.30 & 54.28 & 54.12 & 54.45 \\
  &     & SortNet+MLP     & \textbf{54.80} & 54.39 & 54.33 & 54.13 & 54.07 & 54.02 & 53.71 & 53.46 & 54.10 \\ \hline
\multirow{2}{*}{\begin{tabular}[c]{@{}c@{}}Tiny\\ ImageNet\end{tabular}}
  & \multirow{2}{*}{1/255}
        & SortNet+MLP     & \textbf{24.17} & 24.07 & 24.01 & 23.97 & 23.95 & 23.85 & 23.75 & 23.52 & 23.96 \\
  &     & SortNet+MLP(2x) & \textbf{25.77} & 25.69 & 25.55 & 25.43 & 25.33 & 25.30 & 25.26 & 25.02 & 25.38 \\ \hline
\multirow{2}{*}{\begin{tabular}[c]{@{}c@{}}ImageNet\\ $64\times 64$\end{tabular}}
  & \multirow{2}{*}{1/255}
        & SortNet+MLP     & \textbf{13.48} & 13.37 & 13.35 & 13.33 & 13.30 & 13.26 & 13.22 & 13.21 & 13.32 \\
  &     & SortNet+MLP(2x) & \textbf{14.96} & 14.87 & 14.86 & 14.84 & 14.82 & 14.79 & 14.76 & 14.71 & 14.83 \\ \Xhline{0.75pt}
\end{tabular}
\vspace{10pt}
\caption{Full results of certified accuracy over 8 independent runs, sorted in descending order.}
\label{tab:results_full_certified}
\begin{tabular}{c|c|c|cccccccc|c}
\Xhline{0.75pt}
Dataset                                                                           & $\epsilon$                 & Model           & \multicolumn{8}{c|}{Certified} & Median \\ \Xhline{0.6pt}
\multirow{2}{*}{MNIST}
  & 0.1 & SortNet         & \textbf{98.14} & 98.13 & 98.11 & 98.07 & 98.06 & 98.05 & 98.04 & 97.93 & 98.07 \\
  & 0.3 & SortNet         & \textbf{93.40} & 93.40 & 93.39 & 93.39 & 93.36 & 93.25 & 93.22 & 93.19 & 93.38 \\ \hline
\multirow{4}{*}{CIFAR-10}
  & \multirow{2}{*}{2/255}
        & SortNet         & \textbf{56.67} & 56.65 & 56.29 & 56.23 & 56.20 & 56.07 & 56.06 & 55.97 & 56.22 \\
  &     & SortNet+MLP     & \textbf{56.94} & 56.80 & 56.70 & 56.63 & 56.30 & 56.25 & 56.16 & 56.14 & 56.47 \\ \cline{2-12} 
  & \multirow{2}{*}{8/255}
        & SortNet         & \textbf{40.39} & 40.14 & 40.13 & 40.10 & 40.05 & 39.96 & 39.61 & 39.29 & 40.08 \\
  &     & SortNet+MLP     & \textbf{39.99} & 39.76 & 39.70 & 39.56 & 39.45 & 39.40 & 39.25 & 38.90 & 39.51 \\ \hline
\multirow{2}{*}{\begin{tabular}[c]{@{}c@{}}Tiny\\ ImageNet\end{tabular}}
  & \multirow{2}{*}{1/255}
        & SortNet+MLP     & \textbf{17.92} & 17.52 & 17.51 & 17.48 & 17.43 & 17.33 & 17.25 & 17.21 & 17.46 \\
  &     & SortNet+MLP(2x) & \textbf{18.18} & 17.74 & 17.68 & 17.64 & 17.64 & 17.62 & 17.59 & 17.50 & 17.64 \\ \hline
\multirow{2}{*}{\begin{tabular}[c]{@{}c@{}}ImageNet\\ $64\times 64$\end{tabular}}
  & \multirow{2}{*}{1/255}
        & SortNet+MLP     & \textbf{9.02} & 9.00 & 8.97 & 8.93 & 8.92 & 8.89 & 8.89 & 8.89 & 8.93 \\
  &     & SortNet+MLP(2x) & \textbf{9.54} & 9.45 & 9.43 & 9.41 & 9.41 & 9.41 & 9.35 & 9.35 & 9.41 \\ \Xhline{0.75pt}
\end{tabular}
\end{table}

\section{Full Results}
\label{sec:full_result}
As is pointed out above, for all SortNet models we run 8 set of experiments independently and report the median of the accuracy. In this section, we release full results of clean accuracy and certified accuracy in Tables \ref{tab:results_full_clean} and \ref{tab:results_full_certified}. The best achieved numbers are indicated in boldface.

\section{Training SortNet with Shorter Epochs}
In our main results, we use quite long training epochs in order to achieve the best performance. For example, the number of epochs is 3000 on CIFAR-10, which is larger than several prior works, in particular, the work of \citep{zhang2022boosting}. In this section, we consider reducing the training budget so that the total training time is less than \citep{zhang2022boosting}. A direct calculation shows that using 1800 epochs, the total training cost of SortNet will be less than \citep{zhang2022boosting}. So we simply change the number of epochs to 1800, \emph{without turning any other hyper-parameters}. Results show that SortNet can achieve 56.05\% certified accuracy for $\epsilon=2/255$, which is still significantly higher than \citep{zhang2022boosting} (54.12\% certified accuracy). For the $\epsilon=8/255$ case, SortNet can achieve 39.81\% certified accuracy, which is slightly lower than \citep{zhang2022boosting} (with a gap of -0.25\%). However, note that we do not use several training tricks, such as the special initialization strategy or the special weight decay adopted in \citep{zhang2022boosting,zhang2021towards}. On the other hand, we find that increasing the training budget of \citep{zhang2022boosting} to 3000 epochs does not help or even leads to a worse certified accuracy possibly due to overfitting.

\end{document}